\newif\ifdraft
\newtheorem{remark}{Remark}
\newtheorem{definition}{Definition}[section]
\newtheorem{lemma}{Lemma}
\newcommand{\calib}{\zeta}
\newcommand{\ocalib}{\calib^*}
\newcommand{\hcal}{\widetilde{h}}
\newcommand{\ohcal}{\hcal^{*}}
\newcommand{\ohcalplus}{\ohcal_\oplus}
\newcommand{\ohcalminus}{\ohcal_\ominus}
\newcommand{\hcrisp}{\hbar}
\newcommand{\ddef}{\stackrel{\;\text{def}}{=}\;}
\newcommand{\quant}{\rho}
\newcommand{\oquant}{\rho^*}
\newcommand{\acc}{\alpha}
\newcommand{\oacc}{\alpha^*}
\newcommand{\inspace}{\mathcal{X}}
\newcommand{\outspace}{\mathcal{Y}}
\newcommand{\trset}{D_{tr}}
\newcommand{\hset}{D_{h}}
\newcommand{\valset}{D_{val}}
\newcommand{\teset}{D_{te}}
\newcommand{\Dplus}{D_{\oplus}}
\newcommand{\Dminus}{D_{\ominus}}
\newcommand{\population}{D^\infty}
\newcommand{\sample}{\mathcal{S}}
\newcommand{\tpr}{\text{tpr}}
\newcommand{\fpr}{\text{fpr}}
\newcommand{\tprhat}{\hat{\tpr}}
\newcommand{\fprhat}{\hat{\fpr}}
\newcommand{\ycal}{\widetilde{y}}
\title{On the Interconnections of Calibration, Quantification, and Classifier Accuracy Prediction under Dataset Shift}
\author{Alejandro Moreo}
\author{
    Alejandro Moreo \\
    Istituto di Scienza e Tecnologie dell'Informazione,  
    Consiglio Nazionale delle Ricerche\\
    Via Giuseppe Moruzzi 1, 56124, Pisa, Italy\\
  \texttt{alejandro.moreo@isti.cnr.it} \\
}
\begin{document}

\maketitle

\begin{abstract}
When the distribution of the data used to train a classifier differs from that of the test data, i.e., under dataset shift, well-established routines for calibrating the decision scores of the classifier, estimating the proportion of positives in a test sample, or estimating the accuracy of the classifier, become particularly challenging.
This paper investigates the interconnections among  three fundamental problems, calibration, quantification, and classifier accuracy prediction, under dataset shift conditions. 
Specifically, we prove their equivalence through mutual reduction, i.e., we show that access to an oracle for any one of these tasks enables the resolution of the other two. Based on these proofs, we propose new methods for each problem based on direct adaptations of well-established methods borrowed from the other disciplines. Our results show such methods are often competitive, and sometimes even surpass the performance of dedicated approaches from each discipline.
The main goal of this paper is to fostering cross-fertilization among these research areas, encouraging the development of unified approaches and promoting synergies across the fields.
\end{abstract}

\keywords{
    dataset shift \and classifier calibration \and quantification \and classifier accuracy prediction
}

\section{Introduction}

Classifiers are often deployed in contexts in which the \textit{independent and identically distributed} (IID) assumption is violated, i.e., in which the data used to train the model and the future data to be classified are not drawn from the same distribution.
%in which the test data is drawn from a distribution which is different from the distribution that generated the training data.
%Real applicative contexts often defy the underlying IID assumption on which traditional machine learning models are trained and tested. 
%In situations like this, in which the data used to train the model and the data encountered after deployment are distributed differently, we speak about dataset shift. 
This situation is generally referred to as \textit{dataset shift} in the machine learning literature \cite{datasetShiftStorkey:2009}.

In this context, three problems have gained increased attention in the last years.
\textit{Classifier calibration} \cite{EncyclopediaCalibrationFlach2016,calibrationsurveyML2023} concerns the manipulation of the confidence scores produced by a classifier so that these 
%can be interpreted in terms of a posterior probability, i.e., so that they 
effectively reflect the likelihood that a given instance is positive. 
\textit{Quantification} \cite{SurveyQuantification:2017,Book2023} is instead concerned with estimating the  prevalence of the classes of interest in an unlabelled set.
Finally, \textit{classifier accuracy prediction} aims at inferring how well a classifier will fare on unseen data \cite{elsahar-galle-2019-annotate,Guillory:2021so}.

Well-established procedures for attaining these three goals when the IID assumption holds are known and routinely used.
For instance, calibrating the classifier's outputs can be attained by learning a calibration map (a function mapping classifier confidence scores into values reflecting the likelihood of the positive class) on held-out validation data  \cite{Platt:2000fk,empiricalbinning:zadrozny:2001,isotonic:barlow:1972}.\footnote{Notwithstanding this, calibration under IID conditions is still an active area of research; see, e.g., \cite{calibrating_sufficiently_Tasche2021}.} Concerning quantification, the class proportions in a test set can be estimated by simply classifying and counting how many positives fall under which class. Finally, the performance that a classifier will exhibit on unseen data can be estimated on held-out validation data \cite{Hastie2009}. 

However, when the IID assumption is violated, these standard routines are prone to failure.
For example, calibration is a property typically defined with respect to a specific distribution, meaning that a classifier is unlikely to remain well-calibrated across different distributions \cite{CalibrationDatasetShiftNeurIPS2019}. Similarly, the commonly used ``classify and count'' approach for class prevalence estimation is known to yield biased predictions under certain dataset shift conditions \cite[\S 1.2]{Book2023}. 
Likewise, the accuracy of a classifier estimated on held-out validation data or via cross-validation is biased when the IID assumption does not hold \cite{datasetShiftStorkey:2009}.

The three problems are deeply interconnected and may arise in related real-world applications. Consider, for example, a classifier trained on X-ray lung images to assist clinical decisions regarding medical interventions. In order for the classifier to aid in making informed decisions, the classifier must not only provide classifier decisions, but also a measure of uncertainty attached to it, i.e., its outputs must be well calibrated. Now, suppose a pandemic occurs, altering the natural prevalence of a pneumonic virus in the population. In response, several actions must be undertaken to develop a trustworthy response to the virus.
Estimating the prevalence of the population affected by the virus (for which any of the quantification methods discussed in \cite{SurveyQuantification:2017} can be employed) is of the utmost importance for epidemiologists \cite{patrone2024minimizing}; these prevalence estimates are useful to recalibrate the classifier outputs to reflect the new priors \cite{godau2025navigating} (using techniques such as those in \cite{Elkan:2001fk,guilbert2024calibration}). Alternatively, the practitioner might need to estimate the accuracy of different candidate classifiers under the new conditions in order to, e.g., replace the existing classifier with a different one that is expected to perform more accurately in the new distribution, or maybe to reweigh the relative importance of different classifiers in an ensemble (which necessitate methods for classifier accuracy prediction such as, e.g., \cite{Guillory:2021so,Garg:2022qv}).
A better understanding of the interdependencies between the three problems may lead to more robust and integrated solutions, especially in dynamic or high-stakes environments such as healthcare. 

However, the applicability of one technique or another is ultimately subject to verifying certain assumptions about the relationship between the underlying distributions.
In the context of dataset shift \cite{datasetShiftStorkey:2009}, the two most important such types of shift include \emph{covariate shift} (CS) and \emph{prior probability shift} (aka \emph{label shift} --LS).
CS presupposes a change in the marginal distribution of the observed features, while the posterior probabilities are assumed stationary. Conversely, LS has to do with a change in the class prevalence subject to stationary class-conditional distributions of the observed features. More formal definitions are provided later on.

% \comment{Podríamos esbozar un vínculo que no es nuevo, y es conocido, para motivar el artículo. El ejemplo sería la importancia de la calibración para la toma de decisiones informadas en ámbito médico\footnote{por ejemplo citando ``Dermatologist-level classification of skin cancer with deep neural networks''}, y un cambio en los priors (PPS) y el ajuste que se hace (ver la entrada de la enciclopedia) para recalibrar la salida (con cuantificación), y la selección del modelo con CAP.
% Otro ejemplo puede tomarse inspirado en ``Minimizing uncertainty in prevalence estimates'' de P.Patrone donde ya lo relacionan con epidemiologia}

%Quantification is a bit different in this respect, since the need for estimating class proportions arises precisely because these are assumed to vary over time (otherwise, the training prevalence itself is already a good unbiased estimator of the test prevalence and the problem is uninteresting). For this reason, quantification has arose under the

In this paper, we argue that the interplay between the three problems is deeper than it seems at first glance.
We begin by offering formal proofs of their equivalence via reduction, showing that having access to a perfect model (an oracle) for any of these problems will automatically enable the resolution of the other two tasks.
We then build on those formalizations to propose practical adaptations of methods from each discipline to address the other two. 
Somehow surprisingly, our results show that some of these adaptations tend to perform unexpectedly well in other problems.
The main contribution of this paper is to raise awareness of the close relationship between these problems, with the hope of fostering cross-fertilization among different fields and promoting unified solutions.

The rest of this paper is structured as follows. 
In Section~\ref{sec:related} we review some occasional interactions among calibration, quantification, and classifier accuracy prediction in the literature, as well as dedicated related work for each discipline.
Section~\ref{sec:preliminaries} is devoted to fix the notation and present technical definitions which are functional to the proofs of problems equivalence via reduction presented in Section~\ref{sec:reductions}.
Section~\ref{sec:methods} explores new methods that we propose for each of the three task, by adapting principles and techniques borrowed from the other tasks.
We report the experiments we have carried out in Section~\ref{sec:experiments}, in which we confront proper methods from each discipline with the newly proposed adaptations.
Section~\ref{sec:discussions} offers some final discussions and wraps up, while also suggesting potential ideas for future work.

%--------------------------------------------

\section{Related Work}
\label{sec:related}

Perhaps the most obvious link between the three problems pivots on classifier calibration. Despite the fact that the frequent co-occurrence of the term ``calibration'' with the term ``quantification'' in the literature often actually refers to ``uncertainty quantification'', and not to ``quantification'' in the sense of class prevalence estimation, it is a well-known fact in the quantification community that a properly calibrated classifier for the test distribution would automatically enable an accurate estimator of class prevalence \cite{Card:2018pb,tasche20144}. Conversely, the link between calibration and classifier accuracy prediction is embedded in the definition of the calibration property itself, according to which the predicted confidence of a perfectly calibrated classifier is a good approximation of
its actual \emph{probability of correctness} \cite{calibrationsurveyML2023,head2tail2023}. 

Beyond this, some sporadic interconnections among the problems have been echoed
recently in \cite{calibrateEstrapolateWu2024}, in which quantification and calibration take part in the so-called ``Calibrate-Extrapolate'' framework for applications in the social sciences; and in \cite{Volpi:2024ye}, in which quantification is used to support the task of classifier accuracy prediction.

The remainder of this section reviews the related literature on calibration (Section~\ref{sec:related:calibration}), quantification (Section~\ref{sec:related:quantification}), and classifier accuracy prediction (Section~\ref{sec:related:cap}).

\subsection{Calibration}
\label{sec:related:calibration}

The concept of model calibration first appeared in the field of meteorology \cite{murphy1972scalar}, and is nowadays used to refer to techniques for transforming the confidence scores generated by a classifier into estimates of posterior probabilities, i.e., into values reflecting the probability of a datapoint to belong to each class of interest \cite{EncyclopediaCalibrationFlach2016,calibrationsurveyML2023}. 
A perfectly calibrated classifier is thus a model that perfectly accounts for its own uncertainty. For example, if a perfectly calibrated binary classifier returns 0.95 as the posterior probability for the positive class, we know that the classifier is confident that the instance is positive, but there is still 5\% chance the decision is incorrect. Similarly, if the classifier returns 0.49, the classifier is also telling us that it is highly uncertain about the decision, suggesting that one should be very cautious if any critical decision is to be taken based on such prediction.

%A well-calibrated classifier is easier to adapt to a diffe
% The concept of calibration is also tied to modeluncertainty
% uncertainty of the model in believing that the datapoint belongs to a certain class.
% Calibration is generally understood as the techniques for  correcting the uncertainty of a classifier in such a way that, given an input datapoint, the output of the classifier represents (estimates of) the posterior probabilities which reliably reflect the likelihood of the instance to belong to each of the classes of interest. 

Calibration techniques may be divided in two categories: regularization-based and post-hoc calibration techniques.
Regularization-based techniques add a calibration-oriented factor to the loss the model optimizes at training time.
These techniques have proven useful for calibrating the output of modern neural networks, which are otherwise known to be poorly calibrated and typically overconfident \cite{Guo:2017hh}. Popular such techniques include temperature scaling \cite{hinton2015distilling}, vector scaling \cite{Guo:2017hh}, and Dirichlet calibration \cite{dirichletcalibration:kull2019}. %We will not focus in this modality of calibration in this paper.

On the other side, post-hoc calibration techniques use held-out validation data 
%or cross-validation 
to learn a calibration map that transforms the confidence scores of a previously trained model. A calibration map can be learned using non-parametric techniques such as empirical binning \cite{empiricalbinning:zadrozny:2001} or isotonic calibration %(also known as \emph{isotonic regression}) 
\cite{isotonic:barlow:1972}, or using parametric ones such as Platt scaling \cite{Platt:2000fk}, which relies on the logistic function, or more recent methods relying on more flexible parametric functions such as the beta calibration \cite{kull2017beyond}. In this paper, we will hereafter focus on post-hoc calibration techniques.

In high-stakes domains with non-stationary conditions, having well-calibrated outputs is of crucial importance, since these can be easily adapted to changes in the class prior or changes in the cost distribution \cite{calibrationsurveyML2023}. 
Unfortunately, the post-hoc calibration techniques discussed above have been found to fall short of producing reliable estimates of classifier uncertainty in the presence of dataset shift \cite{CalibrationDatasetShiftNeurIPS2019,karandikar2021soft}.
As a response, several post-hoc methods have recently been proposed in the literature to cope with the problem of dataset shift \cite{cpcs2020icml,transcal2020,head2tail2023,popordanoska2024lascal}. All these methods operate under the same conditions, i.e., that all available labelled examples come from the training (or ``source'') distribution, while all examples from the test (or ``target'') distribution are devoid of labels.

\cite{cpcs2020icml} proposes Calibrated Prediction with Covariate Shift (CPCS), a calibration method that minimizes an upper bound on the expected calibration error (ECE), which is derived following assumptions from CS. The method adopts principles from domain adaptation, and relies on importance weighting to correct for the shift between the training and test distributions, respectively. The importance weights take the form of a ratio $\frac{Q(x)}{P(x)}$, where $P$ and $Q$ are the density functions of the training and test distributions. As such weights are unknown, the authors propose to estimate them using a ``source vs. target'' discriminator, i.e., by taking the posterior probability of a logistic regression model trained to identify whether an instance comes from the test distribution (acting as the positive class) or instead from the training one (acting as the negative class), in a surrogate binary classification problem.

Later on, a method called Transferable Calibration (TransCal) \cite{transcal2020} was proposed in response to a phenomenon observed under CS assumptions, in which the improvement in classifier accuracy through domain adaptation techniques comes at the expense of ECE in the target distribution. TransCal is a hyperparameter-free method that minimizes a new calibration measure called Importance Weighted Expected Calibration Error (IWECE) which manages to estimate the calibration error in the target domain under CS assumptions with lower bias and variance than CPCS.

The Head2Tails calibration model \cite{head2tail2023} was later proposed as a means to counter the distribution shift that occurs when the classifier is trained in a close-to-balanced setting, but the data to be classified in deployment conditions instead follow a long-tailed class distribution, where few classes (the ``heads'') gather most of the density whereas many classes (the ``tails'') are severely imbalanced. 
Classifiers trained this way tend to be overconfident for the ``head'' classes.
The method estimates the importance weights of datapoints in the ``tail'' classes to better calibrate their outputs by transferring knowledge from the ``heads''.
Somehow surprisingly, though, the authors describe such a setting as an instance of CS. While it is clear that a shift in the class proportions results in a shift of the marginal distributions of the covariates, we believe such a setting is more representative of LS instead, since only the priors of the classes have changed. Later on, two closely related calibration methods (called ``polynomial positive'' and ``exponential'') are presented in \cite{guilbert2024calibration} in the context of imbalanced binary classification problems. The main goal is to improve the calibrated outcome for the minority class (which represents the interesting phenomenon, such as a rare illness) on the grounds that no action is typically undertaken for the majority class examples (e.g., on healthy individuals).

Recently, the method LasCal \cite{popordanoska2024lascal} has been proposed to better calibrate classifiers under LS. The method adopts a temperature scaling strategy, and optimizes for the temperature parameter by minimizing a pointwise consistent ECE that roots on the LS assumptions.

\subsection{Quantification}
\label{sec:related:quantification}

Quantification was proposed by \cite{Forman:2005fk} as the machine learning task of estimating the class prevalence values in unlabelled sets; see \cite{SurveyQuantification:2017,Book2023} for overviews.
For this reason, most of the quantification methods proposed so far %(e.g., \cite{Bella:2010kx,Barranquero:2015fr,Perez-Gallego:2019vl,KDEyMoreo:2025}) 
have mainly focused on scenarios characteristic of LS, i.e., in cases in which the priors are expected to vary with respect to the training conditions; see \cite{Schumacher2021,Lequa2022Overview,Lequa2024Overview} for evaluation campaigns. 
Prototypical applications of quantification span areas such as epidemiology \cite{King:2008fk}, social sciences \cite{Hopkins:2010fk}, ecological modeling \cite{Beijbom:2015yg}, market research \cite{Esuli:2010kx}, and sentiment analysis \cite{Moreo:2022bf}.
Conversely, little attention in the quantification literature has been paid to CS, aside from some theoretical analysis \cite{Tasche:2022hh,Tasche:2023um} and few experimental evaluations \cite{Gonzalez:2024cs,Lequa2024Overview}.

The IID approach to quantification comes down to training a classifier using supervised examples from the training distribution, and use it to issue label predictions for the test examples; the class prevalence values are simply obtained by counting the fraction of instances that fall under each class. This method, called ``classify and count'' (CC) is a biased estimator under LS conditions \cite[\S 1.2]{Book2023}, and represents a weak baseline any proper quantification method is expected to beat.
However, when the fractions are instead counted as the expected value of a well-calibrated probabilistic classifier, i.e., when using the ``probabilistic classify and count'' method (PCC) \cite{Bella:2010kx}, the resulting estimates are known to be accurate under CS assumptions \cite{Card:2018pb,Tasche:2022hh}.

Arguably, the method ``adjusted classify and count'' (ACC) \cite{Forman:2008kx} is the simplest way to correct for the bias of CC under LS. ACC applies a linear correction to the CC estimate (denoted by $\hat{p}^{\text{CC}}$), by taking into account an estimate of the true positive rate (tpr) 
%($\text{tpr}_h$) 
and false positive rate (fpr) %($\text{fpr}_h$) 
of the classifier.
Under LS conditions, the tpr and fpr are assumed stationary, so they can be estimated in hold-out validation data from the training distribution (see also Remark~\ref{remark:pps} in Section~\ref{sec:definitions}).
%$h$ as follows:
%
% \begin{equation}
%     \hat{p}^{\text{ACC}}=\frac{\hat{p}^{\text{CC}}-\hat{\text{fpr}}_h}{\hat{\text{tpr}}_h-\hat{\text{fpr}}_h}
% \end{equation}
%
%\noindent where $\text{tpr}_h=P(\hat{Y}=1|Y=1)$ and $\text{fpr}_h=P(\hat{Y}=1|Y=0)$ are estimated on hold-oud validation data from the training distribution, on the grounds that such estimates are stationary across distributions under LS conditions. 
The method was later rediscovered and popularized in the machine learning community under the name of Black-Box Shift Estimator (BBSE) by \cite{Lipton:2018fj}. The probabilistic counterpart, dubbed ``probabilistic adjusted classify and count'' (PACC) \cite{Bella:2010kx}, consists of replacing the crisp count obtained via CC
%$\hat{p}^{\text{CC}}$ 
with the expected soft count  
%$\hat{p}^{\text{PCC}}$, 
obtained via PCC,
and also using the estimates of the posterior probabilities returned by the probabilistic classifier (instead of the crisp ones) to obtain estimates for the tpr and fpr.
%$\text{fpr}_h$ and $\text{fpr}_h$.

One of the most important families of approaches in the quantification literature is the so-called distribution matching (DM) \cite{Forman:2005fk}.
DM approaches seek for the mixture parameter (the sought class prevalence values) that yields a mixture of the class-conditional distributions of the training datapoints which is closest to the distribution of the test datapoints, in terms of any given divergence measure. Different methods have been proposed in the literature relying on different mechanisms for representing the distributions, including histograms of posteriors \cite{Gonzalez-Castro:2013fk,Perez-Mon:2025jt} or Fourier transformations of the features \cite{Dussap:2023gr}, 
%or kernel density estimates of the posteriors \cite{KDEyMoreo:2025}, 
and different divergence measures, including the Hellinger Distance \cite{Gonzalez-Castro:2013fk} or the Topsøe divergence \cite{Maletzke:2019qd}, 
%or the Kullback-Leibler divergence \cite{KDEyMoreo:2025}, 
among others. 
One DM-based method which has demonstrated superior performance in recent evaluations is KDEy~\cite{KDEyMoreo:2025}, in which distributions are represented by means of kernel density estimates of the posteriors, and in which the Kullback-Leibler divergence is adopted as the divergence measure to minimize. 

Some efforts have been recently paid trying to unify the most important quantification methods under a common DM-based framework \cite{Firat:2016uq,Garg:2020jt,Bunse:2022ky,Castano:2023mb}.
While the above-discussed methods can be framed as specific instances of these frameworks, one notable exception is the Expectation-Maximization method proposed by \cite{Saerens:2002uq}, typically referred to as EMQ (or SLD after the name of its proponents \cite{Esuli:2021le}) in the quantification literature. This method precedes the field of quantification itself, and was indeed proposed for a different scope: updating the posterior probabilities of the test datapoints, as estimated by a probabilistic classifier that has been calibrated for the training distribution, to a target distribution in which the priors have changed. The method relies on a mutually recursive update of the priors and the posteriors, following the EM algorithm, until convergence. 
While the method was originally proposed as a means for recalibrating the outputs of a probabilistic classifier with respect to a new prior under LS, it has become very popular in quantification endeavours, in which we rather take, as the output of the algorithm, the last updated priors found by the method.
This method is known to behave extremely well, and several enhancements have been proposed in the machine learning literature, including further recalibration rounds \cite{Alexandari:2020dn} and regularized variants thereof \cite{Azizzadenesheli:2019qf}.

\subsection{Classifier Accuracy Prediction}
\label{sec:related:cap}

The standard method for estimating the classifier accuracy on future data under the IID assumption is cross-validation \cite{Hastie2009}. 
Recently, some methods have been proposed to operate under dataset shift conditions, such as the \emph{Reverse Classification Accuracy} (RCA) \cite{elsahar-galle-2019-annotate}, which bases its prediction on the validation accuracy of another classifier trained on test data labelled by the target classifier; \textit{Mandoline}~\cite{Chen:2021qe}, relying on importance weighting and density estimation; or \cite{Chen:2021jt}, which relies instead on the agreement ratio of an ensemble of auxiliary models which is updated iteratively; \textit{Generalisation Disagreement Equality}
(GDE)~\cite{jiang2022assessing}, which analyzes the degree of disagreement on IID data of identical neural architectures as a proxy of their accuracy on shifted data. 

In this paper, we will take a closer look at three recent methods that have showcased superior performance in recent years: ATC \cite{Garg:2022qv}, DoC \cite{Guillory:2021so} and LEAP \cite{Volpi:2024ye}. 

ATC, standing for \textit{Average Thresholded Confidence} \cite{Garg:2022qv}, searches for the threshold value $t$ on the confidence value of a classifier that yields, on validation data, a proportion of datapoints with confidence higher than it equals  the proportion of datapoints correctly classified (i.e., equals the classifier accuracy score on the validation data). The authors poposed to compute the confidence value in terms of maximum confidence or negative entropy.

% The \textit{Average Thresholded Confidence} (ATC) method of
% \citet{Garg:2022qv} learns a threshold $t$ on the confidence (viewed
% in terms of posterior probabilities) that the classifier has in a
% classification decision, such that datapoints classified with
% confidence higher than $t$ are assumed to be correctly classified.
% The authors suggest two functions for computing this score, based,
% respectively, on maximum confidence and negative entropy. Threshold
% $t$ can be learned on the validation set as the value such that the
% portion of datapoints classified with confidence higher than it
% corresponds to the portion of correctly classified datapoints.
% \strikeout{estimates model accuracy by evaluating the expected
% proportion of test datapoints for which the confidence score of the
% model exceeds a threshold learned on the training set. The authors
% propose two variants for computing such scores, ATC-MC (that relies
% on maximum confidence) and ATC-NE (that instead relies on negative
% entropy).}

The method DoC, for \textit{Difference of Confidence}
\cite{Guillory:2021so}, learns a regression model to predict the model performance on the test set. The regressor is trained on a measure of distance between the in-distribution data and the out-of-distribution data. Such measure reflects the accuracy gap, and is computed in terms of the average max confidence over a series of validation samples $V_i$ which are generated with the aim of reflecting the type of dataset shift to be faced. %The underlying principle is that the expected value the model confidence coincides with its accuracy if the model is well-calibrated and there is no shift; the regressor is thus used to learn a correction from the average max confidence value to the accuracy in a shifted distribution.
The underlying principle is that, if the model is well-calibrated and there is no shift, the expected value of the model’s confidence must coincide with its accuracy. The regressor is thus used to learn a correction from the average maximum confidence value to the actual accuracy under a shifted distribution.

Finally, LEAP (Linear Equation -based Accuracy Prediction) \cite{Volpi:2024ye} models the problem as a system of linear equations with $n^2$ unknowns, with $n$ the number of classes. The equations are derived following the LS assumptions and relies, for their specification, on a quantification method (KDEy, see Section~\ref{sec:related:quantification}) as an intermediate step. A key difference of this method, is that it predicts the $n^2$ entries of a contingency table, and is thus not tied to one specific evaluation measure. While LEAP has proven to fare well under LS, it has never been tested under CS.

% \newpage
\section{Preliminaries}
\label{sec:preliminaries}

This section introduces notation and definitions that will be instrumental for the following sections.

\subsection{Notation}
\label{sec:notation}

Let $\inspace\subset\mathbb{R}^d$ be the input space of the covariates for some $d\in\mathbb{N}$, and $\outspace$ be the output space of class labels. We will restrict our attention to the binary case, in which case we define $\outspace=\{0,1\}$ with 0 denoting the negative class, and 1 denoting the positive class. By $\mathcal{S}$ we denote the space of samples of elements from $\inspace$.

We define a binary classifier as $h: \inspace \rightarrow \mathbb{R}$, i.e., as a function $h\in\mathcal{H}$, where $\mathcal{H}$ denotes the class of hypotheses, returning raw (uncalibrated) confidence scores. From $h$ we can obtain a crisp classifier $\hcrisp: \inspace \rightarrow \outspace$ via thresholding by simply defining $\hcrisp(x)=1$ if $h(x)>t$, and $\hcrisp(x)=0$ otherwise,
for some threshold value $t\in\mathbb{R}$. We use $\widetilde{h}:\mathcal{X}\rightarrow[0,1]$ to denote a probabilistic classifier, i.e., a classifier whose outputs are interpretable as posterior probabilities.

We assume the classifier $h$ has been generated using some training data $\trset=\{(x_i, y_i)\}_{i=1}^m$, with $x_i\in\inspace, y_i\in\outspace$. We also assume the existence of a test set $\teset=\{x_i\}_{i=1}^{n}$ for which we do not observe the true labels. We write $\Phi(x)$ to denote the true label of a test datapoint $x$. We assume a dataset shift scenario in which the training and test data have been drawn from two distributions $P$ and $Q$, respectively, with $P\neq Q$. 
We use the random variables $X$, $Y$ taking values, respectively, on $\mathcal{X}$, $\mathcal{Y}$. We will also use $\hat{Y}=\hcrisp(X)$ and $\widetilde{Y}=\hcal(X)$ to denote the random variables of the classifier's crisp and soft predictions, ranging on $\{0,1\}$ and $[0,1]$, respectively.

%\comment{The role of the training distribution is not clear, or seems not to be essential, in the proofs to come. Same thing for the type of dataset shift.}

%Clarify $\Phi(x)$, the oracle that returns the true label of $x$. Do we prefer defining the test set as labelled and then assume labels are unobserved or directly unlabelled? All oracles we will define make use of $\Phi$ one way or another. Say that we assume that obtaining a perfect classifier $h^*(x)=\Phi(x)$ is not possible? This would be necessary since one could construct it from an oracle of another problem (e.g., from a quantifier it is trivial).

% ----------------------------------------------- 

\subsection{Definitions}
\label{sec:definitions}

\begin{definition} Covariate shift  (CS) \cite{datasetShiftStorkey:2009} is the type of dataset shift in which the marginal distributions of the covariates are assumed to change, while the labelling function is assumed stationary, i.e.:
\begin{align*}
    P(X) &\neq Q(X) \\
    P(Y|X) &= Q(Y|X)
\end{align*}
This type of shift is characteristic of $X\rightarrow Y$ problems \cite{Fawcett:2005fk}, i.e., of \emph{causal learning} where we are interested in predicting effects from causes. In problems affected by CS, it is often convenient to factorize the join distribution as $P(X,Y)=P(Y|X)P(X)$.
\end{definition}

\begin{definition} Prior probability shift \cite{datasetShiftStorkey:2009}, variously called label shift \cite{Lipton:2018fj} (LS) or class-prior change \cite{du-Plessis:2012nr}, is the type of dataset shift in which the class proportions are assumed to change, while the class-conditional densities of the covariates are assumed stationary, i.e.:
\begin{align*}
    P(Y) &\neq Q(Y) \\
    P(X|Y) &= Q(X|Y)
\end{align*}
This type of shift is characteristic of $Y\rightarrow X$ problems \cite{Fawcett:2005fk}, i.e., of \emph{anti-causal learning} where we are interested in predicting causes from effects. In problems affected by LS, it is often convenient to factorize the join distribution as $P(X,Y)=P(X|Y)P(Y)$.
\end{definition}

\begin{remark}
\label{remark:pps}
Given any deterministic measurable function $f:\mathcal{X}\rightarrow\mathbb{R}$, the LS assumption $P(X|Y)=Q(X|Y)$ implies $P(f(X)|Y)=Q(f(X)|Y)$ \cite[Lemma 1]{Lipton:2018fj}. This has important implications when we take $f=\hcrisp$ (our crisp classifier), as this implies that the class-conditional distribution of the classifier predictions is stationary, i.e.,  $P(\hat{Y}|Y)=Q(\hat{Y}|Y)$. Similarly, if we take $f=\hcal$ (our probabilistic classifier) it holds that $P(\tilde{Y}|Y)=Q(\tilde{Y}|Y)$
\end{remark}

\begin{definition}
\label{def:calibration}
A \textit{calibrator} $\calib: \mathcal{H}\times \sample\rightarrow\mathcal{H}$ is a model that takes as input an uncalibrated classifier $h\in\mathcal{H}$ and a reference unlabelled set $D\in\sample$, and generates a classifier $\hcal$ which is calibrated with respect to $D$. A perfect calibrator $\ocalib$ is such that the classifier $\ohcal = \ocalib(h,D)$ it returns is perfectly calibrated with respect to $D$, which in turn means for all outcomes $\ycal\in[0,1]$ of $\ohcal$ the following condition holds true:
\begin{equation}
\label{eq:hcal}
    \ycal=\frac{|\{x\in D : \Phi(x)=1, \ohcal(x)=\ycal \}|}{|\{x\in D : \ohcal(x)=\ycal \}|}
\end{equation}
\noindent that is, the value $\ycal$ corresponds to the true proportion of positives among all elements for which the classifier assigns the same value.

Asymptotically, i.e., that when the sample $D$ approximates the entire population $\population$, the calibrated classifier $\ohcal=\ocalib(h,\population)$ is such that  for all $\ycal=\ohcal(x)$: %\comment{parece que es más común $v=Q(Y=1|\ohcal(X)=v)$}
\begin{equation}
\label{eq:ycalib:popul}
    %\ohcal(x) = Q(Y=1|\widetilde{Y}=\ohcal(x))
    \ycal = Q(Y=1|\widetilde{Y}=\ycal)
\end{equation}

% \comment{We must clarify why the reference set is unlabelled, since traditional calibration methods use labelled data.}

%\comment{It would be important to clarify what exactly we mean by ``perfectly calibrated'' in our case. There are several ways in which Equation \ref{eq:hcal} can be satisfied (one, for example, is by returning the sample prior for every datapoint, irrespective of X. A more in-deep discussion should revolve around the decomposition of a measure like Brier Score, $\S3.1$ in the survey. But I am not sure I can write this right now. It is equally important to state that calibrated does not mean accurate.}

% Another definition is
% \begin{equation}
%     v=Q(Y=1|\ohcal(X)=v)
% \end{equation}

\end{definition}

\begin{remark} Note that the notion of calibration is not necessarily tied to classifier accuracy \cite{calibrationsurveyML2023}; in other words, a perfectly calibrated classifier is not equivalent to a perfect classifier. For example, if a classifier is perfectly calibrated and returns the posterior probability $\ohcal(x)=0.60$ for some $x$, this means that predicting $x$ belongs to the positive class (e.g.,  by thresholding at $t=0.5$) has 40\% chance of being mistaken.
\end{remark}

\begin{remark} Note also that our definition of a calibrator function does not align with the more traditional one, in which the calibrator function is given access to the class labels of the datapoints in the reference set. Conversely, our notion of calibration aligns well with more recent work on calibration in the context of dataset shift (see, e.g., \cite{transcal2020,popordanoska2024lascal}).
\end{remark}

% ----------------------------------------------- 
\begin{definition}
\label{def:quantification}
A binary \textit{quantifier} is a model 
%$\quant: \mathbb{N}^\inspace \rightarrow [0,1]$ 
$\quant: \mathcal{S} \rightarrow [0,1]$ 
that predicts the fraction of positive instances in a sample. A perfect quantifier $\oquant$ is such that:
\begin{equation}
\label{eq:oquant}
    \oquant(D)=\frac{|\{x\in D : \Phi(x)=1\}|}{|D|}
\end{equation}
With respect to the population, the perfect quantifier returns the prior of the positive class in the underlying distribution, i.e.:
\begin{equation}
\oquant(\population)=Q(Y=1)    
\end{equation}

% Note that, when the sample $D$ corresponds to the entire population $\population$, then
% %
% \begin{equation}
%     \oquant(\population)=Q(Y=1)
% \end{equation}
\end{definition}

% ----------------------------------------------- 

\begin{definition}
\label{def:cap}
A predictor of classifier accuracy is a model 
%$\acc:\mathcal{H}\times \inspace^n\rightarrow [0,1]$ 
$\acc:\mathcal{H}\times \mathcal{S}\rightarrow [0,1]$ 
that takes as input a classifier $h\in\mathcal{H}$ and a reference set $D\in\mathcal{S}$ and predicts the accuracy $h$ will have on $D$. A perfect predictor of classifier accuracy satisfies:
\begin{equation}
\label{eq:oacc}
    \oacc(h,D)=\frac{|\{x\in D : \Phi(x)=\hcrisp(x)\}|}{|D|}
\end{equation}
At the population level, the predictor $\oacc$ returns the probability that the predictions of $h$ are correct in the underlying distribution, i.e.:
\begin{equation}
    \oacc(h,\population)=Q(Y=\hat{Y})
\end{equation}
\end{definition}

\begin{remark}
While the term \textit{accuracy} is generally used to denote any evaluation measure of classifier performance, our definition is bounded to the case in which such measure is taken to be \textit{vanilla accuracy}, i.e., the fraction of correctly classified instances.
\end{remark}

% ----------------------------------------------- 

% \newpage
\section{Problem Equivalences via Reduction}
\label{sec:reductions}

%\comment{Check the ``sample level'' vs. ``population level'' analysis (seems this terminology is correct though); e.g., ``o remove finite sample effects, we consider a data set of infinite size and hence the asymptotic expressions of error rate and balanced error rate'' }

In this section, we focus our attention to the binary case, and show that the three problems are equivalent via reduction. That is, we show that a perfect model for one of the problems would enable a perfect solution for the other two. 
In what follows, we cover the cases in which we assume access to a perfect calibrator $\ocalib$ (Section~\ref{sec:accesstocal}), access to a perfect quantifier $\oquant$ (Section~\ref{sec:accesstoquant}), and access to a perfect estimator of classifier accuracy $\oacc$ (Section~\ref{sec:accesstocap}).

\subsection{Access to a Perfect Calibrator $\ocalib$}
\label{sec:accesstocal}

Let us assume we have access to a perfect calibrator and try to attain perfect models for quantification (Lemma~\ref{lemma:cal2quant}) and classifier accuracy prediction (Lemma~\ref{lemma:cal2cap}).

\begin{lemma}
\label{lemma:cal2quant}
$\ocalib \implies \oquant$, i.e., from a perfect calibrator we can attain a perfect quantifier.
\end{lemma}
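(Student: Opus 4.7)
The plan is to show that averaging the outputs of a perfectly calibrated probabilistic classifier over a sample recovers the true class prevalence---in effect, that the ``probabilistic classify and count'' estimator becomes exact whenever its underlying classifier is perfectly calibrated. Given access to $\ocalib$, I would pick any fixed $h\in\mathcal{H}$ and define
\[
\oquant(D) \ddef \frac{1}{|D|}\sum_{x\in D}\ohcal(x), \quad \text{where } \ohcal=\ocalib(h,D),
\]
and then verify that this definition agrees with Equation~\ref{eq:oquant} at the sample level and with $Q(Y=1)$ at the population level.

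At the sample level, I would partition $D$ by the distinct output values of $\ohcal$: let $V=\{\ohcal(x):x\in D\}$ and, for each $\ycal\in V$, let $D_{\ycal}=\{x\in D:\ohcal(x)=\ycal\}$. Grouping terms with the same calibrated score,
\[
\sum_{x\in D}\ohcal(x) \;=\; \sum_{\ycal\in V}\ycal\cdot|D_{\ycal}|.
\]
By the perfect calibration property in Equation~\ref{eq:hcal}, each product $\ycal\cdot|D_{\ycal}|$ equals the number of true positives in $D_{\ycal}$, so summing over $\ycal\in V$ recovers the total number of positives in $D$. Dividing by $|D|$ then matches Equation~\ref{eq:oquant} exactly.

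For the population statement, I would invoke the tower property. Equation~\ref{eq:ycalib:popul} states that $\ohcal(X)=Q(Y=1\mid \widetilde{Y}=\ohcal(X))$ under $Q$, so taking expectations and applying the law of iterated expectations yields $\mathbb{E}_Q[\ohcal(X)]=\mathbb{E}_Q[Q(Y=1\mid \widetilde{Y})]=Q(Y=1)$, which is exactly the population-level specification of a perfect quantifier. The argument is essentially an accounting identity once Definition~\ref{def:calibration} is unpacked, so there is no deep technical obstacle; the only mildly delicate point is that the finite partition used at the sample level must be replaced by a tower-property argument in the limit, and Equation~\ref{eq:ycalib:popul} is precisely the identity that bridges the two regimes. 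The choice of base classifier $h$ is immaterial, since $\ocalib$ is postulated to produce a perfectly calibrated output regardless of its input.
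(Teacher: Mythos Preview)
Your proposal is correct and follows essentially the same approach as the paper: both define the quantifier as the average of the perfectly calibrated posteriors, group terms by distinct calibrated output values, invoke Equation~\ref{eq:hcal} to recover the count of positives in each group, and then handle the population case via the law of iterated expectations (which the paper spells out as an explicit integral over $\widetilde{y}$). The only cosmetic differences are notation ($V$ and $D_{\ycal}$ versus the paper's $U$ and set-builder expressions) and that you invoke the tower property by name whereas the paper writes the marginalisation out; substantively the arguments coincide.
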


\begin{proof}
Let $\ohcal=\ocalib(h,D)$ be a perfectly calibrated classifier with respect to $D$. Consider the following estimator:
\begin{align}
\label{eq:cal2quantmean}
    E(\ohcal, D) \ddef \frac{1}{|D|}\sum_{x_i\in D} \ohcal(x_i)=\frac{1}{|D|}\sum_{x_i\in D} \ycal_i
\end{align}
This summation can be equivalently rewritten as a sum of unique posterior values $\ycal$ in $U=\{\ohcal(x_i):x_i\in D\}$, each multiplied by the total number of times it appears in the sequence, i.e., multiplied by $|\{x_i\in D: \ohcal(x_i)=\ycal\}|$. Note that some classifiers might assign the same posterior value to several datapoints. For example, some implementations of the probabilistic $k$-NN classifier will at most assign $k+1$ different such values; decision trees may assign the same posterior value to every datapoint assigned to the same leaf node of a decision tree. Other classifiers, however, might potentially assign different probabilities to every datapoint in $D$ \cite{EncyclopediaCalibrationFlach2016}. While this has implications in practical cases, it does not compromise the proof whatsoever. It thus follows that:
%This goes without loss of generality, since some (which might well be greater than one, since ), \comment{Flach ``Suppose we want to assign the same probability vector to all labeled instances in a given set S – for example, all training instances that get filtered into the same leaf of a decision tree. [...] However, for a linear classifier which potentially assigns unique probabilities to every test instance, we need some way to group instances with similar probabilities together so that we can calculate the empirical probabilities. ''} i.e., 
%
\begin{align*}
    E(\ohcal,D)=\frac{1}{|D|}\sum_{\ycal_j\in U} \left(\ycal_j \cdot |\{x_i\in D: \ohcal(x_i)=\ycal_j\}|\right)
\end{align*}
Substituting $\ycal_j$ by the definition of a perfectly calibrated classifier (Equation~\ref{eq:hcal}) it follows that
\begin{align*}
    E(\ohcal,D) =\; & \frac{1}{|D|}\left(\sum_{\ycal_j\in U} \frac{|\{x_i\in D : \Phi(x_i)=1, \ohcal(x_i)=\ycal_j \}|}{|\{x_i\in D : \ohcal(x_i)=\ycal_j \}|} \cdot |\{x_i\in D: \ohcal(x_i)=\ycal_j\}|\right) \\
    =\; & \frac{1}{|D|}\sum_{\ycal_j\in U} |\{x_i\in D : \Phi(x_i)=1, \ohcal(x_i)=\ycal_j \}| \\
    =\; & \frac{1}{|D|} |\{x_i\in D : \Phi(x_i)=1\}| \\
    =\; & \oquant(D)
\end{align*}

\noindent which completes the proof. Note this estimator is also consistent at the population level; i.e., that $E(\ohcal, \population)=\oquant(\population)$, since
\begin{align}
\begin{split}
\label{eq:ocal2oquant:popul1}
    E(\ohcal, \population)=\mathbb{E}_{X\sim Q_X}[\ohcal(X)]=\mathbb{E}_{X\sim Q_X}[\widetilde{Y}]=
    \int_{\mathcal{X}} \widetilde{y} \cdot Q(x)  \; dx 
\end{split}
\end{align}
where we replace $\ycal$ with its definition in the asymptomatic case (Equation~\ref{eq:ycalib:popul}) and change the integration variable $x$ in favour of $\ycal$, that subsumes all datapoints $x$ for which $\ohcal(x)=\ycal$ (that is, where $Q(\widetilde{Y}=\widetilde{y})=\int_{x\;:\;\ohcal(x)=\widetilde{y}} Q(X=x) \; dx$) thus obtaining

\begin{align}
\begin{split}
\label{eq:ocal2oquant:popul2}
    E(\ohcal, \population)
    &=\int_{0}^1 Q(Y=1|\widetilde{Y}=\widetilde{y}) \cdot Q(\widetilde{Y}=\widetilde{y}) \; d\ycal 
    =Q(Y=1)
\end{split}
\end{align}
\end{proof}

% -------------------------------------------------------

\begin{lemma}
\label{lemma:cal2cap}
$\ocalib \implies \oacc$, i.e., from a perfect calibrator we can attain a perfect estimator of classifier accuracy.
\end{lemma}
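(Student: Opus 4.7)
The plan is to build an estimator of the classifier accuracy by summing, over each $x\in D$, the calibrated probability that $\hcrisp(x)$ agrees with the true label. Concretely, I would define
\begin{align*}
    E(\hcrisp,\ohcal,D) \ddef \frac{1}{|D|}\sum_{x\in D}\Big[\hcrisp(x)\ohcal(x)+(1-\hcrisp(x))(1-\ohcal(x))\Big],
\end{align*}
which captures the intuition that, for a perfectly calibrated $\ohcal$, the probability of correctness at $x$ is $\ohcal(x)$ when $\hcrisp(x)=1$ and $1-\ohcal(x)$ when $\hcrisp(x)=0$. I would then show $E(\hcrisp,\ohcal,D)=\oacc(h,D)$.

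The argument parallels the proof of Lemma~\ref{lemma:cal2quant}: I would group the sum by the distinct values $\ycal_j\in U=\{\ohcal(x):x\in D\}$ taken by the calibrated classifier. For each level set $L_j=\{x\in D:\ohcal(x)=\ycal_j\}$, Equation~\ref{eq:hcal} guarantees that exactly $\ycal_j\cdot|L_j|$ of its elements have $\Phi(x)=1$. Since $\hcrisp$ (obtained by thresholding $h$) is constant on each $L_j$ whenever the calibration map is order-preserving in $h$, each level set is either fully predicted as positive (contributing $\ycal_j|L_j|$ correct predictions, which the estimator matches through $\ycal_j\sum_{x\in L_j}\hcrisp(x)=\ycal_j|L_j|$) or fully predicted as negative (contributing $(1-\ycal_j)|L_j|$ correct predictions, matching $(1-\ycal_j)|L_j|$ from the estimator). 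Summing these contributions across all $\ycal_j\in U$ recovers $|\{x\in D:\Phi(x)=\hcrisp(x)\}|$, and dividing by $|D|$ yields $\oacc(h,D)$.

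For the asymptotic statement, I would rewrite the estimator as $\mathbb{E}_{X\sim Q}[\widetilde{Y}\mathds{1}[\hat{Y}=1]+(1-\widetilde{Y})\mathds{1}[\hat{Y}=0]]$, condition on $\widetilde{Y}$ via the tower property, and apply the asymptotic calibration identity $\ycal=Q(Y=1|\widetilde{Y}=\ycal)$ from Equation~\ref{eq:ycalib:popul} to swap $\widetilde{Y}$ for $Q(Y=1|\widetilde{Y})$. The resulting expression equals $Q(Y=1,\hat{Y}=1)+Q(Y=0,\hat{Y}=0)=Q(Y=\hat{Y})=\oacc(h,\population)$, mirroring the change-of-variable step already used in Equations~\ref{eq:ocal2oquant:popul1}--\ref{eq:ocal2oquant:popul2}.

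The main obstacle is the measurability issue implicit in the previous two paragraphs: Equation~\ref{eq:hcal} only pins down the positive proportion on each level set of $\ohcal$, not on arbitrary refinements of it. The clean cancellation above therefore requires $\hat{Y}$ to be $\widetilde{Y}$-measurable, which amounts to assuming that the calibration map $\ocalib$ is monotonic in the raw score $h$ (as is the case for Platt, isotonic, and beta calibration). I would either state this monotonicity as a standing assumption or, equivalently, reduce to the canonical crisp classifier $\hcrisp(x)=\mathds{1}[\ohcal(x)>1/2]$, which is constant on each level set by construction and therefore makes the reduction go through without further hypotheses.
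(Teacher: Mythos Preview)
Your single-calibration estimator has a real obstruction that you correctly flag but do not actually resolve. The calibration identity (Equation~\ref{eq:hcal}) constrains only the positive proportion on each level set of $\ohcal$; if $\hcrisp$ is not constant on some level set $L_j$, your estimator contributes $\ycal_j\,|L_j\cap\Dplus|+(1-\ycal_j)\,|L_j\cap\Dminus|$, whereas the true number of correct predictions on $L_j$ depends on \emph{which} elements of $L_j$ are positive---information that perfect calibration on $D$ simply does not reveal. Neither remedy closes the gap: (i)~monotonicity of the calibration map is not part of Definition~\ref{def:calibration}, and even a non-decreasing map can be flat on an interval of $h$-values that straddles the threshold $t$, so $\hcrisp$ still varies on that level set; (ii)~replacing $\hcrisp$ by $\mathds{1}[\ohcal(x)>1/2]$ changes the classifier whose accuracy is being estimated, so it does not prove the lemma as stated.

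The paper sidesteps the issue by invoking the calibrator oracle \emph{twice}, once on each part of the partition $\Dplus=\{x:\hcrisp(x)=1\}$ and $\Dminus=\{x:\hcrisp(x)=0\}$, obtaining $\ohcalplus=\ocalib(h,\Dplus)$ and $\ohcalminus=\ocalib(h,\Dminus)$. Because $\hcrisp$ is constant on each part by construction, the calibration identity applied \emph{within} each partition directly yields the count of true positives in $\Dplus$ and of true negatives in $\Dminus$, so
\[
\frac{\sum_{x\in\Dplus}\ohcalplus(x)+\sum_{x\in\Dminus}\bigl(1-\ohcalminus(x)\bigr)}{|D|}=\oacc(h,D)
\]
with no measurability hypothesis. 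Your level-set bookkeeping and your population-level argument both go through verbatim once you adopt this two-calibration construction; the only missing idea is to partition first and calibrate second, rather than the reverse.
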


\begin{proof}
    Consider the following partition over $D$:
    \begin{align}
    \begin{split}
    \label{eq:partitionh}
        \Dplus&=\{x\in D: \hcrisp(x)=1\} \\
        \Dminus&=\{x\in D: \hcrisp(x)=0\}
    \end{split}
    \end{align}
    Let us generate two perfectly calibrated classifiers, $\ohcalplus=\ocalib(h,\Dplus)$ and $\ohcalminus=\ocalib(h,\Dminus)$ for each of these parts, respectively. Now, consider the following estimator of classifier accuracy:
    %
    % \begin{align}
    %     E(h,D,\ocalib)=\frac{\sum_{x\in\Dplus}\ohcalplus(x)}{|\Dplus|}+\frac{\sum_{x\in\Dminus}1-\ohcalminus(x)}{|\Dminus|}
    % \end{align}
    \begin{align}
        \label{eq:calib2cap}
        E(h,D,\ocalib)\ddef\frac{\sum_{x_i\in\Dplus}\ohcalplus(x_i)+\sum_{x_j\in\Dminus}(1-\ohcalminus(x_j))}{|D|}
    \end{align}

    We can equivalently express the summations as the sum of unique values in $U_\oplus=\{\ohcalplus(x):x\in\Dplus\}$ and $U_\ominus=\{\ohcalminus(x):x\in\Dminus\}$, each multiplied by the number of times each value appears in the sequence:
    \begin{align}
    \label{eq:cal2cap:tmp}
        E(h,D,\ocalib)&=\frac{\sum_{\ycal_\oplus\in U_\oplus}\ycal_\oplus\cdot|\{x_i\in\Dplus:\ohcalplus(x_i)=\ycal_\oplus\}|}{|D|} \nonumber \\        &\quad + \frac{\sum_{\ycal_\ominus\in U_\ominus}(1-\ycal_\ominus)\cdot|\{x_j\in\Dminus:\ohcalminus(x_j)=\ycal_\ominus\}|}{|D|}
    \end{align}
    By the definition of perfect calibration it follows that 
    \begin{align}
    (1-\ycal)&=1-\frac{|\{x\in D : \Phi(x)=1, \ohcal(x)=\ycal \}|}{|\{x\in D : \ohcal(x)=\ycal \}|} \nonumber \\ 
    &=\frac{|\{x\in D : \Phi(x)=0, \ohcal(x)=\ycal \}|}{|\{x\in D : \ohcal(x)=\ycal \}|} \label{eq:complement}
    \end{align}
    \noindent i.e., that the computing the complement $(1-\ycal)$ of a well-calibrated value $\ycal$ has the effect of ``switching'' from $\Phi(x)=1$ to $\Phi(x)=0$ in the numerator of Equation~\ref{eq:hcal}. Note also that, by multiplying both sides of Equation~\ref{eq:hcal} by the denominator of the right-most part, we obtain:
    \begin{align*}
    \ycal\cdot|\{x\in D : \ohcal(x)=\ycal \}|=|\{x\in D : \Phi(x)=1, \ohcal(x)=\ycal \}|
    \end{align*}
    \noindent if we do the same multiplication in Equation~\ref{eq:complement}, we can turn back to Equation~\ref{eq:cal2cap:tmp} and continue as follows:
    \begin{align*}
        E(h,D,\ocalib)&=\frac{\sum_{\ycal_\oplus\in U_\oplus}|\{x_i\in\Dplus:\Phi(x_i)=1, \ohcalplus(x_i)=\ycal_\oplus\}|}{|D|} \nonumber \\        
        &\quad + \frac{\sum_{\ycal_\ominus\in U_\ominus}|\{x_j\in\Dminus:\Phi(x_j)=0, \ohcalminus(x_j)=\ycal_\ominus\}|}{|D|} \\
        &= \frac{|\{x_i\in\Dplus:\Phi(x_i)=1\}| + |\{x_j\in\Dminus:\Phi(x_j)=0\}|}{|D|} \\
        &= \frac{|\{x\in D : \Phi(x)=\hcrisp(x)\}|}{|D|} \\
        &= \oacc(h,D)
    \end{align*}

This estimator is consistent at the population level as well. Note that, in the asymptotic case, Equation~\ref{eq:calib2cap} can be rewritten as two expected values:
\begin{align*}
    E(h,\population,\ocalib)= %\left(\frac{1}{|\Dplus|}\sum_{x\in\Dplus}\ohcalplus(x)\right)\frac{|\Dplus|}{|D|} + \left(1-\frac{1}{|\Dminus|}\sum_{x\in\Dminus}\ohcalminus(x)\right)\frac{|\Dminus|}{|D|}
    \mathbb{E}_{X\sim Q_{X|\hat{Y}=1}}[\ohcalplus(X)]Q(\hat{Y}=1)+\left(1-\mathbb{E}_{X\sim Q_{X|\hat{Y}=0}}[\ohcalminus(X)]\right)Q(\hat{Y}=0)
\end{align*}
By virtue of Equations~\ref{eq:ocal2oquant:popul1} and \ref{eq:ocal2oquant:popul2}, we can replace the expected values with the corresponding (conditional) distributions of the priors as follows
\begin{align*}
    E(h,D,\ocalib)
    =& \; Q(Y=1|\hat{Y}=1)Q(\hat{Y}=1)+(1-Q(Y=1|\hat{Y}=0))Q(\hat{Y}=0)\\
    =& \; Q(Y=1|\hat{Y}=1)Q(\hat{Y}=1)+Q(Y=0|\hat{Y}=0)Q(\hat{Y}=0)\\
    =& \; Q(Y=1, \hat{Y}=1)+Q(Y=0, \hat{Y}=0) \\
    =& \; Q(Y=\hat{Y})
\end{align*}
    
\end{proof}

% -------------------------------------------------------

\subsection{Access to a Perfect Quantifier $\oquant$}
\label{sec:accesstoquant}

In this case, we will assume the availability of a perfect quantifier and we will try to attain perfect models for calibration (Lemma~\ref{lemma:quant2cal}) and classifier accuracy prediction (Lemma~\ref{lemma:quant2cap}).

\begin{lemma}
\label{lemma:quant2cal}
$\oquant \implies \ocalib$, i.e., from a perfect quantifier we can attain a perfect calibrator.
\end{lemma}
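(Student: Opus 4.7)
The plan is to build the calibrator by a simple bin-wise quantification scheme: group the elements of $D$ according to the raw score produced by $h$, apply $\oquant$ to each group to obtain the true positive fraction in that group, and use that fraction as the calibrated score for every point in the group. Formally, let $V = \{h(x) : x \in D\}$ and, for each $v \in V$, set $D_v = \{x \in D : h(x) = v\}$. I would define the proposed calibrator output as $\ohcal(x) \ddef \oquant(D_v)$ whenever $h(x) = v$; by Equation~\ref{eq:oquant}, this equals exactly $|\{x \in D_v : \Phi(x) = 1\}| / |D_v|$.

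To verify that $\ohcal$ satisfies the calibration identity in Equation~\ref{eq:hcal}, I would fix any $\ycal$ in the image of $\ohcal$ and let $V_\ycal = \{v \in V : \oquant(D_v) = \ycal\}$. The level set $\{x \in D : \ohcal(x) = \ycal\}$ is then the disjoint union $\bigcup_{v \in V_\ycal} D_v$. Since $|\{x \in D_v : \Phi(x) = 1\}| = \ycal \cdot |D_v|$ for every $v \in V_\ycal$, the positive fraction in that union equals $\sum_{v \in V_\ycal} \ycal |D_v| \,/\, \sum_{v \in V_\ycal} |D_v| = \ycal$, which is precisely what perfect calibration requires. For the asymptotic version (Equation~\ref{eq:ycalib:popul}), the construction carries over at the population level: applying $\oquant$ to the conditional $Q(X \mid h(X) = v)$ yields $Q(Y = 1 \mid h(X) = v)$, and a short application of the law of total probability over the level sets of $\ohcal$ recovers $\ycal = Q(Y = 1 \mid \widetilde{Y} = \ycal)$.

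The main obstacle is handling the potential non-injectivity of the map $v \mapsto \oquant(D_v)$: several distinct raw scores may collapse to the same calibrated value, so the calibration identity has to be verified on the union of their bins rather than on each bin in isolation, which forces the weighted-sum argument above. A secondary subtlety is the degenerate case in which $h$ is injective on $D$ and every $D_v$ is a singleton: then $\oquant$ returns only $0$ or $1$ and $\ohcal$ collapses to the indicator of the true label, which is vacuously but correctly perfectly calibrated. Apart from this bookkeeping, the reduction is clean and mirrors, in the opposite direction, the averaging estimator used in Lemma~\ref{lemma:cal2quant}.
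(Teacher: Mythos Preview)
Your proposal is correct and follows essentially the same approach as the paper: both define the calibrator by $\ohcal(x)=\oquant(D_{h(x)})$ (the paper writes this as $\oquant([x]_h)$), and both handle the non-injectivity of $v\mapsto\oquant(D_v)$ by observing that the level set $\{\ohcal=\ycal\}$ is a disjoint union of raw-score bins all sharing the same positive fraction $\ycal$, so the weighted average collapses to $\ycal$. Your population-level sketch via the law of total probability likewise matches the paper's argument.
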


\begin{proof}
Consider the following estimator $E(h,\oquant,D)\equiv\hcal$ that we implement using the quantifier oracle as:
\begin{align}
    %E(h,\oquant,D)&\equiv\hcal : \mathcal{X}\rightarrow [0,1]\\
    \hcal(x)&\ddef \oquant(\{x'\in D : h(x)=h(x')\})    
\end{align}
To simplify the notation, we define the equivalence relation $x\sim_h x'$ which induces the equivalence class 
\begin{equation*}
    [x]_h=\{x'\in D : h(x)=h(x')\}
\end{equation*}
\noindent that is, the set of all datapoints to which the uncalibrated classifier $h$ assigns the same value it assigns to $x$. This allows us to express our estimator more compactly as:
\begin{equation}
\label{eq:ehcalquant}
    \hcal(x):=\oquant([x]_h)
\end{equation}
Similarly, we can define the equivalence relation $x\sim_{\hcal} x'$ that induces the equivalence class 
\begin{equation}
\label{eq:eqclasshcal}
[x]_{\hcal}=\{x'\in D : \hcal(x)=\hcal(x')\}    
\end{equation}
Using this notation, it is worth noting that if $\hcal$ is perfectly calibrated, then, by definition, it can be rewritten as $\hcal(x)=\oquant([x]_{\hcal})$ (see Equation~\ref{eq:hcal}). The proof thus consists of verifying whether $\oquant([x]_{\hcal})=\oquant([x]_h)$ holds true.

Let us begin by noting that $x\sim_h x' \implies x\sim_{\hcal}x'$ (i.e., that $x'\in[x]_h \implies x'\in[x]_{\hcal}$), while the converse implication is not necessarily true. This implicitly means that the partition of $D$ induced by $\sim_{\hcal}$ is \textit{coarser-grained} than the partition of $D$ induced by $\sim_h$ or, in other words, that an equivalence class of $\hcal$ might contain more than one equivalence classes of $h$. Let us define the indices that enumerate the parts of $[x]_{\hcal}$ as:
\begin{equation}
    I_x=\{i\in\mathbb{N}:[x']_h^{(i)}\subseteq[x]_{\hcal}\}
\end{equation}
Such that $[x]_{\hcal}=\bigcup_{i\in I_x}[x']_h^{(i)}$ with $i\neq j \implies[x]_h^{(i)}\cap[x']_h^{(j)}=\emptyset$.
%
%\begin{align}
%    [x]_{\hcal}=&\bigcup_{i\in I_x}[x']_h^{(i)}\\
%    i\neq j \implies&[x]_h^{(i)}\cap[x']_h^{(j)}=\emptyset
%\end{align} 
%
In the following derivation, we leverage the fact that the prevalence of a partitioned set can be computed as the number of positives in each partition (which we can easily obtain by multiplying the prevalence of that part by its cardinality) divided by the total number of elements in the set.\footnote{That is, given a set $\mathcal{A}$ partitioned as $A_1,\ldots,A_k$, such that $\mathcal{A}=\bigcup_{i=1}^k A_i$ with $A_i\cap A_j=\emptyset$ for $i\neq j$, the prevalence of $\mathcal{A}$ can be obtained as $\oquant(\mathcal{A})=\frac{1}{|\mathcal{A}|}\sum_{i=1}^k\left(\oquant(A_i)\cdot|A_i|\right)$.} In this way
\begin{align}
    \oquant([x]_{\hcal}) &= \oquant(\cup_{i\in I_x}[x']_{h}^{(i)}) \\
        &= \frac{\sum_{i\in I_x}\oquant([x']_{h}^{(i)})\cdot|[x']_{h}^{(i)}|}{|[x]_{\hcal}|}
\end{align}
By the definition of $[x]_{\hcal}$ (see Equations~\ref{eq:ehcalquant}, \ref{eq:eqclasshcal}), all its elements are such that $x',x''\in[x]_{\hcal} \implies \oquant([x']_h)=\oquant([x'']_h)$, that is, the class prevalence of $p_i=\oquant([x']_{h}^{(i)})$ is the same for all equivalence classes that compose $[x]_{\hcal}$, i.e., $p_i=p_j, \forall i,j\in I_x$. Let us simply call $p$ this (constant) prevalence value that we factor out as follows:
\begin{equation}
    \oquant([x]_{\hcal}) = \frac{p\cdot\sum_{i\in I_x}|[x']_{h}^{(i)}|}{|[x]_{\hcal}|} = p
\end{equation}
The proof for the finite sample is completed by realising that $p$ is also the prevalence value of $[x]_h$, which is one of the elements in the partition, i.e., $p=\oquant([x]_h)$, and therefore if follows that $p=\oquant([x]_{\hcal})=\oquant([x]_h)$ meaning that the classifier $\hcal$ was indeed well-calibrated.

At the population level the estimator of Equation~\ref{eq:ehcalquant} corresponds to:
\begin{equation}
\label{eq:hcal:popul}
    \hcal(x)\ddef Q(Y=1|h(X)=h(x))
\end{equation}
If $\hcal$ were well-calibrated, then the following would also hold true
\begin{align}
\label{eq:hwellcal:popul}
    \hcal(x)
    &=Q(Y=1|\hcal(X)=\hcal(x))    
\end{align}
\noindent so we need to prove whether $Q(Y=1|h(X)=h(x))=Q(Y=1|\hcal(X)=\hcal(x))$ is true. To this aim, let us begin by applying the definition of conditional probability to Equation~\ref{eq:hwellcal:popul}:
\begin{align*}
    \hcal(x)
    %&=Q(Y=1|\hcal(X)=\hcal(x))\\
    &=\frac{Q(Y=1,\hcal(X)=\hcal(x))}{Q(\hcal(X)=\hcal(x))}
\end{align*}
Since, as we saw before, the random variable $\hcal(X)=\widetilde{Y}$ induces a coarser-grained partition of $\mathcal{X}$ than $h(X)$, we can ``unpack'' the probability density of the former as an aggregation of all the corresponding cases of the latter. Let $\widetilde{y}=\hcal(x)$ so we can rewrite:
%the probability that $\widetilde{Y}$ takes on value $\hcal(x)=\ycal$ as the sum of corresponding cases for $h(X)$ as 
\begin{align*}
Q(\hcal(X)=\ycal)&=\sum_{w\in\mathcal{W}_{\ycal}}Q(h(X)=w)    \\
Q(Y=1,\hcal(X)=\ycal)&=\sum_{w\in\mathcal{W}_{\ycal}}Q(Y=1,h(X)=w)    
\end{align*}
with $\mathcal{W}_{\ycal}=\{h(x'):\oquant([x']_h)=\ycal)\}$. 
Now, we can proceed from where we left off as follows:
\begin{align*}
    \hcal(x)    
    &=\frac{\sum_{w\in \mathcal{W}_{\ycal}} Q(Y=1,h(X)=w)}{\sum_{w\in \mathcal{W}_{\ycal}} Q(h(X)=w)}\\
    &=\frac{\sum_{w\in \mathcal{W}_{\ycal}} Q(Y=1|h(X)=w)Q(h(X)=w)}{\sum_{w\in \mathcal{W}_{\ycal}} Q(h(X)=w)}
\end{align*}
and since we know, by the construction of $\mathcal{W}_{\ycal}$, that the value of $Q(Y=1|h(X)=w)$ is the same constant value for all the choices of $w\in \mathcal{W}_{\ycal}$, and since we know $h(x)\in\mathcal{W}_{\ycal}$, we can factor it out from the summation as follows:
\begin{align*}
    \hcal(x)        
    &=Q(Y=1|h(X)=h(x))\cdot\frac{\sum_{w\in \mathcal{W}_{\ycal}} Q(h(X)=w)}{\sum_{w\in \mathcal{W}_{\ycal}} Q(h(X)=w)}\\
    &=Q(Y=1|h(X)=h(x))
\end{align*}
Which coincides with Equation~\ref{eq:hcal:popul} thus proving that $\hcal$ is indeed well-calibrated also at the population level.

\end{proof}

% -------------------------------------------------------

\begin{lemma}
\label{lemma:quant2cap}
$\oquant \implies \oacc$, i.e., from a perfect quantifier we can attain a perfect estimator of classifier accuracy.
\end{lemma}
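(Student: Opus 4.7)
The plan is to mimic the construction used in Lemma~\ref{lemma:cal2cap}: partition the reference set according to the crisp predictions of $h$, and then use the quantifier oracle on each part to recover the per-part accuracy, instead of relying on the calibrated outputs as was done there.

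Concretely, first I would reuse the partition $\Dplus=\{x\in D:\hcrisp(x)=1\}$ and $\Dminus=\{x\in D:\hcrisp(x)=0\}$ defined in Equation~\ref{eq:partitionh}. On $\Dplus$, a datapoint is correctly classified iff $\Phi(x)=1$, so the number of correct predictions on $\Dplus$ equals $|\Dplus|\cdot\oquant(\Dplus)$ by the defining property of $\oquant$ (Equation~\ref{eq:oquant}). Symmetrically, on $\Dminus$ a datapoint is correctly classified iff $\Phi(x)=0$, hence the count of correct predictions on $\Dminus$ equals $|\Dminus|\cdot(1-\oquant(\Dminus))$. Adding and dividing by $|D|$, I would propose the estimator
\begin{equation*}
E(h,D,\oquant)\ddef\frac{|\Dplus|\cdot\oquant(\Dplus)+|\Dminus|\cdot(1-\oquant(\Dminus))}{|D|}.
\end{equation*}
Substituting the defining identity of $\oquant$ for each term immediately collapses the numerator to $|\{x\in\Dplus:\Phi(x)=1\}|+|\{x\in\Dminus:\Phi(x)=0\}|=|\{x\in D:\Phi(x)=\hcrisp(x)\}|$, matching $\oacc(h,D)$ from Equation~\ref{eq:oacc}.

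For the population-level statement I would apply the same construction to the induced sub-populations conditioned on the event $\hat{Y}=1$ and $\hat{Y}=0$. There $\oquant$ returns $Q(Y=1\mid\hat{Y}=1)$ and $Q(Y=1\mid\hat{Y}=0)$, so the estimator becomes
\begin{equation*}
Q(Y=1\mid\hat{Y}=1)Q(\hat{Y}=1)+(1-Q(Y=1\mid\hat{Y}=0))Q(\hat{Y}=0),
\end{equation*}
which rearranges via $1-Q(Y=1\mid\hat{Y}=0)=Q(Y=0\mid\hat{Y}=0)$ and the chain rule into $Q(Y=1,\hat{Y}=1)+Q(Y=0,\hat{Y}=0)=Q(Y=\hat{Y})=\oacc(h,\population)$, exactly as in the closing lines of the proof of Lemma~\ref{lemma:cal2cap}.

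There is essentially no hard step here: the only thing to be careful about is to invoke the oracle on the two conditional sub-samples $\Dplus,\Dminus$ rather than on $D$ as a whole, since accuracy cannot in general be recovered from the overall positive prevalence alone. Once that partitioning trick is in place, the argument is purely bookkeeping, and in fact mirrors the structure of Lemma~\ref{lemma:cal2cap} with the calibrated outputs replaced by per-part quantifier calls, making the reduction a one-line application of the definition of $\oquant$.
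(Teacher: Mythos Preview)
Your argument is correct, but it takes a different route from the paper. The paper proves Lemma~\ref{lemma:quant2cap} purely by transitivity, chaining Lemma~\ref{lemma:quant2cal} ($\oquant\implies\ocalib$) with Lemma~\ref{lemma:cal2cap} ($\ocalib\implies\oacc$), with no new estimator introduced. You instead give a direct construction: partition $D$ by the crisp prediction and apply $\oquant$ separately to $\Dplus$ and $\Dminus$. This is more elementary---it avoids the detour through the calibrator and the somewhat delicate equivalence-class argument of Lemma~\ref{lemma:quant2cal}---and it is in fact the exact estimator the paper later adopts for the practical $\quant2\acc$ adaptation in Section~\ref{sec:methods} (and the mirror image of the estimator in Lemma~\ref{lemma:acc2quant}). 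The paper's transitivity proof, on the other hand, is shorter on the page and emphasizes the triangular equivalence structure among the three oracles. Both are valid; yours is arguably the more natural standalone argument.
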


\begin{proof}
The proof is immediate by reusing Lemma~\ref{lemma:quant2cal} and Lemma~\ref{lemma:cal2cap}:
    \begin{align*}
        \oquant \implies \ocalib &\quad \mathrm{(Lemma~\ref{lemma:quant2cal})}   \\
        \ocalib \implies \oacc &\quad \mathrm{(Lemma~\ref{lemma:cal2cap})}
    \end{align*}
\end{proof}

% -------------------------------------------------------

\subsection{Access to a Perfect Estimator of Classifier Accuracy $\oacc$}
\label{sec:accesstocap}

We now assume the availability of a perfect estimator of classifier accuracy and try to attain perfect models for 
quantification  (Lemma~\ref{lemma:acc2quant}) and calibration (Lemma~\ref{lemma:acc2calib}).

\begin{lemma}
\label{lemma:acc2quant}
$\oacc \implies \oquant$, i.e., from a perfect estimator of classifier accuracy we can attain a perfect quantifier.
\end{lemma}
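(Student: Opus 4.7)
The plan is to exploit the fact that the definition of $\oacc$ allows it to be queried on any classifier $h\in\mathcal{H}$, including trivial ones. In particular, I would construct a constant classifier $h_\oplus$ whose crisp version always predicts the positive class, and observe that its accuracy on any reference set $D$ coincides exactly with the prevalence of positives in $D$.

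Concretely, I would define $h_\oplus(x) \ddef c$ for any constant $c\in\mathbb{R}$ strictly greater than the decision threshold $t$, so that the induced crisp classifier satisfies $\hcrisp_\oplus(x)=1$ for every $x\in\mathcal{X}$. Plugging $h_\oplus$ into the perfect accuracy estimator and applying Equation~\ref{eq:oacc} directly, one obtains
\begin{equation*}
    \oacc(h_\oplus, D) = \frac{|\{x\in D : \Phi(x) = \hcrisp_\oplus(x)\}|}{|D|} = \frac{|\{x\in D : \Phi(x) = 1\}|}{|D|},
\end{equation*}
which by Equation~\ref{eq:oquant} is precisely $\oquant(D)$. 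Therefore the estimator $E(D)\ddef \oacc(h_\oplus,D)$ is a perfect quantifier on finite samples. Symmetrically, one could use the constant negative classifier $h_\ominus$ and recover $\oquant(D)=1-\oacc(h_\ominus,D)$, but the positive-constant construction is cleaner.

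For consistency at the population level, the same identification goes through verbatim: since $\hat{Y}=\hcrisp_\oplus(X)=1$ almost surely under $Q$, we have
\begin{equation*}
    \oacc(h_\oplus,\population) = Q(Y=\hat{Y}) = Q(Y=1) = \oquant(\population),
\end{equation*}
as required by Definition~\ref{def:quantification}. There is really no hard step here; the only thing to check is that the hypothesis class $\mathcal{H}$ (all functions $\mathcal{X}\to\mathbb{R}$) admits constant functions, which it trivially does, so $h_\oplus$ is a legitimate input to $\oacc$. If anything, the only ``obstacle'' is conceptual rather than technical: recognizing that the accuracy oracle applied to a degenerate classifier already encodes the class prevalence, so no fancy machinery (unlike in Lemmas~\ref{lemma:cal2quant} or~\ref{lemma:quant2cal}) is needed.
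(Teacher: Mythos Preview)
Your proof is correct and considerably more direct than the paper's. The paper does not introduce a new classifier; it works with the given $h$ throughout. It partitions $D$ into $\Dplus=\{x:\hcrisp(x)=1\}$ and $\Dminus=\{x:\hcrisp(x)=0\}$, queries the oracle on each part, and combines via
\[
E(h,D,\oacc)=\oacc(h,\Dplus)\,\frac{|\Dplus|}{|D|}+\bigl(1-\oacc(h,\Dminus)\bigr)\,\frac{|\Dminus|}{|D|},
\]
then shows this equals $\oquant(D)$ by identifying the first summand with the true positives and the second with the false negatives (and analogously at the population level).

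The trade-off is instructive. Your construction exposes that, at the oracle level, the reduction needs nothing about $h$: a single call on a degenerate classifier suffices. The paper's construction, in contrast, is the one that survives when the oracle is replaced by a learned surrogate. The $\acc2\quant$ adaptation in Section~\ref{sec:methods} is built directly on Equation~\eqref{eq:oacc2oquant}: real classifier-accuracy predictors such as ATC, DoC, or LEAP are trained around the posteriors of a fixed non-trivial $h$ and have no meaningful behaviour when handed a constant classifier. So while your proof is the cleaner theoretical statement, the paper's longer route is what makes the lemma operationally useful downstream.
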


\begin{proof}
    Let us reuse the partitions $\Dplus$ and $\Dminus$ produced by the classifier predictions (see Equations~\ref{eq:partitionh}) and define
    %define the following partition over $D$:
    %
    % \begin{align}
    %     \Dplus&=\{x\in D: \hcrisp(x)=1\} \\
    %     \Dminus&=\{x\in D: \hcrisp(x)=0\}
    % \end{align}
    %
    %Consider 
    the following estimator of class prevalence:
    \begin{align}
    \label{eq:oacc2oquant}
        E(h,D,\oacc) &\ddef \oacc(h,\Dplus) \frac{|\Dplus|}{|D|} + (1-\oacc(h,\Dminus)) \frac{|\Dminus|}{|D|}
    \end{align}
    From the definition of a perfect estimator of classifier accuracy (Equation~\ref{eq:oacc}), it follows that
    \begin{align*}
        E(h,D,\oacc) &=\; \frac{|\{x\in \Dplus : \hcrisp(x)=\Phi(x)\}|}{|\Dplus|}\frac{|\Dplus|}{|D|} + \left(1-\frac{|\{x\in \Dminus : \hcrisp(x)=\Phi(x)\}|}{|\Dminus|}\right) \frac{|\Dminus|}{|D|} \\        
        &=\; \frac{|\{x\in \Dplus : \hcrisp(x)=\Phi(x)\}|}{|D|} + \frac{|\Dminus|-|\{x\in \Dminus : \hcrisp(x)=\Phi(x)\}|}{|D|} \\
        &=\; \frac{|\{x\in \Dplus : \hcrisp(x)=\Phi(x)\}|}{|D|} + \frac{|\{x\in \Dminus : \hcrisp(x)\neq\Phi(x)\}|}{|D|}         
    \end{align*}
    and since we know $\hcrisp(x)=1$ for every $x\in\Dplus$, and $\hcrisp(x)=0$ for every $x\in\Dminus$, we can proceed as follows
    \begin{align*}  
        E(h,D,\oacc) 
        &= \frac{|\{x\in \Dplus : \hcrisp(x)=1 ,  \Phi(x)=1 \}| + |\{x\in \Dminus : \hcrisp(x)=0 ,  \Phi(x)=1\}|}{|D|} \\
        &= \frac{|\{x\in D : \Phi(x)=1\}|}{|D|}\\
        &= \oquant(D)
    \end{align*}

That this estimator is also consistent at the population level follows from the fact that 
\begin{align*}
    \oacc(h,\population_\oplus)&=Q(Y=\hat{Y}|\hat{Y}=1) \\
    1-\oacc(h,\population_\ominus)&=Q(Y\neq\hat{Y}|\hat{Y}=0)
\end{align*}
which allows us to rewrite Equation~\ref{eq:oacc2oquant} as 
\begin{align*}                   
    E(h,\population,\oacc)&=Q(Y=\hat{Y}|\hat{Y}=1)\;Q(\hat{Y}=1)+Q(Y\neq\hat{Y}|\hat{Y}=0)\;Q(\hat{Y}=0)\\
    &=Q(Y=\hat{Y},\hat{Y}=1)+Q(Y\neq\hat{Y},\hat{Y}=0)\\
    &=Q(Y=1,\hat{Y}=1)+Q(Y=1,\hat{Y}=0)\\
    &=Q(Y=1)
\end{align*}
    
\end{proof}

% -------------------------------------------------------

\begin{lemma}
\label{lemma:acc2calib}
$\oacc \implies \ocalib$, i.e., from a perfect estimator of classifier accuracy we can attain a perfect calibrator.
\end{lemma}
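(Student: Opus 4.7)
The plan is to mimic the strategy already used in Lemma~\ref{lemma:quant2cap} and recover the result as a direct composition of two previously established reductions. Lemma~\ref{lemma:acc2quant} gives $\oacc \implies \oquant$, and Lemma~\ref{lemma:quant2cal} gives $\oquant \implies \ocalib$. Chaining them yields $\oacc \implies \ocalib$ without any new technical argument, so the proof should be a one-line appeal to these two lemmas. This is the cleanest route, and it mirrors exactly what the authors did for the acc/quant reduction to classifier accuracy prediction.

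For completeness, I would briefly note that a direct construction is also possible. For any $x$ and reference set $D$, form the equivalence class $[x]_h = \{x' \in D : h(x') = h(x)\}$, and construct an auxiliary classifier $h^+ \in \mathcal{H}$ whose crisp version satisfies $\hcrisp^+(x') = 1$ on all of $[x]_h$ (e.g., by assigning raw scores strictly above the decision threshold there). Defining
\begin{equation*}
\ohcal(x) \ddef \oacc(h^+, [x]_h),
\end{equation*}
Definition~\ref{def:cap} immediately gives $\ohcal(x) = |\{x' \in [x]_h : \Phi(x')=1\}|/|[x]_h|$, i.e., the true fraction of positives among points with raw score $h(x)$.

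The main obstacle with the direct route is the same one that appeared in Lemma~\ref{lemma:quant2cal}: the construction returns a value calibrated with respect to the partition of $D$ induced by $h$, whereas perfect calibration in the sense of Equation~\ref{eq:hcal} requires calibration with respect to the (possibly coarser) partition induced by $\ohcal$ itself. Resolving this gap requires exactly the ``coarser-grained partition'' argument developed in the proof of Lemma~\ref{lemma:quant2cal}, which shows that the per-class prevalence is constant on every merged block. Since that work has already been done, the two-line composition proof is the cleanest presentation, and at the population level it automatically inherits consistency from the population-level statements of Lemmas~\ref{lemma:acc2quant} and \ref{lemma:quant2cal}.
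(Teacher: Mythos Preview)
Your proposal is correct and matches the paper's own proof exactly: the paper simply invokes transitivity, chaining Lemma~\ref{lemma:acc2quant} ($\oacc \implies \oquant$) with Lemma~\ref{lemma:quant2cal} ($\oquant \implies \ocalib$). Your additional ``direct construction'' remark is extra commentary not present in the paper, but it is sound and your observation that it would require re-running the coarser-partition argument of Lemma~\ref{lemma:quant2cal} is accurate.
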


\begin{proof}
    The proof follows by transitivity using Lemma~\ref{lemma:acc2quant} and Lemma~\ref{lemma:quant2cal}:
    \begin{align*}
        \oacc \implies \oquant &\quad \mathrm{(Lemma~\ref{lemma:acc2quant})}   \\
        \oquant \implies \ocalib &\quad \mathrm{(Lemma~\ref{lemma:quant2cal})}
    \end{align*}
\end{proof}

\section{Methods}
\label{sec:methods}

In this section, we reuse the intuitions behind the proofs of Section~\ref{sec:reductions} to propose direct adaptations of methods originally proposed for calibration, quantification, or classifier accuracy prediction, that allow them be applied to the other problems. Additionally, we present two new original methods for calibration which exploit intuitions borrowed from the quantification literature in Section~\ref{sec:methods:new}.

The problem setting is as follows. Let us assume to have access to a labelled $\trset$, drawn from the training distribution $P$, and an unlabelled set $\teset$, drawn from the test distribution $Q$ (see Section~\ref{sec:notation}).
We assume $P$ and $Q$ are related to each other by  CS or LS.
$\trset$ is partitioned, in a stratified way, into $\hset$, that we use to generate a raw (uncalibrated) probabilistic classifier $h$, and $\valset$, that we use to generate the corresponding estimator depending on the target task, as follows:
%to and $V$ are two labelled sets drawn from the training distribution $P$, while $U$ is the unlabelled set drawn from the test distribution $Q$.  In all cases, we use $L$ to generate a raw, uncalibrated probabilistic classifier $h$. Depending on the task:
%
\begin{itemize}
    \item In calibration experiments, $h$ is the classifier we need to calibrate. The calibrator $\calib$ is given access to $\valset$ and the (unlabelled) set $\teset$, and is required to generate $\hcal$, a variant of $h$ calibrated for $\teset$.
    \item All quantification methods we will consider are of type \emph{aggregative}, i.e., are methods that, in order to predict the class priors, learn a permutation-invariant function aggregating the posterior probabilities returned by a classifier. In this case, $h$ acts as the surrogate classifier that a quantifier $\rho$ uses to represent the datapoints in $\valset$ on which the aggregation function is modelled. The same classifier is also used to represent the datapoints in $\teset$ with which the aggregation function predicts the class proportions.
    \item When classifier accuracy prediction is the goal, $h$ represents the classifier whose accuracy we want to estimate on $\teset$, and $\valset$ is used to train a predictive model of classifier performance.
\end{itemize}

%When the target task is classifier calibration, this is the classifier we want to calibrate, using $V$ and $U$. When the task is quantification, $h$ is the surrogate classifier, and we use $V$ to learn a correction for predicting the class prevalence values of $U$. Finally, when the task is classifier accuracy prediction, $h$ represents the classifier whose accuracy we want to estimate on $U$, and $V$ is used to train a predictive model of classifier performance.

% \subsection{Reference Methods}

\subsection{Direct Adaptations}

By ``direct adaptation'', we mean the process of taking one of the estimators presented in the proofs of Section~\ref{sec:reductions} and replacing the oracle it depends on with an actual surrogate method. This also implies making decisions about how the surrogate model is trained, since the oracles of the proofs were assumed to be ready-to-use functions that necessitate no training. These proofs also relied on continuous random variables which sometimes need to be quantized in order to become operable.

% We will consider the following \emph{reference methods} from each problem:
% \begin{itemize}
%     \item Calibration: we consider methods Head2Tails, CPCS, TransCal, and LasCal (see Section~\ref{sec:related:calibration}). These methods
% \end{itemize}

We denote by $M_{a2z}$ the direct adaptation of method $M$, originally proposed for problem $a$, to be used as a surrogate method for addressing problem $z$, with $a,z\in\{\calib,\quant,\acc\}$. For example, by $\text{PACC}_{\quant 2 \calib}$ we denote an adaptation of the quantification method PACC to generate a calibration method.

% \subsubsection{The $\calib 2 \quant$ adaptation}

\begin{itemize}
    \item \textbf{The $\bm{\calib 2 \quant}$ method} is the simplest one. It uses a calibrator method $\calib$ to generate, based on $\valset$ (labelled) and $\teset$ (unlabelled), a version $\hcal$ of $h$ calibrated for $\teset$. The quantification task is performed by computing the average of the calibrated posteriors, as a direct application of Lemma~\ref{lemma:cal2quant} (Equation~\ref{eq:cal2quantmean}).

    \item \textbf{The $\bm{\calib 2 \acc}$ method} first uses $h$ to split $\valset$ and $\teset$ into $\{\valset^{\oplus},\valset^{\ominus}\}$ and $\{\teset^{\oplus},\teset^{\ominus}\}$, respectively (see Equation~\ref{eq:partitionh}). Two calibrated versions $\hcal_{\oplus}$ and $\hcal_{\ominus}$ are generated using the corresponding partitions, and the final estimate of classifier accuracy is returned by applying Equation~\ref{eq:calib2cap} of Lemma~\ref{lemma:cal2cap} on $\teset$.

    \item \textbf{The $\bm{\quant 2 \calib}$ method} first trains a quantifier $\quant$ using the posterior probabilities of the instances in $\valset$. The method requires quantizing the posteriors of the test datapoints estimated by $h$. Given the hyperparameter $b$ indicating the number of bins, we generate an isometric binning of the posteriors probabilities of the test instances, and then estimate, for the test instances falling within each bin $i$, the prevalence $p_i$ of the positive class using the quantification method $\rho$. Such estimates might not be monotonically increasing (e.g., $\hat{p}_i$ can happen to be higher than $\hat{p}_j$, with $i<j$); in such cases we enforce monotonicity by simply applying the rule $p'_i=\max\{\hat{p}_{i-1}, \hat{p}_i\}$. To reduce abrupt changes between nearby bins, we also apply average smoothing with a window size of 1; that is, we zero-pad the sequence adding $p'_0=0$ and $p'_{b+1}=1$ and then apply $p''_i=\frac{1}{3}(p'_{i-1}+p'_i+p'_{i+1})$ for $1\leq i\leq b$.
    Finally, and in order to avoid that at most $b$ different calibrated posteriors are returned, we treat every value $p''_i$ as the calibrated output for the bin center $c_i$, and the final calibration map is defined as a linear interporlation in the (input, output) sequence $[(0,0), (c_1, p''_1), \ldots , (c_b, p''_b),(1,1)]$.

    \item \textbf{The $\bm{\quant 2 \acc}$ method} consists of first splitting $\valset$ into  $\{\valset^{\oplus},\valset^{\ominus}\}$ using $h$, and then learning two dedicated quantifiers $\quant^{\oplus}$ and $\quant^{\ominus}$ on the corresponding partitions. At inference time, the test set $\teset$ is split into $\{\teset^{\oplus},\teset^{\ominus}\}$ using $h$, and we use our quantifiers to estimate the proportion of positive instances in the predicted positives ($\hat{p}^{\oplus}=\quant^{\oplus}(\teset^{\oplus})$) and the proportion of positive instances in the predicted negatives ($\hat{p}^{\ominus}=\quant^{\ominus}(\teset^{\ominus})$); the final estimate of classifier accuracy is computed as $$\hat{\text{acc}}=\frac{\hat{p}^{\oplus}\cdot|\teset^{\oplus}|+(1-\hat{p}^{\ominus})\cdot|\teset^{\ominus}|}{|\teset|}$$

    \item \textbf{The $\bm{\acc 2 \quant}$ method} partitions $\valset$ into  $\{\valset^{\oplus},\valset^{\ominus}\}$ using $h$, and then learns two dedicated estimators of classifier accuracy, $\acc^{\oplus}$ and $\acc^{\ominus}$, on the corresponding partitions. At inference time, the test set $\teset$ is split into $\{\teset^{\oplus},\teset^{\ominus}\}$ using $h$, and two estimates of classifier accuracy are computed on the corresponding sets as $\hat{\text{acc}}^{\oplus}=\acc^{\oplus}(h,\teset^{\oplus})$ and $\hat{\text{acc}}^{\ominus}=\acc^{\ominus}(h,\teset^{\ominus})$. The final prevalence  is computing by plug-in these estimates into Equation~\ref{eq:oacc2oquant} of Lemma~\ref{lemma:acc2quant}.

    \item \textbf{The $\bm{\acc 2 \calib}$ method} is similar to the $\quant2\calib$ approach; it trains a classifier accuracy prediction method $\acc$ on $\valset$ and quantizes the posteriors of the instances in $\teset$ produced by $h$ using isometric binning for $b$ bins. A sequence $(a_1, \ldots, a_b)$ of accuracy estimates for the $b$ bins is generated using $\acc$. In this case, $b$ needs to be even, so that $a_i$  accounts for the fraction of positive instances in bins above 0.5 (i.e., when $\frac{b}{2}< i\leq b$), while the fraction of positive instances for bins below 0.5 (i.e., when $1\leq i\leq\frac{b}{2}$) is instead estimated as $(1-a_i)$. This way, a sequence of (input, output) values $[(0,0), (c_1, 1-a_1), \ldots,(c_{\frac{b}{2}},1-a_{\frac{b}{2}}), (c_{\frac{b}{2}+1},a_{\frac{b}{2}+1}), \ldots, (c_b,a_b),(1,1) ]$ for the calibration map is generated. We impose monotonicity and smooth the sequence, which is finally used to define a calibration map via interpolation as before.

\end{itemize}

% ---------------------------------------------------------
\subsection{Original New Methods for Calibration}
\label{sec:methods:new}

In this section, we propose two new calibration methods that gain inspiration from quantification methods.

\subsubsection{PacCal: a new calibration method based on PACC}

The first of the new calibration methods we propose gains inspiration from the ``probabilistic adjusted classify and count'' (PACC) quantification method \cite{Bella:2010kx} and is designed to calibrate a classifier under LS. Let us denote by $\hcal$ (instead of by $h$) our classifier, to emphasize the fact that it needs to be a probabilistic classifier (we do not make any assumption about whether it has been calibrated on the training distribution or not, though).
Let us briefly recap the rationale behind PACC, which rests on the following observation:
\begin{align*}
    Q(\tilde{Y}=1) &= Q(\tilde{Y}=1|Y=1) \cdot Q(Y=1)+ Q(\tilde{Y}=1|Y=0) \cdot Q(Y=0) \\
                &= P(\tilde{Y}=1|Y=1) \cdot Q(Y=1)+ P(\tilde{Y}=1|Y=0) \cdot (1-Q(Y=1) )
\end{align*}
\noindent where the replacement of the test class-conditional distributions of classifier predictions ($Q$) with the training ones ($P$) follows from Remark~\ref{remark:pps} (Section~\ref{sec:definitions}). It then follows that:
\begin{align*}
    Q(Y=1) &= \frac{Q(\tilde{Y}=1)-P(\tilde{Y}=1|Y=0)}{P(\tilde{Y}=1|Y=1)-P(\tilde{Y}=1|Y=0)} 
\end{align*}
The class prevalence estimate of PACC is attained by estimating $Q(\tilde{Y}=1)$ as the average of the posterior probabilities returned by $\hcal$ on the test instances (i.e., the expected value on the empirical test sample), and replacing the class-conditional distributions with estimates of the true positive rate ($\tpr$) and false positive rate ($\fpr$) of the classifier obtained on held-out validation data ($\valset$), i.e.:
\begin{align}
\label{eq:pacc}
    \hat{p}^{\text{PACC}} &= \frac{\mathbb{E}_{x\sim \teset}[\hcal(x)]-\fprhat}{\tprhat-\fprhat} 
\end{align}
Now, note that the adjustment implemented in Equation~\ref{eq:pacc} is linear; we can therefore equivalently rewrite it as an affine transformation with scaling factor $\beta=\frac{1}{\tprhat-\fprhat}$ and intercept $\gamma=-\frac{\fprhat}{\tprhat-\fprhat}$ as follows:
\begin{align}
    \hat{p}^{\text{PACC}} = \mathbb{E}_{x\sim \teset}[h(x)]\cdot \beta+\gamma = \mathbb{E}_{x\sim \teset}[h(x)\cdot \beta+\gamma]
\end{align}
This suggests that we can update the posterior probability of each of the test datapoints to account for the new prior using a linear transformation. This procedure thus yields a calibrated classifier:
\begin{equation}
    \hcal'(x)\ddef \hcal(x)\cdot\beta+\gamma
\end{equation}
However, the linear transformation is not guaranteed to lie in the $[0,1]$ interval. This is inherited from the well-known instability\footnote{Some methods from the quantification literature have been proposed to better handle these instabilities, including the T50, MAX, X, Median Sweep (MS), and its variant MS2; see \cite{Forman:2008kx}.} of PACC when the denominator of Equation~\ref{eq:pacc} is small. When this happens, we simply apply a sigmoid function ($\sigma$) after the linear transformation. 
We thus dub PacCal$^\sigma$ (short for Probabilistic Adjusted Classify and Count for Calibration). The extent to which this method provides well-calibrated posteriors, in the sense of a calibration-oriented loss, is later discussed in the experimental section.

\subsubsection{DMcal: a new calibration method based on HDy}
\label{sec:methods:new:dmcal}

The second original calibration method we propose is also based on a quantification method designed to work under LS conditions: the so-called Hellinger-Distance-y (HDy) method of~\cite{Gonzalez-Castro:2013fk}. 

HDy is a distribution-matching quantification method which relies on histograms of the posterior probabilities returned by our classifier for representing the class-conditional densities of the positive (histogram $H^{\oplus}_{val}$) and negative (histogram $H^{\ominus}_{val}$) examples in the validation set $\valset$. At inference time, the histogram $H_{te}$ of the posterior probabilities of the test datapoints is generated, and the mixture parameter $p$ that yields the closest match between the ``validation mixture'' ($H^{\oplus}_{val}$ and $H^{\ominus}_{val}$) and the test distribution ($H_{te}$), in terms of the Hellinger Distance (HD), is returned as the class prior estimate, i.e.:
\begin{equation}
    \hat{p}^{\text{HDy}}=\arg\min_{p\in[0,1]} \text{HD}\left(\left(p H^{\oplus}_{val} + (1-p)H^{\ominus}_{val}\right), H_{te}\right)
\end{equation}
Given two histograms of densities $H^A=(H^A_1,\ldots,H^A_b)$ and $H^B=(H^B_1,\ldots,H^B_b)$ of $b$ bins, the Hellinger Distance is defined as:
\begin{equation*}
    \operatorname{HD}(H^A,H^B) = \sqrt{1-\sum_{i=1}^b \sqrt{H^A_i \cdot H^B_i}} 
\end{equation*}

\begin{figure}[b!]
    \centering
    \includegraphics[width=0.32\linewidth]{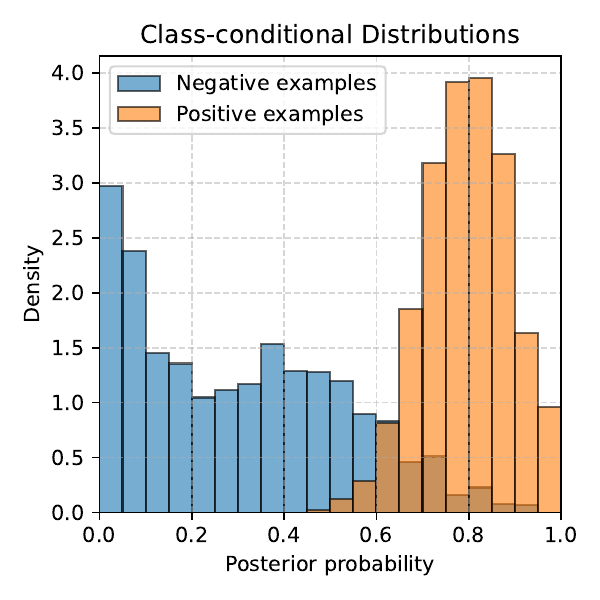}
    \includegraphics[width=0.32\linewidth]
    {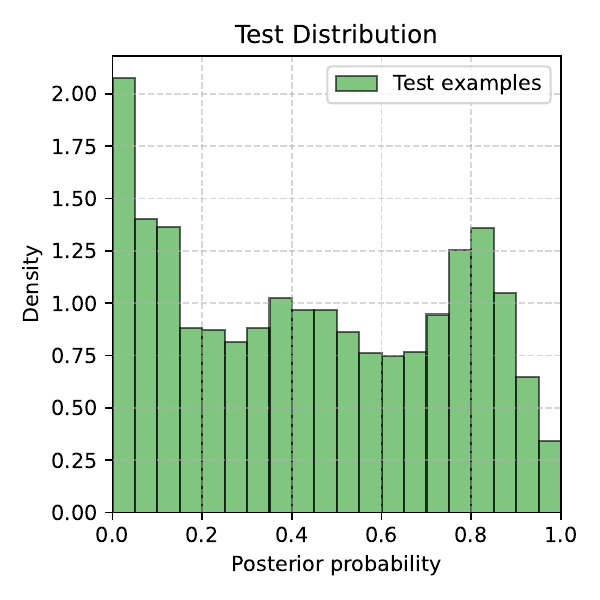}
    \includegraphics[width=0.32\linewidth]
    {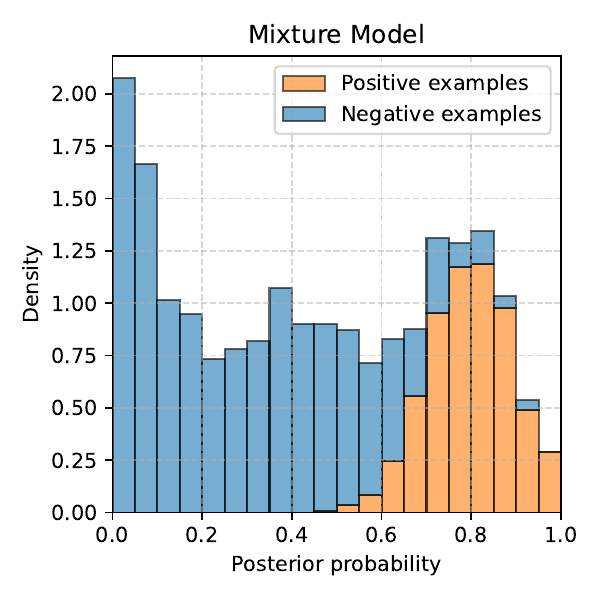}
    \includegraphics[width=0.32\linewidth]{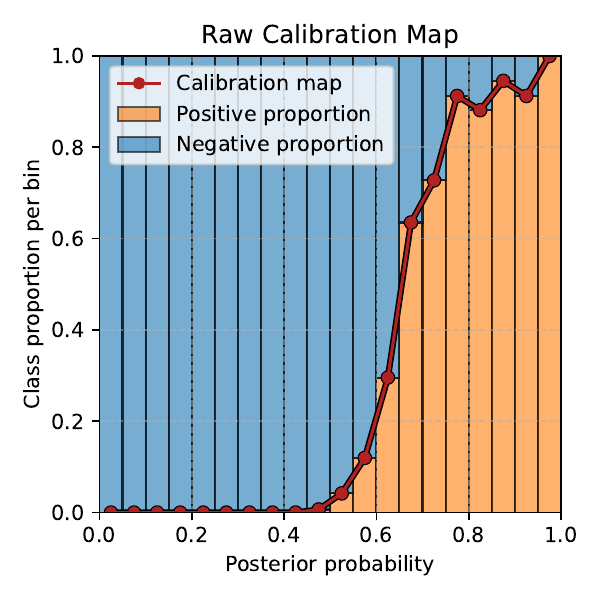}
    \includegraphics[width=0.32\linewidth]{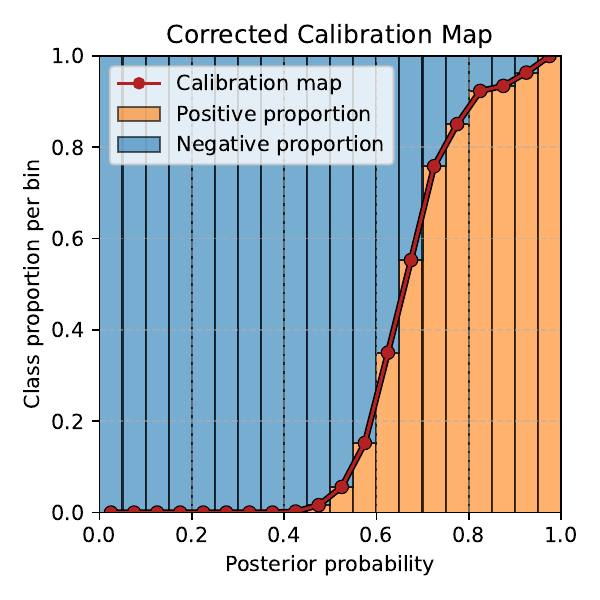}    
    \caption{Graphical explanation of DMcal. First row: The left-most panel shows the class-conditional distributions, represented as histograms of posterior probabilities ($H_{val}^{\ominus}$ in blue, and $H_{val}^{\oplus}$ in orange) modelled on validation data. The central panel displays the test histogram ($H_{te}$). The right-most panel displays the mixture model of $H_{val}^{\ominus}$ and $H_{val}^{\oplus}$ obtained with the mixture parameter $p=0.3$ which yields the closest mixture to the test histogram in terms of HD. Second row: the left-most panel shows a ``raw'' calibration map computed on the proportion of positive and negative contributions of each bin of the mixture model; the right-most panel shows the corrected calibration map after imposing monotonicity and smoothing the raw calibration map.}   
    \label{fig:DMcal_working}
\end{figure}

Once the test prevalence of the positive instances has been estimated, we can interpret the proportion of positives within each bin of the mixture as a calibrated value for that bin. More formally, the calibrated output $v_i$ for the $i$-th bin is given by:
\begin{equation}
\label{eq:calvalue}
    v_i=\frac{\hat{p}^{\text{HDy}} \cdot H^{\oplus}_{val,i}}{\hat{p}^{\text{HDy}} \cdot H^{\oplus}_{val,i} + (1-\hat{p}^{\text{HDy}}) \cdot H^{\ominus}_{val,i}}
\end{equation}
To enable continuous calibration outputs and avoid returning at most $b$ distinct output values, the calibration map is defined as a piecewise linear interpolation over the sequence of points $[(0,0),(c_1,v_1),\ldots,(c_b,v_b),(1,1)]$, where $c_i$ denotes the center of the $i$-th bin and $v_i$ the associated calibrated value obtained with Equation~\ref{eq:calvalue}.
Finally, to avoid artifacts due to random sampling (i.e., jagged outputs), the sequence of calibrated values is post-processed to ensure monotonicity and then smoothed before defining the final calibration map.
We call this method DMcal, for Distribution-Matching based Calibration; the whole process behind DMcal is illustrated in Figure~\ref{fig:DMcal_working}.

Note that this method is non-parametric, i.e., it makes no assumption about the underlying distributions (other than assuming that the underlying distributions are related to each other through LS).
%he only assumptions it makes about the is that the calibration map should be monotonically increasing and smooth. 
DMcal is close in spirit to empirical binning \cite{empiricalbinning:zadrozny:2001}, with the key difference that the proportion of positives in each bin is taken from a mixture model re-parameterized with an estimated prior, and that the calibration map is post-processed to allow for continuous, monotonically increasing, and smooth outputs. 
%DMcal also bears some resemblance to the Calibrate-Extrapolate framework \cite{calibrateEstrapolateWu2024}, although the process is inverted, i.e., we estimate the test prior with the aim of recalibrating a classifier for a new distribution, while their goal was instead that of recalibrating a classifier (using a portion of target labelled data) to then estimate the class prevalence values of the test set.

% ---------------------------------------------------------
% \clearpage
\section{Experiments}
\label{sec:experiments}

This section presents our experimental evaluation of the newly proposed methods from Section~\ref{sec:methods}. These methods, inspired by approaches developed for other tasks, are assessed in terms of their ability to compete with techniques natively designed for the target problem. The section is structured into three parts: calibration (Section~\ref{sec:exp:calibration}), quantification (Section~\ref{sec:exp:quantification}), and classifier accuracy prediction (Section~\ref{sec:exp:cap}).

Before discussing the results we have obtained, we first describe the reference methods we choose (Section~\ref{sec:exp:reference}), and how we simulate CS (Section~\ref{sec:exp:covshift}) and LS (Section~\ref{sec:exp:pps}) in our experiments.

\subsection{Reference Methods}
\label{sec:exp:reference}
% The methodology we follow in our experiments consists of confronting three relevant reference methods from each problem, with adaptations of  reference methods from the other two problems. In addition to the 9 reference methods (3 problems $\times$ 3 methods), in some cases we also consider additional reference methods which may enrich the comparison in specific types of dataset shift, and a baseline method which is representative of an IID-based solution for each problem. The reference methods we choose from each problem are listed below.

The methodology we follow in our experiments consists of confronting three representative reference methods from each problem area against adaptations of reference methods originally proposed for the other two problems. In addition to these 9 reference methods (3 problems $\times$ 3 methods), we also consider, in some cases, additional reference methods that enrich the comparison under specific types of dataset shift, as well as a baseline method representative of a standard IID-based solution for each problem.
The reference methods selected from each problem are listed below.

\begin{itemize}

    \item As our reference methods from calibration, we select CPCS \cite{cpcs2020icml} and TransCal \cite{transcal2020}, which are specialized for CS, and LasCal \cite{popordanoska2024lascal}, which is specialized for LS. As an additional reference method for the calibration experiments, we also consider the Head2Tail \cite{head2tail2023} calibrator which was originally described as a method for CS, but that, for reasons discussed in Section~\ref{sec:related:calibration}, we believe is more suited for LS. For all these methods, we relied on the implementations made available by the authors of LasCal.\footnote{ \url{https://github.com/tpopordanoska/label-shift-calibration}. This repository builds on top of previous implementations including the \texttt{abstention} package for calibration  \url{https://github.com/kundajelab/abstention}, Jiahao Chen's calibration package for Head2Tail \url{https://github.com/JiahaoChen1/Calibration}, or the TransCal package
    \url{https://github.com/thuml/TransCal}, among others.}
    As our IID baseline for calibration experiments, we consider Platt scaling \cite{Platt:2000fk} and relied on the implementation of \texttt{scikit-learn}. 
    
    \item The three reference methods we choose for quantification are PACC \cite{Bella:2010kx}, EMQ \cite{Saerens:2002uq}, and KDEy \cite{KDEyMoreo:2025}, three well-established methods for LS (Section~\ref{sec:related:quantification}). In some cases, we also consider PCC \cite{Bella:2010kx} as an additional reference method for CS, and calibrated variants of EMQ \cite{Alexandari:2020dn}.
    The IID solution for quantification experiments is the naive ``classify and count'' (CC) approach \cite{Forman:2005fk}. In all cases, we rely on the implementations provided by the \texttt{QuaPy} library\footnote{\url{https://github.com/HLT-ISTI/QuaPy}} for quantification \cite{QuaPy:2021bs}, and leave all hyperparameters set at their default values.
    
    \item For classifier accuracy prediction, we consider ATC \cite{Garg:2022qv}, DoC \cite{Guillory:2021so}, and LEAP \cite{Volpi:2024ye} as our reference methods (Section~\ref{sec:related:cap}). The former two methods can deal with both covariate and LS, while the latter is instead bounded to LS. DoC requires a protocol for generating validation samples out of $\valset$. To this aim, we generate 100 samples $\valset^1,\ldots,\valset^{100}$ of $|\valset^{i}|=250$ instances each using the APP protocol in experiments addressing LS (this protocol is later described in Section~\ref{sec:exp:pps:protocol}). Since the entire validation set comes from the training distribution we are not allowed to mimic the CS sampling generation protocol (which necessitates from labelled examples of the test distribution --Section~\ref{sec:exp:covshift:protocol}); in this case, we simply draw 100 random samples, of 250 instances each, out of $\valset$.
    As our IID baseline, we consider a method that estimates the classifier's accuracy on the held-out validation data $\valset$. We dub this method ``Naive'' because $\valset$ comes from the training distribution, and so the method is  agnostic to the presence of dataset shift. We have adapted the implementations of these methods available in the repository of LEAP.\footnote{\url{https://github.com/lorenzovolpi/LEAP} This repository also reuses the original code for the method ATC available at \url{https://github.com/saurabhgarg1996/ATC_code/}}
    
\end{itemize}

The new methods PacCal$^\sigma$ and DMCal rely on the implementations of PACC and HDy, respectively, which are available in \texttt{QuaPy}. For DMCal, we set the number of bins to 8, which is the default value for the underlying quantifier HDy in \texttt{QuaPy}.

The code that reproduces all the experiments, and implements the adaptation methods as well as the new methods PacCal$^\sigma$ and DMCal, is available online at \url{https://github.com/AlexMoreo/UnifyingProblems}.

\subsection{Experiments Simulating Covariate Shift}
\label{sec:exp:covshift}

\subsubsection{Sample Generation Protocol}
\label{sec:exp:covshift:protocol}
The protocol we use to simulate CS on real data consists of selecting two classification datasets, $A$ and $B$, each annotated with binary labels that are comparable across datasets. Each dataset is split into a training set ($\trset^A$, $\trset^B$) and a test set ($\teset^A$, $\teset^B$). The training set of $A$ is further divided into a classifier training set ($\hset^A$) and a validation set ($\valset^A$).
A classifier $h$ is then generated on $\hset^A$ from dataset $A$, which plays the role of the source domain.
We then construct a sequence of 100 test sets $\teset^1, \ldots, \teset^{100}$ of $|\teset^i|=250$ instances each, as a progressively interpolated mixture of the two test sets $\teset^A$ and $\teset^B$. 
Specifically, for each $i\in\{1,\ldots,100\}$, the test set $\teset^i$ consists of $n_A^{(i)}=\lceil 250 \cdot \left(1-\frac{i-1}{99}\right)\rceil$ instances randomly drawn from $\teset^A$, and $n_B^{(i)}=250-n_A^{(i)}$ instances randomly drawn from $\teset^B$. 
Note that every test set $\teset^i$ is constructed from scratch, and not as a modification of the previous one $\teset^{i-1}$, i.e., the 
the instances drawn from $A$ and $B$ in any two test sets are independent of each other and result from separate random draws.
Since $A$ and $B$ originate from different, yet related, domains, this protocol effectively simulates varying levels of CS. The first set $\teset^1$, composed entirely of instances instances from the source domain $A$, conforms to the IID assumption (i.e., there is no shift), whereas the last set $\teset^{100}$, composed entirely of instances from the target domain $B$, represents the highest level of CS.
For each test set $\teset^i$, the model under evaluation (i.e., for calibration, quantification, or classifier accuracy prediction) is given access to the classifier ($h$) and the validation set ($\valset^A$) from the source domain $A$.

%We ensure consistency in the samples across all experiments, so that all methods are evaluated on identical training, validation, and test sets.
We keep sample consistency across all experiments, in order to guarantee all methods are tested on the exact same training, validation, and test sets.

\subsubsection{Datasets}

As our datasets for CS experiments we choose three datasets of reviews for sentiment classification: IMDb\footnote{Data available at \url{https://huggingface.co/datasets/stanfordnlp/imdb}} (\texttt{imdb}) \cite{imdb}, Rotten Tomatoes\footnote{Data available at \url{https://huggingface.co/datasets/cornell-movie-review-data/rotten_tomatoes}} (\texttt{rt}) \cite{rt}, and Yelp Reviews\footnote{Data available at \url{https://huggingface.co/datasets/Yelp/yelp_review_full}} (\texttt{yelp}) \cite{yelp}. 
%The three domains are \emph{related}, since they all consists of datasets of reviews for sentiment classification. 
IMDb and Rotten Tomatoes already come with binary labels. For the Yelp Reviews we convert the 5-star ratings into binary labels by removing all instances labelled with 3 stars, and then treating 1 and 2 stars as negative reviews and 4 and 5 stars as positive reviews.
The three datasets are related, as they all contain opinion data, yet they differ in various aspects. Although IMDb and Rotten Tomatoes both consists of movie reviews, the reviews in IMDb are generated by registered users and are typically longer than those from Rotten Tomatoes, which are instead short reviews generated by professional critics. The Yelp Reviews is more different, as its reviews are not about movies, but rather about different businesses. 
All datasets are perfectly balanced; some statistics of the datasets we use (after filtering) are summarized in Table~\ref{tab:data:cs}.

\begin{table}[tbh]
    \caption{Statistics of the sentiment datasets used in our CS experiments} 
    \vspace{0.5cm}
    \centering
    \begin{tabular}{cccc}
       \toprule
       Dataset  & \#Training & \#Validation & \#Test \\ \midrule
       IMDb (\texttt{imdb}) & 25K & --- & 25K \\
       Rotten Tomatoes (\texttt{rt})  & 8.53K & 1.07K & 1.07K \\
       Yelp Reviews (\texttt{yelp}) & 520K & --- & 40K \\ \bottomrule
    \end{tabular}    
    \label{tab:data:cs}
\end{table}

When we apply the sample generation protocol discussed above, the three datasets take turns in a ``round robin'' fashion, acting as either the source or the target domain. For example, ``\texttt{yelp}$\rightarrow$\texttt{rt}'' denotes an experiment where the Yelp Reviews is used as the source domain and test samples are generated by mixing documents from the target domain Rotten Tomatoes. Since we are interested in simulating CS, we exclude combinations in which the same dataset plays both the role of source and target domain; we thus end up considering a total of six different source-target combinations.

\subsubsection{Classifiers}

As our classifiers, we consider three well-known pre-trained language models available at Huggingface\footnote{\url{https://huggingface.co/}}: 
\texttt{google-bert/bert-base-uncased} (hereafter BERT) \cite{devlin2019bert},
\texttt{distilbert/distilbert-base-uncased} (hereafter DistilBERT) \cite{sanh2019distilbert}, and
\texttt{FacebookAI/roberta-base} (hereafter RoBERTa) \cite{liu2019roberta}.

In all cases, we fine-tune the classification head of each pre-trained model on the available training data for 5 epochs, using a learning rate of 5E-4 and a batch size of 64 documents. 
In order to prevent overfitting, we apply early stop after 5 consecutive evaluation steps (we trigger an evaluation round every 500 training steps) showing no improvement in terms of classification $F_1$ in validation data. When the validation set is not available (e.g., in IMDb and Yelp Reviews), we extract a held-out sample of 5,000 documents from the training set with stratification.

\subsection{Experiments Simulating Label Shift}
\label{sec:exp:pps}

\subsubsection{Sample Generation Protocol}
\label{sec:exp:pps:protocol}
To simulate LS, we adopt the so-called artificial-prevalence protocol (APP), a standard protocol commonly used in quantification evaluation (see \cite{Lequa2022Overview} for further details). Given a test set $\teset$, we generate a series of test samples $\teset^1,\ldots,\teset^{100}$ each consisting of $|\teset^i|=250$ instances. For each sample $\teset^i$, the APP protocol first draws a target prevalence value $p^{(i)}$, uniformly at random from the interval $[0,1]$, which represents the desired prevalence of the positive class. Then, it draws $n_{\oplus}^{(i)}=\lceil p^{(i)}\cdot250\rceil$ \emph{positive} and $n_{\ominus}^{(i)}=250-n_{\oplus}^{(i)}$ \emph{negative} instances, uniformly at random from $\teset$. Regardless of the original training prevalence, this protocol ensures the generation of test samples with varying levels of LS, ranging from low shift (when $p^{(i)}$ is close to the training prevalence), to high shift (this happens when the training prevalence is close to 0 or 1 and the test prevalence $p^{(i)}$ is chosen to be close to 1 or 0, respectively).

We keep sample consistency across all experiments, in order to guarantee all methods are tested on the exact same training, validation, and test sets.

% \subsection{Datasets}
% \label{sec:exp:datasets}

% As the datasets on which we test our methods we choose the following ones, depending on type of shift.

\subsubsection{Datasets}
For the LS experiments, we consider the 10 largest binary classification datasets from the UCI Machine Learning repository\footnote{Data available at \url{https://archive.ics.uci.edu}} that have been used in past quantification research (e.g., \cite{Perez-Gallego:2017wt,QuaPy:2021bs}). 
The final selection consists of datasets 
\texttt{cmc.3}, 
\texttt{yeast}, 
\texttt{semeion}, 
\texttt{wine-q-red}, 
\texttt{ctg.1}, 
\texttt{ctg.2}, 
\texttt{ctg.3}, 
\texttt{spambase}, 
\texttt{wine-q-white}, and
\texttt{pageblocks.5}.
The dataset size ranges from a minimum of 1,473 (\texttt{cmc.3}) to a maximum of 5,473 (\texttt{pageblocks.5}). The datasets consist of tabular data, and the number of features ranges from a minimum of 8 (\texttt{yeast}) to a maximum of 256 (\texttt{semeion}).
The selection includes imbalanced datasets (e.g., \texttt{pageblocks.5} contains only 2.1\% of positive instances, and \texttt{ctg.1} contains no less than 77,8\% of positives) and almost perfectly balanced ones (e.g., \texttt{wine-q-red}).
We fetch these datasets using the \texttt{QuaPy} package, which also applies a standardization to the feature columns.
Further details about the datasets can be consulted online.\footnote{\url{https://hlt-isti.github.io/QuaPy/manuals/datasets.html#binary-datasets}}

These datasets do not come with predefined data partitions. For each dataset, we thus generate a test partition $\teset$ consisting of 30\% of the whole data with stratification; the rest of the dataset is randomly split into a training set $\trset$ and a validation set $\valset$ of equal size. All splits are generated with stratification.

\subsubsection{Classifiers}

As our classifiers for the tabular data we consider the following classifiers: 
\begin{itemize}
    \item Logistic Regression: is an example of a probabilistic classifier generating reasonably well-calibrated posterior probabilities (not in vain, the logistic function is used for calibration purposes \cite{Platt:2000fk}), and has almost become a standard choice for the surrogate classifier in quantification experiments (see, e.g., \cite{Schumacher2021, QuaPy:2021bs}).
    \item Naïve Bayes: despite being a probabilistic classifier, is a typical example of an ill-calibrated classifier \cite{domingos1996beyond,Zadrozny:2001yg}. This classifier is interesting because it is widely used, partly due to its good efficiency.
    \item $k$ Nearest Neighbor: is a representative example of instance-based learning. We set uniform weights, so that this method becomes representative of a probabilistic classifier with a fixed, limited number of possible  outcomes. We set $k=10$ so that no more than $k+1=11$ different posterior probabilities (i.e., $\{0,\frac{1}{10},\ldots,\frac{9}{10},1\}$, the possible fractions of positive instances in the neighbourhood) can be returned. 
    \item Multi-Layer Perceptron: is representative of neural approaches, which are known to be very effective in terms of classification accuracy, but that tend to provide overconfident judgments \cite{Guo:2017hh}.
\end{itemize}

In all cases, we rely on the implementations of \texttt{scikit-learn} \cite{Pedregosa:2011yo} with any other hyperparameter left to its default value.

% IMDb consists of 25,000 training and 25,000 test movie reviews labelled according to binary sentiment \cite{imdb}. 
% The Rotten Tomatoes (rt) is another dataset of movie reviews  \cite{rt}
% Yelp-reviews (yelp) consists of user-generated reviews for different businesses labelled according to a 5-stars rating; we convert them to binary labels by filtering out 3-stars and taking 1 and 2 stars as negative reviews, and 4 and 5 stars as positive reviews \texttt{yelp} \cite{yelp}.

% \subsection{Classifiers}
% \label{sec:exp:classifiers}

% \subsection{Non classified}
% \comment{crisp decisions : hard-thresholded predictions}
% decir: tipos de problemas diferentes? tabular vs. text, traditional ML vs. deep learning, other ?

% evaluation metrics in each section?

% Calibration:
% EMQ with other calibrations, e.g., $\text{EMQ}^{\text{LasCal}}$, tal vez mejor especificar las combinaciones en los experimentos.

% $\text{PCC}^{\text{cal}}$ is proposed by \cite{Card:2018pb}...
% (about MLPE) although this does not correspond
% to an interesting prediction (as it only says the fu-
% ture will always look exactly like the past)

% \comment{model selection hyperparameters}

% \comment{Para simular CS tal vez pueda hacer como estos \url{https://arxiv.org/pdf/2212.02742}}

% \comment{Para el metodo HDC decir que el número de bins es primo relativo con el número de bins de ECE on purpose, para evitar que los bin edges coincidan y que esto pueda beneficiar al método. Habría que tomar la misma consideración en los métodos Bin de quantificacion y de Cap, que al momento no es así}

\subsection{Results and Discussion}

In this section, we discuss the results we have obtained for calibration (Section~\ref{sec:exp:calibration}), quantification (Section~\ref{sec:exp:quantification}), and classifier accuracy prediction (Section~\ref{sec:exp:cap}) experiments. 

Each experiment is subdivided into two experiments, one addressing CS, and another addressing LS. We display the numerical results in tabular form, using the following notational conventions: We use a color-coding to facilitate the interpretation of results, with intense green indicating the best result for each row, and intense red indicating the worst one; the rest of the values are linearly interpolated between these two. We highlight in bold the best result for each dataset as well as all the results which are not found to be statistically significantly different from it according to a Wilcoxon signed-rank test at 95\% confidence level. 

We also report, in the bottom of the tables, the \emph{win rates} (Wins) of each method with respect to the reference ones; we use $\Pr(M\succ iR)$ to indicate the percentage of all test samples (i.e., across all datasets and classifiers) in which the corresponding method has beaten at least $i$ reference methods simultaneously, for $i\in\{1,\ldots,r\}$ with $r$ the total number of reference methods. For example, $\Pr(M\succ 2R)=50\%$ indicates the method $M$ has beaten two (or more) reference methods (not necessarily the same two methods every time) in half of the cases.
When such value is accompanied with a $\dag$ symbol, it means that the win rate is statistically significant, according to a binomial test at 5\% confidence level. This test considers the null hypothesis $H_0$ that the results of method $M$ and those from the reference methods are indistinguishable, in which case the win rate should account for the fraction of times $M$ happens to rank better than $i$ reference methods in a random permutation. The null hypothesis is rejected when the total number of times ($s$) in which $M$ has beaten at least $i$ reference methods across a total of $N$ experiments surpasses the expected probability that this would happen simply due to chance (i.e., $1-\frac{i}{r+1})$ with $r$ the number of reference methods), at 95\% confidence level. For example, when $r=4$ reference methods, we would expect that any method $M$ beats exactly 3 reference methods when it ranks either first or second, and this happens in a random permutation $2/5$ of the times. More formally, this test checks whether $1-\text{BinomialCDF}(s-1,N,1-\frac{i}{r+1})<0.05$. 

We also highlight in bold the $r$ best average ranks in the entire table. The rationale behind this is that, should our reference methods outperform all newly proposed candidates (as one might expect), then all bolded values would be attributed to the reference methods, and none to the new candidates. Cases in which this does not happen are therefore worth analyzing.

We also analyse, in graphical form, the error as a function of the \emph{intensity of shift}. In CS experiments, in which the test samples are generated as a mixture of source and target datasets (see Section~\ref{sec:exp:covshift:protocol}), we define this intensity as the fraction of examples drawn from the target dataset, so that the intensity ranges from 0 (meaning IID conditions) to 1 (all the examples in the test set come from a different domain).
In LS experiments, we measure the intensity of shift as the absolute difference between the positive prevalence in the test sample and in the training set. The intensity thus ranges from 0 (IID conditions) to values close to 1 (cases in which the proportion of positives in the training set is close to 0 but in the test is close to 1, or viceversa); note that this condition is only attainable in extreme situations, e.g., for a perfectly balance training set, the maximum achievable shift intensity is 0.5. This has the effect of generating less experiments in the high-level shift regime. 
This is in contrast to the level of shift generated for CS experiments, which is uniform in the interval [0,1].
In order to render this fact evident, we display, as a background bar-chart, the density of the number of experiments that concur at each level of shift. 

Finally, for each experiment, we submit all the methods to a multiple comparison test. For this purpose, we rely on Critical Difference diagrams (CD-diagrams)\footnote{We use the software \url{https://mirkobunse.github.io/CriticalDifferenceDiagrams.jl/stable/}} \cite{demsar2006statistical}. Following \cite{benavoli2016should}, we adopt the Wilcoxon signed-rank test for the post-hoc assessment of pairwise differences. We also apply a Holm correction and set the significance level to 0.05.
Since the protocols generate test sets that are consistently aligned across all experiments, and since each one is affected by a different intensity of (prior or covariate) shift, we decided to submit all scores to the statistical test, instead of averaging them by dataset, in order to preserve the variability introduced by the sampling generation protocols and allow for a more robust comparison in this respect.

% In our experimental setup, each dataset is used to generate N test samples, where we introduce varying levels of dataset shift. These test samples are consistent across all methods, meaning that each method is evaluated on the exact same samples in the same order. Given this setup, we consider each test sample as a distinct evaluation scenario rather than a simple repetition.
% For the critical difference diagram, we use all D×N scores directly rather than averaging within each dataset. This approach ensures that the rankings reflect performance across a diverse range of dataset shifts rather than aggregating potentially different behaviors into a single value per dataset. By preserving the variability introduced by dataset shifts, we obtain a more robust comparison of the methods.

\subsubsection{Calibration Experiments}
\label{sec:exp:calibration}

The evaluation metric we use for assessing the calibration performance is the L2 Expected Calibration Error (ECE) \cite{popordanoska2024lascal}, given by:

\begin{align}
    \text{ECE}&=\sum_{i=1}^b\frac{|B_i|}{n} \left( \text{frac}_{\text{pos}}(B_i)-\text{conf}(B_i) \right)^2 \\
    \text{frac}_{\text{pos}}(B_i)&=\frac{1}{|B_i|}\sum_{j\in B_i} \mathds{1}[\;y_j=1\;]\\
    \text{conf}(B_i)&=\frac{1}{|B_i|}\sum_{j\in B_i}\hcal(x_j)
\end{align}

\noindent where $b$ is the number of bins, $B_i$ is a set containing the indexes of the instances assigned to the $i$th bin, and $\mathds{1}$ is the indicator function. In our experiments, we set $b=15$ following \cite{popordanoska2024lascal}, but adopt isometric binning (instead of adaptative binning), in order to guarantee that all methods are evaluated with the exact same bin divisions.\footnote{We have taken care to ensure that none of the binning-based methods use $b=15$, in order to prevent any unintended information leak that could illegitimately favour any method in terms of ECE.} We report 100 $\times$ ECE in our experiments. %\comment{Show Brier Score in appendix?} \comment{Could be interesting to present ECE as the difference between $\quant(B_i)-\hat{p}^{\text{PCC}}$}

We confront our reference calibration methods (Head2Tail, CPCS, TransCal, and LasCal) against
our reference quantification systems (PACC, EMQ, KDEy) properly converted into calibration methods through the adaptation $\quant2\calib$ using 5 bins. In CS experiments, we also apply $\quant2\calib$ to the PCC quantification method, which is known to fare better under this type of shift \cite{Tasche:2022hh,Gonzalez:2024cs}. Analogously, we apply the $\acc2\calib$ method to generate direct adaptations of our reference classifier accuracy predictors (ATC, DoC, LEAP) setting the number of bins to 6.\footnote{We use 5 bins (an odd number) for $\quant2\calib$ and 6 bins (an even number) for $\acc2\calib$ to illustrate the fact that the former works with any number of bins, while the latter requires an even number. We did not perform model selection to optimize this hyperparameter.} 
%In CS experiments, we propose $\text{LEAP}^{\text{PCC}-\text{6B}}_{\acc2\calib}$, the adaptation of a new variant of LEAP which instead of relying on a surrogate quantifier specialized for prior probability shift (the original method uses KDEy), relies on PCC which, as discussed above, is expected to be more reliable under CS.
In CS experiments, we propose $\text{LEAP}^{\text{PCC}-\text{6B}}_{\acc2\calib}$, a variant of LEAP that replaces the original prior-shift-oriented quantifier (KDEy) with PCC, which is expected to be more reliable under CS.
We also consider EMQ \cite{Saerens:2002uq} and the variant EMQ$^{\text{BCTS}}$ \cite{Alexandari:2020dn} that previously applies an additional calibration round based on Bias-Corrected Temperature Scaling (BCTS). Based on the same principle, we explore two new configurations: EMQ$^{\text{TransCal}}$ in CS experiments and EMQ$^{\text{LasCal}}$ in LS experiments, i.e., two new methods that, in place of BCTS, apply a calibrator specialized for CS (TransCal) or LS (LasCal), respectively. Finally, we also test PacCal$^\sigma$ and DMCal, the two new calibration methods proposed in Section~\ref{sec:methods:new}.

The results we have obtained for CS are reported in Table~\ref{tab:calib:covshift}. Somehow surprisingly, the reference methods are not excelling in this evaluation, nor even CPCS or TransCal, which were proposed for addressing CS. Among the four reference methods, the best one in terms of average ranking is TransCal, although Head2Tail obtains a higher number of best results (or results that are not statistically significantly different from the best one), which happens in 7 out of 18 cases. None of the reference methods surpass the result of the IID baseline Platt scaling, though. Additionally, a quick look at the win rates reveal that several methods (including Platt scaling) beat the four reference methods simultaneously and with statistical significance. In what follows we conjecture the possible reasons behind this unexpected outcome.

\begin{table}[tb!]
    \caption{Calibration performance under CS in terms of ECE}
    \vspace{0.5cm}
    \centering
    \resizebox{\textwidth}{!}{%
    \begin{tabular}{cc|c|cccc|ccccccccccccc} \toprule
\multicolumn{2}{c}{} & \multicolumn{1}{c|}{Baselines} & \multicolumn{4}{c|}{Reference} & \multicolumn{13}{c}{Adapted Methods} \\
\multicolumn{2}{c}{} & \begin{sideways}Platt\;\end{sideways} & \begin{sideways}Head2Tail\;\end{sideways} & \begin{sideways}CPCS\;\end{sideways} & \begin{sideways}TransCal\;\end{sideways} & \begin{sideways}LasCal\;\end{sideways} & \begin{sideways}$\text{PCC}^{5\text{B}}_{\rho 2\zeta}$\;\end{sideways} & \begin{sideways}PACC$_{\rho 2\zeta}^{5\text{B}}$\;\end{sideways} & \begin{sideways}EMQ$_{\rho 2\zeta}^{5\text{B}}$\;\end{sideways} & \begin{sideways}KDEy$_{\rho 2\zeta}^{5\text{B}}$\;\end{sideways} & \begin{sideways}ATC$_{\alpha 2\zeta}^{6\text{B}}$\;\end{sideways} & \begin{sideways}DoC$_{\alpha 2\zeta}^{6\text{B}}$\;\end{sideways} & \begin{sideways}LEAP$_{\alpha 2\zeta}^{6\text{B}}$\;\end{sideways} & \begin{sideways}LEAP$_{\alpha 2\zeta}^{\text{PCC}-6\text{B}}$\;\end{sideways} & \begin{sideways}EMQ\;\end{sideways} & \begin{sideways}EMQ$^{\text{BCTS}}$\;\end{sideways} & \begin{sideways}EMQ$^{\text{TransCal}}$\;\end{sideways} & \begin{sideways}PacCal$^{\sigma}$\;\end{sideways} & \begin{sideways}DMCal\;\end{sideways} \\\midrule
\multirow{6}{*}{\begin{sideways}BERT\;\end{sideways}} & \texttt{imdb}$\rightarrow$\texttt{rt} & $0.826$\cellcolor{green!38} & $2.450$\cellcolor{green!29} & $2.966$\cellcolor{green!26} & $1.623$\cellcolor{green!34} & $0.893$\cellcolor{green!38} & $0.873$\cellcolor{green!38} & $1.013$\cellcolor{green!37} & $1.682$\cellcolor{green!33} & $2.066$\cellcolor{green!31} & $0.842$\cellcolor{green!38} & $\textbf{0.584}$\cellcolor{green!40} & $1.006$\cellcolor{green!37} & $\textbf{0.626}$\cellcolor{green!39} & $3.863$\cellcolor{green!21} & $2.507$\cellcolor{green!29} & $14.849$\cellcolor{red!40} & $0.924$\cellcolor{green!38} & $2.251$\cellcolor{green!30} \\
 & \texttt{imdb}$\rightarrow$\texttt{yelp} & $\textbf{0.587}$\cellcolor{green!40} & $3.026$\cellcolor{red!1} & $2.314$\cellcolor{green!10} & $2.051$\cellcolor{green!15} & $0.743$\cellcolor{green!37} & $1.047$\cellcolor{green!32} & $0.699$\cellcolor{green!38} & $1.038$\cellcolor{green!32} & $0.875$\cellcolor{green!35} & $\textbf{0.634}$\cellcolor{green!39} & $\textbf{0.611}$\cellcolor{green!39} & $1.191$\cellcolor{green!29} & $0.886$\cellcolor{green!34} & $1.867$\cellcolor{green!18} & $3.141$\cellcolor{red!3} & $5.325$\cellcolor{red!40} & $\textbf{0.638}$\cellcolor{green!39} & $\textbf{0.630}$\cellcolor{green!39} \\
 & \texttt{rt}$\rightarrow$\texttt{imdb} & $\textbf{0.586}$\cellcolor{green!40} & $3.936$\cellcolor{green!29} & $3.345$\cellcolor{green!31} & $2.206$\cellcolor{green!34} & $0.724$\cellcolor{green!39} & $1.316$\cellcolor{green!37} & $\textbf{0.590}$\cellcolor{green!39} & $1.561$\cellcolor{green!36} & $1.122$\cellcolor{green!38} & $\textbf{0.622}$\cellcolor{green!39} & $0.755$\cellcolor{green!39} & $0.744$\cellcolor{green!39} & $\textbf{0.647}$\cellcolor{green!39} & $25.951$\cellcolor{red!40} & $16.198$\cellcolor{red!9} & $25.951$\cellcolor{red!40} & $1.980$\cellcolor{green!35} & $5.241$\cellcolor{green!25} \\
 & \texttt{rt}$\rightarrow$\texttt{yelp} & $\textbf{0.636}$\cellcolor{green!39} & $3.606$\cellcolor{green!30} & $3.336$\cellcolor{green!31} & $2.214$\cellcolor{green!34} & $0.764$\cellcolor{green!39} & $1.224$\cellcolor{green!37} & $0.693$\cellcolor{green!39} & $0.981$\cellcolor{green!38} & $1.742$\cellcolor{green!36} & $\textbf{0.574}$\cellcolor{green!40} & $0.813$\cellcolor{green!39} & $0.747$\cellcolor{green!39} & $0.950$\cellcolor{green!38} & $25.156$\cellcolor{red!40} & $2.948$\cellcolor{green!32} & $25.156$\cellcolor{red!40} & $2.080$\cellcolor{green!35} & $1.716$\cellcolor{green!36} \\
 & \texttt{yelp}$\rightarrow$\texttt{imdb} & $1.467$\cellcolor{green!30} & $\textbf{0.782}$\cellcolor{green!39} & $2.632$\cellcolor{green!15} & $0.876$\cellcolor{green!38} & $2.642$\cellcolor{green!15} & $\textbf{0.767}$\cellcolor{green!40} & $1.179$\cellcolor{green!34} & $2.125$\cellcolor{green!21} & $2.592$\cellcolor{green!15} & $1.445$\cellcolor{green!30} & $0.964$\cellcolor{green!37} & $2.357$\cellcolor{green!18} & $1.689$\cellcolor{green!27} & $2.742$\cellcolor{green!13} & $2.581$\cellcolor{green!15} & $6.785$\cellcolor{red!40} & $1.430$\cellcolor{green!31} & $2.393$\cellcolor{green!18} \\
 & \texttt{yelp}$\rightarrow$\texttt{rt} & $1.787$\cellcolor{green!23} & $\textbf{0.861}$\cellcolor{green!40} & $2.954$\cellcolor{green!2} & $1.114$\cellcolor{green!35} & $3.992$\cellcolor{red!16} & $0.911$\cellcolor{green!39} & $1.414$\cellcolor{green!30} & $2.331$\cellcolor{green!13} & $2.852$\cellcolor{green!4} & $1.833$\cellcolor{green!22} & $1.163$\cellcolor{green!34} & $2.788$\cellcolor{green!5} & $1.839$\cellcolor{green!22} & $2.472$\cellcolor{green!11} & $2.554$\cellcolor{green!9} & $5.315$\cellcolor{red!40} & $1.345$\cellcolor{green!31} & $2.381$\cellcolor{green!12} \\\midrule
\multirow{6}{*}{\begin{sideways}DistilBERT\;\end{sideways}} & \texttt{imdb}$\rightarrow$\texttt{rt} & $1.076$\cellcolor{green!24} & $\textbf{0.729}$\cellcolor{green!38} & $1.560$\cellcolor{green!4} & $0.922$\cellcolor{green!30} & $1.923$\cellcolor{red!10} & $\textbf{0.700}$\cellcolor{green!40} & $1.140$\cellcolor{green!21} & $1.836$\cellcolor{red!6} & $1.854$\cellcolor{red!7} & $2.643$\cellcolor{red!40} & $0.827$\cellcolor{green!34} & $2.138$\cellcolor{red!19} & $0.936$\cellcolor{green!30} & $0.954$\cellcolor{green!29} & $0.873$\cellcolor{green!32} & $1.088$\cellcolor{green!24} & $1.052$\cellcolor{green!25} & $0.904$\cellcolor{green!31} \\
 & \texttt{imdb}$\rightarrow$\texttt{yelp} & $0.680$\cellcolor{green!28} & $0.830$\cellcolor{green!18} & $1.128$\cellcolor{red!2} & $0.901$\cellcolor{green!13} & $1.143$\cellcolor{red!3} & $0.618$\cellcolor{green!32} & $0.660$\cellcolor{green!29} & $1.000$\cellcolor{green!6} & $0.981$\cellcolor{green!7} & $1.352$\cellcolor{red!18} & $\textbf{0.546}$\cellcolor{green!37} & $1.180$\cellcolor{red!6} & $\textbf{0.586}$\cellcolor{green!35} & $\textbf{0.550}$\cellcolor{green!37} & $\textbf{0.517}$\cellcolor{green!40} & $1.467$\cellcolor{red!26} & $1.661$\cellcolor{red!40} & $\textbf{0.539}$\cellcolor{green!38} \\
 & \texttt{rt}$\rightarrow$\texttt{imdb} & $\textbf{0.657}$\cellcolor{green!39} & $2.756$\cellcolor{green!33} & $1.520$\cellcolor{green!37} & $1.073$\cellcolor{green!38} & $0.785$\cellcolor{green!39} & $\textbf{0.670}$\cellcolor{green!39} & $1.019$\cellcolor{green!38} & $1.528$\cellcolor{green!37} & $1.242$\cellcolor{green!38} & $1.344$\cellcolor{green!37} & $\textbf{0.637}$\cellcolor{green!39} & $1.463$\cellcolor{green!37} & $0.801$\cellcolor{green!39} & $22.108$\cellcolor{red!27} & $\textbf{0.624}$\cellcolor{green!40} & $25.951$\cellcolor{red!40} & $0.812$\cellcolor{green!39} & $\textbf{0.626}$\cellcolor{green!39} \\
 & \texttt{rt}$\rightarrow$\texttt{yelp} & $0.830$\cellcolor{green!39} & $2.131$\cellcolor{green!34} & $1.811$\cellcolor{green!35} & $0.905$\cellcolor{green!38} & $1.131$\cellcolor{green!38} & $\textbf{0.580}$\cellcolor{green!40} & $1.370$\cellcolor{green!37} & $1.998$\cellcolor{green!35} & $1.422$\cellcolor{green!37} & $1.455$\cellcolor{green!37} & $0.651$\cellcolor{green!39} & $1.981$\cellcolor{green!35} & $1.246$\cellcolor{green!37} & $21.334$\cellcolor{red!27} & $1.395$\cellcolor{green!37} & $25.155$\cellcolor{red!40} & $0.767$\cellcolor{green!39} & $1.184$\cellcolor{green!38} \\
 & \texttt{yelp}$\rightarrow$\texttt{imdb} & $1.366$\cellcolor{green!16} & $\textbf{0.652}$\cellcolor{green!40} & $1.344$\cellcolor{green!17} & $0.814$\cellcolor{green!34} & $2.742$\cellcolor{red!28} & $\textbf{0.691}$\cellcolor{green!38} & $0.889$\cellcolor{green!32} & $1.713$\cellcolor{green!5} & $1.461$\cellcolor{green!13} & $1.430$\cellcolor{green!14} & $0.741$\cellcolor{green!37} & $1.966$\cellcolor{red!3} & $1.455$\cellcolor{green!13} & $1.795$\cellcolor{green!2} & $1.398$\cellcolor{green!15} & $3.078$\cellcolor{red!40} & $2.058$\cellcolor{red!6} & $1.387$\cellcolor{green!15} \\
 & \texttt{yelp}$\rightarrow$\texttt{rt} & $1.922$\cellcolor{green!12} & $\textbf{0.854}$\cellcolor{green!39} & $1.657$\cellcolor{green!18} & $\textbf{0.839}$\cellcolor{green!40} & $3.952$\cellcolor{red!40} & $\textbf{0.944}$\cellcolor{green!37} & $1.387$\cellcolor{green!25} & $2.312$\cellcolor{green!2} & $2.118$\cellcolor{green!7} & $2.277$\cellcolor{green!3} & $\textbf{0.992}$\cellcolor{green!36} & $2.739$\cellcolor{red!8} & $1.799$\cellcolor{green!15} & $1.682$\cellcolor{green!18} & $1.517$\cellcolor{green!22} & $1.790$\cellcolor{green!15} & $1.720$\cellcolor{green!17} & $1.512$\cellcolor{green!22} \\\midrule
\multirow{6}{*}{\begin{sideways}RoBERTa\;\end{sideways}} & \texttt{imdb}$\rightarrow$\texttt{rt} & $1.462$\cellcolor{green!30} & $\textbf{0.753}$\cellcolor{green!40} & $2.440$\cellcolor{green!17} & $0.996$\cellcolor{green!36} & $2.781$\cellcolor{green!13} & $0.864$\cellcolor{green!38} & $1.593$\cellcolor{green!29} & $2.724$\cellcolor{green!14} & $2.721$\cellcolor{green!14} & $4.027$\cellcolor{red!2} & $1.034$\cellcolor{green!36} & $2.573$\cellcolor{green!16} & $\textbf{0.836}$\cellcolor{green!38} & $3.169$\cellcolor{green!8} & $3.087$\cellcolor{green!9} & $6.875$\cellcolor{red!40} & $0.940$\cellcolor{green!37} & $2.731$\cellcolor{green!14} \\
 & \texttt{imdb}$\rightarrow$\texttt{yelp} & $\textbf{0.637}$\cellcolor{green!37} & $0.777$\cellcolor{green!29} & $1.676$\cellcolor{red!19} & $1.115$\cellcolor{green!10} & $0.841$\cellcolor{green!25} & $0.738$\cellcolor{green!31} & $\textbf{0.637}$\cellcolor{green!37} & $0.926$\cellcolor{green!21} & $0.909$\cellcolor{green!22} & $1.089$\cellcolor{green!12} & $\textbf{0.622}$\cellcolor{green!37} & $1.031$\cellcolor{green!15} & $\textbf{0.583}$\cellcolor{green!40} & $\textbf{0.600}$\cellcolor{green!39} & $\textbf{0.596}$\cellcolor{green!39} & $2.045$\cellcolor{red!40} & $1.509$\cellcolor{red!10} & $\textbf{0.589}$\cellcolor{green!39} \\
 & \texttt{rt}$\rightarrow$\texttt{imdb} & $\textbf{0.754}$\cellcolor{green!39} & $\textbf{0.743}$\cellcolor{green!39} & $1.429$\cellcolor{green!37} & $1.118$\cellcolor{green!38} & $1.613$\cellcolor{green!37} & $\textbf{0.686}$\cellcolor{green!40} & $1.469$\cellcolor{green!37} & $2.449$\cellcolor{green!34} & $1.315$\cellcolor{green!38} & $1.135$\cellcolor{green!38} & $\textbf{0.743}$\cellcolor{green!39} & $2.398$\cellcolor{green!34} & $1.937$\cellcolor{green!36} & $25.696$\cellcolor{red!39} & $4.139$\cellcolor{green!29} & $25.951$\cellcolor{red!40} & $1.318$\cellcolor{green!38} & $2.504$\cellcolor{green!34} \\
 & \texttt{rt}$\rightarrow$\texttt{yelp} & $\textbf{0.708}$\cellcolor{green!39} & $0.822$\cellcolor{green!39} & $1.119$\cellcolor{green!38} & $1.453$\cellcolor{green!37} & $1.123$\cellcolor{green!38} & $0.756$\cellcolor{green!39} & $1.216$\cellcolor{green!38} & $1.777$\cellcolor{green!36} & $1.214$\cellcolor{green!38} & $1.068$\cellcolor{green!38} & $\textbf{0.657}$\cellcolor{green!40} & $2.129$\cellcolor{green!35} & $1.608$\cellcolor{green!36} & $24.870$\cellcolor{red!39} & $3.924$\cellcolor{green!29} & $25.156$\cellcolor{red!40} & $1.125$\cellcolor{green!38} & $2.425$\cellcolor{green!34} \\
 & \texttt{yelp}$\rightarrow$\texttt{imdb} & $1.047$\cellcolor{green!26} & $0.610$\cellcolor{green!38} & $0.731$\cellcolor{green!35} & $\textbf{0.551}$\cellcolor{green!40} & $1.547$\cellcolor{green!13} & $0.615$\cellcolor{green!38} & $\textbf{0.578}$\cellcolor{green!39} & $0.899$\cellcolor{green!30} & $1.067$\cellcolor{green!26} & $0.889$\cellcolor{green!31} & $0.668$\cellcolor{green!36} & $1.210$\cellcolor{green!22} & $0.792$\cellcolor{green!33} & $\textbf{0.598}$\cellcolor{green!38} & $0.881$\cellcolor{green!31} & $1.163$\cellcolor{green!23} & $3.585$\cellcolor{red!40} & $0.852$\cellcolor{green!32} \\
 & \texttt{yelp}$\rightarrow$\texttt{rt} & $1.386$\cellcolor{green!18} & $0.814$\cellcolor{green!37} & $1.168$\cellcolor{green!25} & $\textbf{0.764}$\cellcolor{green!38} & $3.190$\cellcolor{red!40} & $\textbf{0.723}$\cellcolor{green!40} & $0.835$\cellcolor{green!36} & $1.537$\cellcolor{green!13} & $1.348$\cellcolor{green!19} & $2.074$\cellcolor{red!3} & $0.940$\cellcolor{green!32} & $1.812$\cellcolor{green!4} & $0.860$\cellcolor{green!35} & $\textbf{0.723}$\cellcolor{green!39} & $0.890$\cellcolor{green!34} & $1.193$\cellcolor{green!24} & $3.076$\cellcolor{red!36} & $0.979$\cellcolor{green!31} \\\bottomrule
\multirow{4}{*}{\begin{sideways}Wins\;\end{sideways}} &  $\Pr(M\succ {1}R)$ &$\dag 94.11\%$ & --- & --- & --- & --- & $\dag 97.17\%$ & $\dag 92.50\%$ & $81.50\%$ & $\dag 83.56\%$ & $\dag 82.94\%$ & $\dag 98.61\%$ & $77.39\%$ & $\dag 90.56\%$ & $58.28\%$ & $79.22\%$ & $38.67\%$ & $76.22\%$ & $\dag 86.50\%$ \\
 &  $\Pr(M\succ {2}R)$ &$\dag 71.44\%$ & --- & --- & --- & --- & $\dag 86.94\%$ & $\dag 71.39\%$ & $46.56\%$ & $48.67\%$ & $53.11\%$ & $\dag 89.67\%$ & $40.94\%$ & $\dag 69.06\%$ & $43.11\%$ & $60.06\%$ & $22.94\%$ & $51.94\%$ & $\dag 64.78\%$ \\
 &  $\Pr(M\succ {3}R)$ &$\dag 45.78\%$ & --- & --- & --- & --- & $\dag 60.67\%$ & $\dag 43.89\%$ & $23.94\%$ & $25.28\%$ & $31.11\%$ & $\dag 62.44\%$ & $22.89\%$ & $\dag 44.94\%$ & $28.33\%$ & $38.56\%$ & $12.33\%$ & $30.72\%$ & $40.61\%$ \\
 &  $\Pr(M\succ {4}R)$ &$\dag 25.28\%$ & --- & --- & --- & --- & $\dag 27.72\%$ & $\dag 22.67\%$ & $9.83\%$ & $9.33\%$ & $16.89\%$ & $\dag 30.44\%$ & $8.78\%$ & $\dag 25.72\%$ & $16.50\%$ & $21.22\%$ & $6.61\%$ & $14.22\%$ & $22.56\%$ \\\bottomrule
  & Ave Rank &\textbf{7.45} & 8.76 & 10.13 & 7.99 & 10.27 & \textbf{6.28} & \textbf{7.39} & 11.49 & 11.31 & 10.20 & \textbf{5.41} & 12.27 & 7.73 & 11.40 & 9.25 & 14.54 & 10.59 & 8.55 \\\bottomrule
\end{tabular}
    }%
    \label{tab:calib:covshift}
\end{table}

Perhaps the simplest plausible explanation is that we have failed to simulate CS. However, a closer look at these results (and others provided below) disproves this observation. In this regard, we might argue that experiments in which the IID baseline (Platt) fares worse are those affected by a higher degree of CS; indeed this happens prevalently on ``\texttt{yelp}$\rightarrow$\texttt{imdb}'' and ``\texttt{yelp}$\rightarrow$\texttt{rt}'', for all three classifiers. This seems sensible, as the reviews of the ``\texttt{yelp}'' domain (about businesses) are topically more different than those from ``\texttt{imdb}'' or ``\texttt{rt}'' (which are both about movie reviews). In this respect, it is interesting to see how Head2Tail, CPCS, and TransCal consistently beat the IID baseline, while LasCal (more suitable for LS) instead tends to fare poorly in said cases. That this phenomenon is not also observable when ``\texttt{yelp}'' acts as the target domain (i.e., in ``\texttt{imdb}$\rightarrow$\texttt{yelp}'' and ``\texttt{rt}$\rightarrow$\texttt{yelp}'') might be explained by the fact that the ``\texttt{yelp}'' dataset seems to be the ``easiest'' problem, in light of the accuracy values obtained by each classifier (reported in \ref{app:accuracy}). This characteristic might partially compensate for the effect caused by the shift effect between the distributions.

Concerning the rest of the candidates, all methods seem to perform reasonably well if we rely on the color-coding. However, this impression is misleading, since one method (EMQ$^{\text{TransCal}}$) has obtained extremely poor results (red cells), which skews the color scale and make all other results appear good (i.e., intensely green). This evinces something which is already known in the literature, i.e., that EMQ is extremely sensitive to the quality of the posterior probabilities given as input \cite{Esuli:2021le}. In this case, the application of TransCal as a pre-calibration step has lead to huge instabilities in the method. The variant EMQ$^{\text{BCTS}}$ instead obtains slightly better results than the vanilla version EMQ; still, all the EMQ variants fall short in terms of performance under CS when compare to other methods. A likely reason is that EMQ is specialised for LS, and its underlying assumptions clash with those of CS. Similar principles likely apply for EMQ$^{\text{5B}}_{\quant2\calib}$ as well.

However, this batch of experiments also offers some unexpected outcomes. Three of the adapted methods, including two adaptations of quantification methods (PCC$^{\text{5B}}_{\quant2\calib}$, PACC$^{\text{5B}}_{\quant2\calib}$) and one classifier accuracy predictor (DoC$^{\text{6B}}_{\acc2\calib}$) have attained the lowest errors overall, as witnessed by the fact that their average ranks appear in bold. It is particularly interesting the case for PCC$^{\text{5B}}_{\quant2\calib}$, which is an adaptation of the preferred quantifier for CS, and especially for DoC$^{\text{6B}}_{\acc2\calib}$, which has obtained a much lower error score. 

The newly proposed methods do not stand out in terms of performance. PacCal$^\sigma$ shows erratic behavior, with some out-of-scale errors (red cells), while DMCal instead seems to work reasonably well across all datasets, consistently beating at least two reference methods simultaneously in a statistically significant sense.

The results we have obtained for LS are reported in Table~\ref{tab:calib:labelshift}. In this case, all the reference methods work markedly better than the IID baseline Platt, with LasCal (the only calibration method specifically designed to address this type of shift) standing out by a large margin. None of the direct adaptations seem to be comparable in performance with the reference  methods from the calibration literature.

\begin{table}[tb!]
    \caption{Calibration performance under label shift in terms of ECE}
    \vspace{0.5cm}
    \centering
    \resizebox{\textwidth}{!}{%
    % [inline block 0: 1 envs, 22949 chars -> data_tex | \begin{tabular}{cc|c|cccc|ccccccccccc} \toprule \multicolumn{2}{c}{} & \multicolumn{1}{c|}{Baselines} & \multicolumn{4}{...]

    }%    
    \label{tab:calib:labelshift}
\end{table}

As for the rest of the methods, it now appears evident that the EMQ variants dominate. Interestingly enough, EMQ performs better when the classifier being calibrated is Logistic Regression; this is sensible because this classifier is known to be already reasonably well calibrated (for the training distribution), and this is crucial for the method stability~\cite{Esuli:2021le}. In contrast, for Multilayer Perceptron (which, as a neural approach, is deemed to be overconfident), the pre-calibration round of BCTS seems to improve the results of EMQ.
Somehow unexpectedly, though, the variant EMQ$^{\text{LasCal}}$ did not improve over vanilla EMQ nor EMQ$^{\text{BCTS}}$, despite relying on a pre-calibration round which is more adapted to this type of shift. Apparently, this method sometimes renders EMQ very unstable (see the many red cells distributed across the column). A likely reason may be that the EMQ requires the original posterior probabilities to be well calibrated for the training distribution, and not for the test distribution.
%the which again aligns with the previous findings of~\cite{Esuli:2021le}. 
In any case, LasCal alone works reasonably well across all datasets and classifiers, and especially so for Naive Bayes, which is arguably the worst calibrated classifier, and hence the most difficult case.

Another method that stands out in terms of performance, is the newly proposed DMCal, and particularly so for $k$ Nearest Neighbor (perhaps because the already discretized outputs of $k$-Nearest Neighbors simplify the binning of DMCal). This, together with the fact that (unlike the EMQ variants) DMCal also performed reasonably well under CS, renders this method an interesting versatile approach worth considering in future calibration research related to dataset shift.

%\clearpage

Figure~\ref{fig:calib:errbyshift} show the ECE as a function of level of shift under CS and LS experiments side-by-side for a selection of methods. These figures clearly reveal a natural tendency to degrade the performance as the intensity of shift increases (which indirectly validates the fact that we were indeed generating shift). However, different methods exhibit different degrees of robustness; for example, TransCal and Head2Tail seem to be pretty stable against the intensity of CS, while CPCS and LasCal seem instead to be very unstable in this respect. 
A comparison between both plots disproves our hypothesis  according to which (at least on binary problems) Head2Tail should be more robust to LS than to CS.
These plots also reveal that PCC$^{\text{5B}}_{\quant2\calib}$ seems preferable to DoC$^{\text{6B}}_{\acc2\calib}$ when the level of CS is extremely high. Similarly interesting is the fact that PACC$^{\text{5B}}_{\quant2\calib}$ (a non-parametric calibrator) 
behaves very similarly to Platt scaling (a parametric one), somehow suggesting that the calibration maps both methods find out tend to be very similar.
In terms of LS (right panel), it is interesting to see how LasCal, EMQ$^{\text{BCTS}}$ and (to a lesser extent) DMCal behave very robustly against the intensity of the shift.

The CD-diagrams (Figure~\ref{fig:calib:cddiagram}) confirm that DoC$^{\text{6B}}_{\acc2\calib}$ and PCC$^{\text{5B}}_{\quant2\calib}$ fare significantly better than the competitors under CS, while EMQ$^{\text{BCTS}}$, EMQ, and the newly proposed DMCal seem to dominate the LS arena, followed by LasCal.

\begin{figure}
    \centering
    \includegraphics[width=0.48\linewidth]{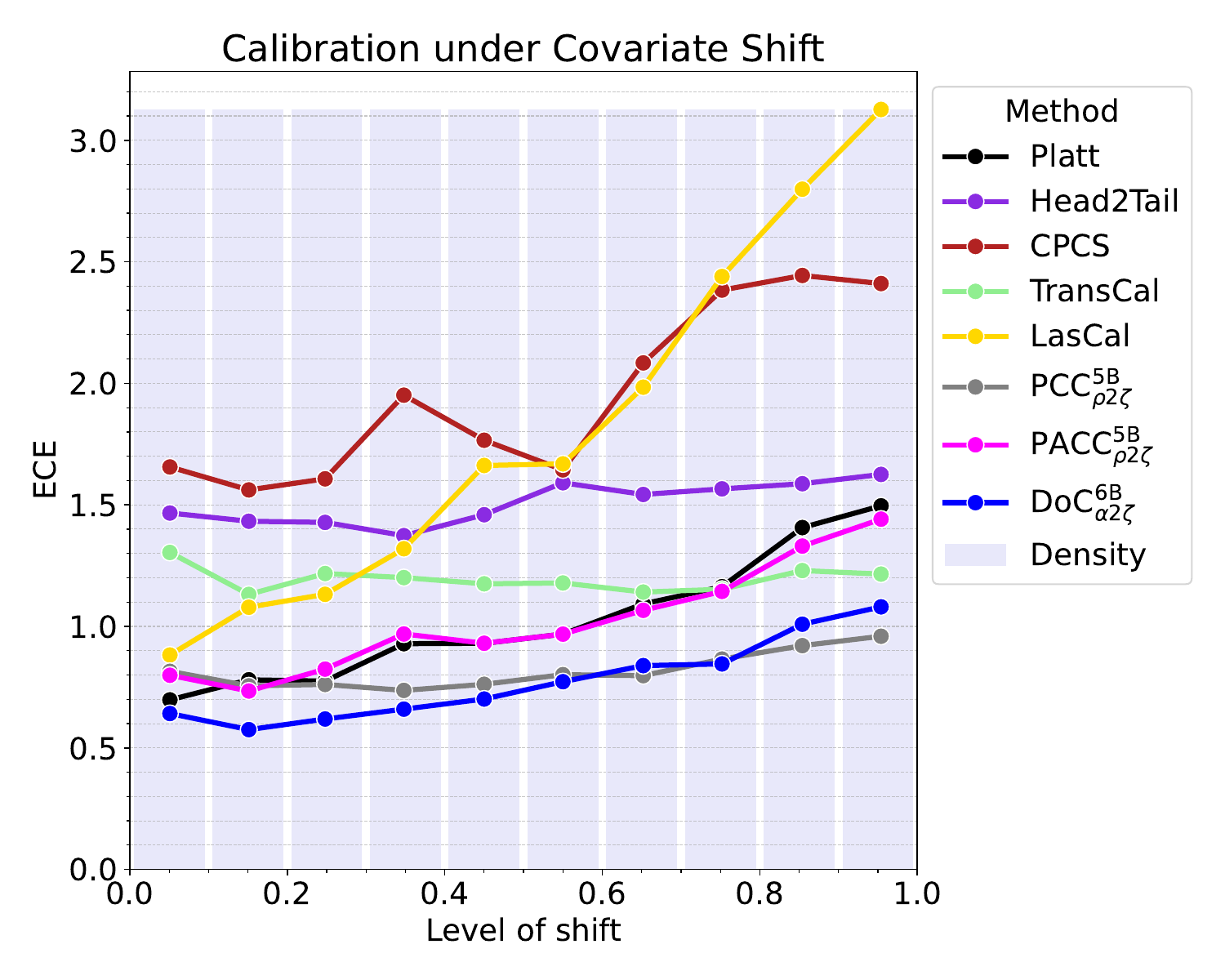}
    \includegraphics[width=0.48\linewidth]{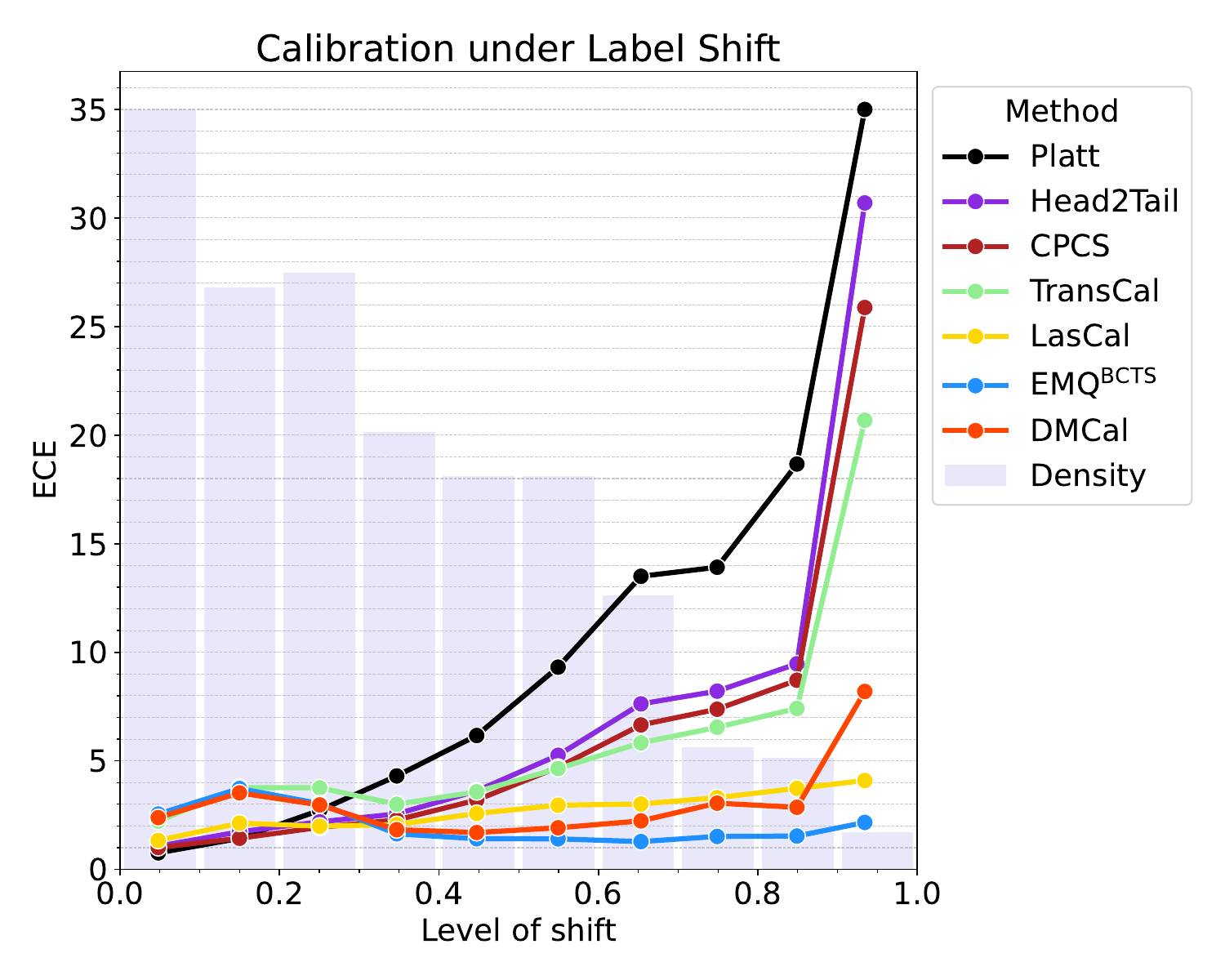}
    \caption{Calibration error in terms of ECE as a function of shift intensity in CS experiments (left panel) and LS (right panel).}
    \label{fig:calib:errbyshift}
    
    \vspace{1cm}

    \centering
    \resizebox{0.48\textwidth}{!}{%
    \begin{tikzpicture}[
  treatment line/.style={rounded corners=1.5pt, line cap=round, shorten >=1pt},
  treatment label/.style={font=\small},
  group line/.style={ultra thick},
]

\begin{axis}[
  clip={false},
  axis x line={center},
  axis y line={none},
  axis line style={-},
  xmin={1},
  ymax={0},
  scale only axis={true},
  width={\axisdefaultwidth},
  ticklabel style={anchor=south, yshift=1.3*\pgfkeysvalueof{/pgfplots/major tick length}, font=\small},
  every tick/.style={draw=black},
  major tick style={yshift=.5*\pgfkeysvalueof{/pgfplots/major tick length}},
  minor tick style={yshift=.5*\pgfkeysvalueof{/pgfplots/minor tick length}},
  title style={yshift=\baselineskip},
  xmax={18},
  ymin={-10.5},
  height={11\baselineskip},
  title={Calibration under Covariate Shift},
]

\draw[treatment line] ([yshift=-2pt] axis cs:5.4061111111111115, 0) |- (axis cs:3.9061111111111115, -2.0)
  node[treatment label, anchor=east] {DoC$_{\alpha 2\zeta}^{6\text{B}}$};
\draw[treatment line] ([yshift=-2pt] axis cs:6.276111111111111, 0) |- (axis cs:3.9061111111111115, -3.0)
  node[treatment label, anchor=east] {$\text{PCC}^{5\text{B}}_{\rho 2\zeta}$};
\draw[treatment line] ([yshift=-2pt] axis cs:7.394166666666667, 0) |- (axis cs:3.9061111111111115, -4.0)
  node[treatment label, anchor=east] {PACC$_{\rho 2\zeta}^{5\text{B}}$};
\draw[treatment line] ([yshift=-2pt] axis cs:7.454722222222222, 0) |- (axis cs:3.9061111111111115, -5.0)
  node[treatment label, anchor=east] {Platt};
\draw[treatment line] ([yshift=-2pt] axis cs:7.726388888888889, 0) |- (axis cs:3.9061111111111115, -6.0)
  node[treatment label, anchor=east] {LEAP$_{\alpha 2\zeta}^{\text{PCC}-6\text{B}}$};
\draw[treatment line] ([yshift=-2pt] axis cs:7.9880555555555555, 0) |- (axis cs:3.9061111111111115, -7.0)
  node[treatment label, anchor=east] {\textbf{TransCal}};
\draw[treatment line] ([yshift=-2pt] axis cs:8.553888888888888, 0) |- (axis cs:3.9061111111111115, -8.0)
  node[treatment label, anchor=east] {DMCal};
\draw[treatment line] ([yshift=-2pt] axis cs:8.761388888888888, 0) |- (axis cs:3.9061111111111115, -9.0)
  node[treatment label, anchor=east] {\textbf{Head2Tail}};
\draw[treatment line] ([yshift=-2pt] axis cs:9.251111111111111, 0) |- (axis cs:3.9061111111111115, -10.0)
  node[treatment label, anchor=east] {EMQ$^{\text{BCTS}}$};
\draw[treatment line] ([yshift=-2pt] axis cs:10.1325, 0) |- (axis cs:16.039444444444445, -10.0)
  node[treatment label, anchor=west] {\textbf{CPCS}};
\draw[treatment line] ([yshift=-2pt] axis cs:10.196111111111112, 0) |- (axis cs:16.039444444444445, -9.0)
  node[treatment label, anchor=west] {ATC$_{\alpha 2\zeta}^{6\text{B}}$};
\draw[treatment line] ([yshift=-2pt] axis cs:10.27, 0) |- (axis cs:16.039444444444445, -8.0)
  node[treatment label, anchor=west] {\textbf{LasCal}};
\draw[treatment line] ([yshift=-2pt] axis cs:10.589166666666667, 0) |- (axis cs:16.039444444444445, -7.0)
  node[treatment label, anchor=west] {PacCal$^{\sigma}$};
\draw[treatment line] ([yshift=-2pt] axis cs:11.305833333333334, 0) |- (axis cs:16.039444444444445, -6.0)
  node[treatment label, anchor=west] {KDEy$_{\rho 2\zeta}^{5\text{B}}$};
\draw[treatment line] ([yshift=-2pt] axis cs:11.4025, 0) |- (axis cs:16.039444444444445, -5.0)
  node[treatment label, anchor=west] {EMQ};
\draw[treatment line] ([yshift=-2pt] axis cs:11.486666666666666, 0) |- (axis cs:16.039444444444445, -4.0)
  node[treatment label, anchor=west] {EMQ$_{\rho 2\zeta}^{5\text{B}}$};
\draw[treatment line] ([yshift=-2pt] axis cs:12.265833333333333, 0) |- (axis cs:16.039444444444445, -3.0)
  node[treatment label, anchor=west] {LEAP$_{\alpha 2\zeta}^{6\text{B}}$};
\draw[treatment line] ([yshift=-2pt] axis cs:14.539444444444445, 0) |- (axis cs:16.039444444444445, -2.0)
  node[treatment label, anchor=west] {EMQ$^{\text{TransCal}}$};
\draw[group line] (axis cs:8.761388888888888, -4.666666666666667) -- (axis cs:10.589166666666667, -4.666666666666667);
\draw[group line] (axis cs:10.1325, -2.6666666666666665) -- (axis cs:11.486666666666666, -2.6666666666666665);
\draw[group line] (axis cs:8.553888888888888, -5.333333333333333) -- (axis cs:8.761388888888888, -5.333333333333333);
\draw[group line] (axis cs:10.27, -2.0) -- (axis cs:12.265833333333333, -2.0);
\draw[group line] (axis cs:7.394166666666667, -2.6666666666666665) -- (axis cs:7.9880555555555555, -2.6666666666666665);

\end{axis}
\end{tikzpicture}
    }
    \resizebox{0.48\textwidth}{!}{%
    \begin{tikzpicture}[
  treatment line/.style={rounded corners=1.5pt, line cap=round, shorten >=1pt},
  treatment label/.style={font=\small},
  group line/.style={ultra thick},
]

\begin{axis}[
  clip={false},
  axis x line={center},
  axis y line={none},
  axis line style={-},
  xmin={1},
  ymax={0},
  scale only axis={true},
  width={\axisdefaultwidth},
  ticklabel style={anchor=south, yshift=1.3*\pgfkeysvalueof{/pgfplots/major tick length}, font=\small},
  every tick/.style={draw=black},
  major tick style={yshift=.5*\pgfkeysvalueof{/pgfplots/major tick length}},
  minor tick style={yshift=.5*\pgfkeysvalueof{/pgfplots/minor tick length}},
  title style={yshift=\baselineskip},
  xmax={16},
  ymin={-9.5},
  height={10\baselineskip},
  xtick={1,4,7,10,13,16},
  minor x tick num={1},
  title={Calibration under Label Shift},
]

\draw[treatment line] ([yshift=-2pt] axis cs:5.479375, 0) |- (axis cs:4.146041666666667, -2.0)
  node[treatment label, anchor=east] {EMQ$^{\text{BCTS}}$};
\draw[treatment line] ([yshift=-2pt] axis cs:5.708375, 0) |- (axis cs:4.146041666666667, -3.0)
  node[treatment label, anchor=east] {EMQ};
\draw[treatment line] ([yshift=-2pt] axis cs:5.772375, 0) |- (axis cs:4.146041666666667, -4.0)
  node[treatment label, anchor=east] {DMCal};
\draw[treatment line] ([yshift=-2pt] axis cs:7.113, 0) |- (axis cs:4.146041666666667, -5.0)
  node[treatment label, anchor=east] {\textbf{LasCal}};
\draw[treatment line] ([yshift=-2pt] axis cs:7.312625, 0) |- (axis cs:4.146041666666667, -6.0)
  node[treatment label, anchor=east] {\textbf{CPCS}};
\draw[treatment line] ([yshift=-2pt] axis cs:7.784875, 0) |- (axis cs:4.146041666666667, -7.0)
  node[treatment label, anchor=east] {EMQ$^{\text{LasCal}}$};
\draw[treatment line] ([yshift=-2pt] axis cs:8.504625, 0) |- (axis cs:4.146041666666667, -8.0)
  node[treatment label, anchor=east] {\textbf{Head2Tail}};
\draw[treatment line] ([yshift=-2pt] axis cs:8.794625, 0) |- (axis cs:4.146041666666667, -9.0)
  node[treatment label, anchor=east] {PACC$_{\rho 2\zeta}^{5\text{B}}$};
\draw[treatment line] ([yshift=-2pt] axis cs:8.804, 0) |- (axis cs:13.046458333333334, -9.0)
  node[treatment label, anchor=west] {\textbf{TransCal}};
\draw[treatment line] ([yshift=-2pt] axis cs:8.982, 0) |- (axis cs:13.046458333333334, -8.0)
  node[treatment label, anchor=west] {EMQ$_{\rho 2\zeta}^{5\text{B}}$};
\draw[treatment line] ([yshift=-2pt] axis cs:9.2085, 0) |- (axis cs:13.046458333333334, -7.0)
  node[treatment label, anchor=west] {KDEy$_{\rho 2\zeta}^{5\text{B}}$};
\draw[treatment line] ([yshift=-2pt] axis cs:9.75, 0) |- (axis cs:13.046458333333334, -6.0)
  node[treatment label, anchor=west] {Platt};
\draw[treatment line] ([yshift=-2pt] axis cs:9.792375, 0) |- (axis cs:13.046458333333334, -5.0)
  node[treatment label, anchor=west] {DoC$_{\alpha 2\zeta}^{6\text{B}}$};
\draw[treatment line] ([yshift=-2pt] axis cs:9.997875, 0) |- (axis cs:13.046458333333334, -4.0)
  node[treatment label, anchor=west] {LEAP$_{\alpha 2\zeta}^{6\text{B}}$};
\draw[treatment line] ([yshift=-2pt] axis cs:11.28225, 0) |- (axis cs:13.046458333333334, -3.0)
  node[treatment label, anchor=west] {ATC$_{\alpha 2\zeta}^{6\text{B}}$};
\draw[treatment line] ([yshift=-2pt] axis cs:11.713125, 0) |- (axis cs:13.046458333333334, -2.0)
  node[treatment label, anchor=west] {PacCal$^{\sigma}$};
\draw[group line] (axis cs:5.708375, -2.0) -- (axis cs:5.772375, -2.0);
\draw[group line] (axis cs:8.504625, -5.333333333333333) -- (axis cs:8.804, -5.333333333333333);
\draw[group line] (axis cs:7.784875, -4.666666666666667) -- (axis cs:8.504625, -4.666666666666667);
\draw[group line] (axis cs:9.792375, -2.6666666666666665) -- (axis cs:9.997875, -2.6666666666666665);
\draw[group line] (axis cs:5.479375, -1.3333333333333333) -- (axis cs:5.708375, -1.3333333333333333);
\draw[group line] (axis cs:8.804, -4.666666666666667) -- (axis cs:9.2085, -4.666666666666667);

\end{axis}
\end{tikzpicture}
    }
    \caption{CD-diagrams for CS experiments (left panel) and LS experiments (right panel). Reference calibration methods are highlighted in boldtype.}
    \label{fig:calib:cddiagram}
\end{figure}

These results are interesting, since they clearly show how different techniques from quantification and classifier accuracy prediction can be effectively used in the calibration field yielding unexpected success.
Notwithstanding this, one might argue that ECE does not tell the whole story, since some of the calibration methods we propose are not guaranteed to be accuracy-preserving (i.e., in the process of calibrating the posteriors, they may well shift the  decision boundary in the calibration map). In Tables~\ref{tab:calib:covshift:brier} and \ref{tab:calib:labelshift:brier} (in the Appendix), we report  Brier Scores values for CS and LS, respectively. These results reveal that low ECE has not come at the cost of accuracy (i.e., the best performing methods in terms of ECE also tend to perform well in terms of Brier Score), with the sole exception of LasCal under LS, which drops some positions in the average ranking. 

% \comment{Add calibration maps of PacAcc and Platt?}

% \comment{Add discussion of Brier Scores in appendix}

%\clearpage

% --------------------------------------------------------

\subsubsection{Quantification Experiments}
\label{sec:exp:quantification}

In this section, we report the results we have obtained for the task of quantification. The evaluation measure we adopt is the absolute error (AE) \cite{Sebastiani:2020qf}, given by:
\begin{equation}
    \text{AE}=|p-\hat{p}|
\end{equation}
\noindent where $p\in[0,1]$ is the true prevalence value of the positive class, and $\hat{p}\in[0,1]$ is the estimated prevalence value obtained by the quantification method.

In this case, we consider PACC, EMQ, and KDEy as our reference methods, and apply the $\acc2\quant$ and $\calib2\quant$ adaptation algorithms to generate new quantification methods from classifier accuracy predictors and calibrators, respectively. When the type of shift is CS, we also include PCC as an additional reference quantification method, and we also consider LEAP$^{\text{PCC}}_{\acc2\quant}$, an adaptation of LEAP equipped with PCC as the surrogate quantifier. For LS experiments, we also consider EMQ$^{\text{BCTS}}$ as an additional reference method. The IID baseline is CC, a method that simply classifies and counts the fraction of predicted positive instances in the test set. We do not consider adaptations from PacCal$^\sigma$ nor DMCal, since those methods heavily rely on  quantification methods and would therefore be redundant to the comparison.

Table~\ref{tab:quant:covshift} reports the results for CS simulations. As expected, PCC is the top-performing reference quantifier; the rest of the proper quantification methods (PACC, EMQ, KDEy) fall short in terms of AE given that the underlying assumptions on which these methods rely upon do not align well with the characteristics of the data (see also \cite{Gonzalez:2024cs}). Indeed, the IID baseline (CC) tends to outperform three methods (presumably PACC, EMQ, and KDEy) consistently and with statistical significance (see the win rates).

\begin{table}[b!]
    \caption{Quantification performance under CS in terms of AE}
    \vspace{0.5cm}
    \centering
    \resizebox{\textwidth}{!}{%
    \begin{tabular}{cc|c|cccc|ccccccc} \toprule
\multicolumn{2}{c}{} & \multicolumn{1}{c|}{Baselines} & \multicolumn{4}{c|}{Reference} & \multicolumn{7}{c}{Adapted Methods} \\
\multicolumn{2}{c}{} & \begin{sideways}CC\;\end{sideways} & \begin{sideways}PCC\;\end{sideways} & \begin{sideways}PACC\;\end{sideways} & \begin{sideways}EMQ\;\end{sideways} & \begin{sideways}KDEy\;\end{sideways} & \begin{sideways}ATC$_{\alpha 2 \rho}$\;\end{sideways} & \begin{sideways}DoC$_{\alpha 2 \rho}$\;\end{sideways} & \begin{sideways}LEAP$_{\alpha 2 \rho}$\;\end{sideways} & \begin{sideways}LEAP$_{\alpha 2 \rho}^{\text{PCC}}$\;\end{sideways} & \begin{sideways}CPCS$_{\zeta 2 \rho}$\;\end{sideways} & \begin{sideways}TransCal$_{\zeta 2 \rho}$\;\end{sideways} & \begin{sideways}LasCal$_{\zeta 2 \rho}$\;\end{sideways} \\\midrule
\multirow{6}{*}{\begin{sideways}BERT\;\end{sideways}} & \texttt{imdb}$\rightarrow$\texttt{rt} & $0.045$\cellcolor{green!26} & $\textbf{0.027}$\cellcolor{green!38} & $0.123$\cellcolor{red!21} & $0.131$\cellcolor{red!27} & $0.149$\cellcolor{red!38} & $0.065$\cellcolor{green!14} & $0.043$\cellcolor{green!28} & $0.151$\cellcolor{red!40} & $\textbf{0.027}$\cellcolor{green!38} & $0.038$\cellcolor{green!31} & $\textbf{0.025}$\cellcolor{green!40} & $0.031$\cellcolor{green!35} \\
 & \texttt{imdb}$\rightarrow$\texttt{yelp} & $0.072$\cellcolor{green!17} & $0.037$\cellcolor{green!35} & $0.068$\cellcolor{green!19} & $0.182$\cellcolor{red!40} & $0.073$\cellcolor{green!17} & $0.042$\cellcolor{green!33} & $\textbf{0.031}$\cellcolor{green!38} & $0.130$\cellcolor{red!12} & $0.037$\cellcolor{green!35} & $0.053$\cellcolor{green!27} & $\textbf{0.029}$\cellcolor{green!40} & $0.047$\cellcolor{green!30} \\
 & \texttt{rt}$\rightarrow$\texttt{imdb} & $0.075$\cellcolor{green!28} & $0.031$\cellcolor{green!39} & $0.143$\cellcolor{green!12} & $0.367$\cellcolor{red!40} & $0.161$\cellcolor{green!8} & $0.047$\cellcolor{green!35} & $0.034$\cellcolor{green!38} & $0.169$\cellcolor{green!6} & $0.031$\cellcolor{green!39} & $0.057$\cellcolor{green!32} & $\textbf{0.027}$\cellcolor{green!40} & $0.038$\cellcolor{green!37} \\
 & \texttt{rt}$\rightarrow$\texttt{yelp} & $0.032$\cellcolor{green!38} & $\textbf{0.027}$\cellcolor{green!39} & $0.078$\cellcolor{green!25} & $0.120$\cellcolor{green!13} & $0.107$\cellcolor{green!16} & $0.034$\cellcolor{green!37} & $\textbf{0.028}$\cellcolor{green!39} & $0.306$\cellcolor{red!40} & $\textbf{0.027}$\cellcolor{green!39} & $\textbf{0.028}$\cellcolor{green!39} & $0.029$\cellcolor{green!39} & $\textbf{0.027}$\cellcolor{green!40} \\
 & \texttt{yelp}$\rightarrow$\texttt{imdb} & $0.112$\cellcolor{red!6} & $0.083$\cellcolor{green!12} & $0.137$\cellcolor{red!22} & $0.164$\cellcolor{red!40} & $0.148$\cellcolor{red!29} & $0.094$\cellcolor{green!5} & $0.094$\cellcolor{green!5} & $0.160$\cellcolor{red!37} & $0.083$\cellcolor{green!12} & $0.101$\cellcolor{green!0} & $\textbf{0.040}$\cellcolor{green!40} & $0.091$\cellcolor{green!7} \\
 & \texttt{yelp}$\rightarrow$\texttt{rt} & $0.100$\cellcolor{red!4} & $0.074$\cellcolor{green!14} & $0.120$\cellcolor{red!17} & $0.154$\cellcolor{red!40} & $0.140$\cellcolor{red!30} & $0.093$\cellcolor{green!1} & $0.084$\cellcolor{green!6} & $0.135$\cellcolor{red!27} & $0.074$\cellcolor{green!14} & $0.087$\cellcolor{green!4} & $\textbf{0.035}$\cellcolor{green!40} & $0.082$\cellcolor{green!8} \\\midrule
\multirow{6}{*}{\begin{sideways}DistilBERT\;\end{sideways}} & \texttt{imdb}$\rightarrow$\texttt{rt} & $0.036$\cellcolor{green!16} & $0.036$\cellcolor{green!18} & $0.051$\cellcolor{red!11} & $0.056$\cellcolor{red!22} & $0.053$\cellcolor{red!15} & $0.038$\cellcolor{green!13} & $0.035$\cellcolor{green!18} & $0.066$\cellcolor{red!40} & $0.036$\cellcolor{green!18} & $0.033$\cellcolor{green!23} & $\textbf{0.024}$\cellcolor{green!40} & $0.033$\cellcolor{green!23} \\
 & \texttt{imdb}$\rightarrow$\texttt{yelp} & $\textbf{0.019}$\cellcolor{green!40} & $0.022$\cellcolor{green!28} & $0.025$\cellcolor{green!18} & $0.027$\cellcolor{green!11} & $0.024$\cellcolor{green!21} & $0.024$\cellcolor{green!21} & $\textbf{0.020}$\cellcolor{green!33} & $0.044$\cellcolor{red!40} & $0.022$\cellcolor{green!28} & $\textbf{0.019}$\cellcolor{green!37} & $0.023$\cellcolor{green!25} & $\textbf{0.019}$\cellcolor{green!39} \\
 & \texttt{rt}$\rightarrow$\texttt{imdb} & $0.057$\cellcolor{red!12} & $0.033$\cellcolor{green!30} & $0.045$\cellcolor{green!9} & $0.062$\cellcolor{red!22} & $0.047$\cellcolor{green!5} & $\textbf{0.031}$\cellcolor{green!35} & $\textbf{0.030}$\cellcolor{green!35} & $0.072$\cellcolor{red!40} & $0.033$\cellcolor{green!30} & $0.038$\cellcolor{green!21} & $\textbf{0.028}$\cellcolor{green!40} & $0.040$\cellcolor{green!17} \\
 & \texttt{rt}$\rightarrow$\texttt{yelp} & $0.094$\cellcolor{green!0} & $0.064$\cellcolor{green!20} & $0.125$\cellcolor{red!20} & $0.131$\cellcolor{red!24} & $0.112$\cellcolor{red!12} & $0.062$\cellcolor{green!21} & $0.056$\cellcolor{green!25} & $0.154$\cellcolor{red!40} & $0.064$\cellcolor{green!20} & $0.057$\cellcolor{green!24} & $\textbf{0.034}$\cellcolor{green!40} & $0.073$\cellcolor{green!14} \\
 & \texttt{yelp}$\rightarrow$\texttt{imdb} & $0.099$\cellcolor{red!11} & $0.083$\cellcolor{green!2} & $0.102$\cellcolor{red!13} & $0.133$\cellcolor{red!40} & $0.112$\cellcolor{red!22} & $0.084$\cellcolor{green!1} & $0.067$\cellcolor{green!16} & $0.116$\cellcolor{red!25} & $0.083$\cellcolor{green!2} & $0.086$\cellcolor{red!0} & $\textbf{0.039}$\cellcolor{green!40} & $0.085$\cellcolor{green!0} \\
 & \texttt{yelp}$\rightarrow$\texttt{rt} & $0.083$\cellcolor{red!11} & $0.068$\cellcolor{green!3} & $0.081$\cellcolor{red!9} & $0.112$\cellcolor{red!40} & $0.090$\cellcolor{red!18} & $0.067$\cellcolor{green!5} & $0.051$\cellcolor{green!21} & $0.091$\cellcolor{red!18} & $0.068$\cellcolor{green!3} & $0.058$\cellcolor{green!14} & $\textbf{0.032}$\cellcolor{green!40} & $0.071$\cellcolor{green!1} \\\midrule
\multirow{6}{*}{\begin{sideways}RoBERTa\;\end{sideways}} & \texttt{imdb}$\rightarrow$\texttt{rt} & $0.097$\cellcolor{green!1} & $0.073$\cellcolor{green!17} & $0.130$\cellcolor{red!20} & $0.160$\cellcolor{red!40} & $0.150$\cellcolor{red!33} & $0.076$\cellcolor{green!14} & $0.080$\cellcolor{green!12} & $0.140$\cellcolor{red!26} & $0.073$\cellcolor{green!17} & $0.081$\cellcolor{green!12} & $\textbf{0.038}$\cellcolor{green!40} & $0.079$\cellcolor{green!13} \\
 & \texttt{imdb}$\rightarrow$\texttt{yelp} & $0.019$\cellcolor{green!33} & $0.017$\cellcolor{green!38} & $\textbf{0.018}$\cellcolor{green!35} & $\textbf{0.019}$\cellcolor{green!34} & $\textbf{0.018}$\cellcolor{green!36} & $0.021$\cellcolor{green!27} & $\textbf{0.016}$\cellcolor{green!40} & $0.047$\cellcolor{red!40} & $0.017$\cellcolor{green!38} & $0.019$\cellcolor{green!34} & $0.023$\cellcolor{green!21} & $\textbf{0.017}$\cellcolor{green!39} \\
 & \texttt{rt}$\rightarrow$\texttt{imdb} & $0.109$\cellcolor{green!11} & $0.072$\cellcolor{green!26} & $0.233$\cellcolor{red!40} & $0.207$\cellcolor{red!29} & $0.187$\cellcolor{red!20} & $0.108$\cellcolor{green!11} & $0.076$\cellcolor{green!25} & $0.193$\cellcolor{red!23} & $0.072$\cellcolor{green!26} & $0.075$\cellcolor{green!25} & $\textbf{0.040}$\cellcolor{green!40} & $0.081$\cellcolor{green!23} \\
 & \texttt{rt}$\rightarrow$\texttt{yelp} & $0.099$\cellcolor{green!12} & $0.064$\cellcolor{green!27} & $0.221$\cellcolor{red!40} & $0.201$\cellcolor{red!31} & $0.179$\cellcolor{red!21} & $0.102$\cellcolor{green!11} & $0.067$\cellcolor{green!26} & $0.196$\cellcolor{red!29} & $0.064$\cellcolor{green!27} & $0.063$\cellcolor{green!27} & $\textbf{0.035}$\cellcolor{green!40} & $0.071$\cellcolor{green!24} \\
 & \texttt{yelp}$\rightarrow$\texttt{imdb} & $0.041$\cellcolor{green!11} & $0.037$\cellcolor{green!18} & $0.055$\cellcolor{red!11} & $0.053$\cellcolor{red!7} & $0.061$\cellcolor{red!21} & $0.047$\cellcolor{green!1} & $0.048$\cellcolor{red!0} & $0.072$\cellcolor{red!40} & $0.037$\cellcolor{green!18} & $0.038$\cellcolor{green!17} & $\textbf{0.024}$\cellcolor{green!40} & $0.037$\cellcolor{green!18} \\
 & \texttt{yelp}$\rightarrow$\texttt{rt} & $0.027$\cellcolor{green!8} & $0.023$\cellcolor{green!22} & $0.019$\cellcolor{green!33} & $0.042$\cellcolor{red!34} & $0.021$\cellcolor{green!25} & $0.021$\cellcolor{green!28} & $\textbf{0.017}$\cellcolor{green!40} & $0.044$\cellcolor{red!40} & $0.023$\cellcolor{green!22} & $0.023$\cellcolor{green!21} & $0.020$\cellcolor{green!28} & $0.022$\cellcolor{green!23} \\\bottomrule
\multirow{4}{*}{\begin{sideways}Wins\;\end{sideways}} &  $\Pr(M\succ {1}R)$ &$\dag 90.56\%$ & --- & --- & --- & --- & $\dag 89.89\%$ & $\dag 96.17\%$ & $54.28\%$ & $\dag 95.50\%$ & $\dag 93.78\%$ & $\dag 88.67\%$ & $\dag 95.83\%$ \\
 &  $\Pr(M\succ {2}R)$ &$\dag 75.00\%$ & --- & --- & --- & --- & $\dag 81.39\%$ & $\dag 88.06\%$ & $40.11\%$ & $\dag 85.56\%$ & $\dag 82.44\%$ & $\dag 84.94\%$ & $\dag 84.28\%$ \\
 &  $\Pr(M\succ {3}R)$ &$\dag 64.06\%$ & --- & --- & --- & --- & $\dag 72.83\%$ & $\dag 79.44\%$ & $30.33\%$ & $\dag 77.33\%$ & $\dag 73.22\%$ & $\dag 76.94\%$ & $\dag 76.11\%$ \\
 &  $\Pr(M\succ {4}R)$ &$14.06\%$ & --- & --- & --- & --- & $\dag 29.17\%$ & $\dag 36.50\%$ & $12.28\%$ & $\dag 39.72\%$ & $\dag 27.39\%$ & $\dag 62.78\%$ & $18.06\%$ \\\bottomrule
  & Ave Rank &7.28 & \textbf{4.69} & 8.12 & 9.84 & 8.52 & 5.94 & \textbf{4.72} & 9.40 & \textbf{4.64} & 5.68 & \textbf{3.66} & 5.52 \\\bottomrule
\end{tabular}
    }%    
    \label{tab:quant:covshift}
\end{table}

All the newly proposed methods, with the sole exception of LEAP$_{\acc2\quant}$ (which internally relies on KDEy) perform better than the reference methods PACC, EMQ, and KDEy in most cases. The top performing methods overall include DoC$_{\acc2\quant}$, the proposed variant LEAP$^{\text{PCC}}_{\acc2\quant}$, and TransCal$_{\calib2\quant}$. This is interesting for a number of reasons. The first one is that, together with the calibration experiments, DoC gets consolidated as a versatile method for CS problems. Second, because a variant of LEAP equipped with PCC is something that has never been investigated in past literature and, when adapted to quantification via the $\acc2\quant$ method, it seems to work even better than vanilla PCC; LEAP is based on the resolution of a system of linear equations representing problem constrains, which may help improve the stability of the method. Third and foremost, because TransCal$_{\calib2\quant}$ has attained astonishing unexpected results, clearly beating any other method from the pool consistently and across all classifiers. The fact that TransCal performed comparatively better in quantification experiments than in calibration ones might be an indication that its post-hoc calibrated posteriors are better calibrated at the dataset level (i.e., in terms of overall expected bias) rather than at the bin level (i.e., are less informative at a finer-grained level). This suggests the inner workings of TransCal, based on importance weighting, deserve closer investigation for quantification under CS.

The quantification experiments under LS are reported in Table~\ref{tab:quant:labelshift}. In this case, there are no surprises: the reference quantification methods clearly dominate the experimental comparison, with KDEy standing as the top performer of the lot (this is in line with the results of~\cite{KDEyMoreo:2025}).
Among the adapted methods, the best performer is DoC$_{\acc2\quant}$. Although not comparable in performance with respect to the reference methods, it is interesting to note that DoC keeps performing reasonably well across different tasks and types of shift.

\begin{table}[tb!]
    \caption{Quantification performance under label shift in terms of AE}
    \vspace{0.5cm}
    \centering
    \resizebox{\textwidth}{!}{%
    \begin{tabular}{cc|c|cccc|cccccccc} \toprule
\multicolumn{2}{c}{} & \multicolumn{1}{c|}{Baselines} & \multicolumn{4}{c|}{Reference} & \multicolumn{8}{c}{Adapted Methods} \\
\multicolumn{2}{c}{} & \begin{sideways}CC\;\end{sideways} & \begin{sideways}PACC\;\end{sideways} & \begin{sideways}EMQ\;\end{sideways} & \begin{sideways}EMQ$^{\text{BCTS}}$\;\end{sideways} & \begin{sideways}KDEy\;\end{sideways} & \begin{sideways}ATC$_{\alpha 2 \rho}$\;\end{sideways} & \begin{sideways}DoC$_{\alpha 2 \rho}$\;\end{sideways} & \begin{sideways}LEAP$_{\alpha 2 \rho}$\;\end{sideways} & \begin{sideways}CPCS$_{\zeta 2 \rho}$\;\end{sideways} & \begin{sideways}TransCal$_{\zeta 2 \rho}$\;\end{sideways} & \begin{sideways}LasCal$_{\zeta 2 \rho}$\;\end{sideways} & \begin{sideways}Head2Tail$_{\zeta 2 \rho}$\;\end{sideways} & \begin{sideways}EMQ$^{\text{LasCal}}$\;\end{sideways} \\\midrule
\multirow{10}{*}{\begin{sideways}Logistic Regression\;\end{sideways}} & \texttt{cmc.3} & $0.323$\cellcolor{red!40} & $0.117$\cellcolor{green!30} & $0.122$\cellcolor{green!28} & $0.105$\cellcolor{green!34} & $0.112$\cellcolor{green!32} & $0.238$\cellcolor{red!10} & $\textbf{0.090}$\cellcolor{green!40} & $0.198$\cellcolor{green!2} & $0.243$\cellcolor{red!12} & $0.244$\cellcolor{red!13} & $0.208$\cellcolor{red!0} & $0.245$\cellcolor{red!13} & $0.251$\cellcolor{red!15} \\
 & \texttt{yeast} & $0.239$\cellcolor{red!40} & $0.079$\cellcolor{green!31} & $0.109$\cellcolor{green!18} & $0.074$\cellcolor{green!33} & $\textbf{0.060}$\cellcolor{green!40} & $0.192$\cellcolor{red!18} & $0.090$\cellcolor{green!26} & $0.092$\cellcolor{green!25} & $0.201$\cellcolor{red!22} & $0.223$\cellcolor{red!32} & $0.162$\cellcolor{red!5} & $0.212$\cellcolor{red!27} & $0.217$\cellcolor{red!30} \\
 & \texttt{semeion} & $0.134$\cellcolor{red!13} & $0.028$\cellcolor{green!31} & $0.033$\cellcolor{green!29} & $\textbf{0.009}$\cellcolor{green!40} & $0.026$\cellcolor{green!32} & $0.093$\cellcolor{green!4} & $0.026$\cellcolor{green!32} & $0.025$\cellcolor{green!33} & $0.127$\cellcolor{red!9} & $0.198$\cellcolor{red!40} & $0.125$\cellcolor{red!9} & $0.122$\cellcolor{red!7} & $0.091$\cellcolor{green!5} \\
 & \texttt{wine-q-red} & $0.126$\cellcolor{green!0} & $0.044$\cellcolor{green!37} & $\textbf{0.038}$\cellcolor{green!40} & $\textbf{0.041}$\cellcolor{green!38} & $\textbf{0.044}$\cellcolor{green!37} & $0.160$\cellcolor{red!15} & $0.077$\cellcolor{green!22} & $0.078$\cellcolor{green!22} & $0.179$\cellcolor{red!24} & $0.214$\cellcolor{red!40} & $0.178$\cellcolor{red!23} & $0.204$\cellcolor{red!35} & $\textbf{0.039}$\cellcolor{green!39} \\
 & \texttt{ctg.1} & $0.084$\cellcolor{green!5} & $0.033$\cellcolor{green!31} & $\textbf{0.015}$\cellcolor{green!40} & $0.015$\cellcolor{green!39} & $\textbf{0.015}$\cellcolor{green!39} & $0.082$\cellcolor{green!6} & $0.030$\cellcolor{green!32} & $0.032$\cellcolor{green!31} & $0.110$\cellcolor{red!7} & $0.176$\cellcolor{red!40} & $0.118$\cellcolor{red!11} & $0.114$\cellcolor{red!9} & $0.025$\cellcolor{green!34} \\
 & \texttt{ctg.2} & $0.150$\cellcolor{red!13} & $0.055$\cellcolor{green!24} & $\textbf{0.016}$\cellcolor{green!40} & $0.048$\cellcolor{green!27} & $0.030$\cellcolor{green!34} & $0.210$\cellcolor{red!38} & $0.032$\cellcolor{green!33} & $0.025$\cellcolor{green!36} & $0.178$\cellcolor{red!25} & $0.205$\cellcolor{red!35} & $0.146$\cellcolor{red!12} & $0.187$\cellcolor{red!29} & $0.214$\cellcolor{red!40} \\
 & \texttt{ctg.3} & $0.123$\cellcolor{red!14} & $0.024$\cellcolor{green!33} & $\textbf{0.010}$\cellcolor{green!40} & $\textbf{0.010}$\cellcolor{green!39} & $0.022$\cellcolor{green!34} & $0.082$\cellcolor{green!5} & $0.072$\cellcolor{green!10} & $0.095$\cellcolor{red!1} & $0.123$\cellcolor{red!14} & $0.175$\cellcolor{red!40} & $0.129$\cellcolor{red!17} & $0.125$\cellcolor{red!15} & $0.096$\cellcolor{red!1} \\
 & \texttt{spambase} & $0.043$\cellcolor{green!24} & $\textbf{0.015}$\cellcolor{green!39} & $0.015$\cellcolor{green!38} & $0.015$\cellcolor{green!38} & $\textbf{0.013}$\cellcolor{green!40} & $0.049$\cellcolor{green!20} & $0.029$\cellcolor{green!31} & $0.020$\cellcolor{green!36} & $0.066$\cellcolor{green!11} & $0.162$\cellcolor{red!40} & $0.062$\cellcolor{green!13} & $0.082$\cellcolor{green!3} & $0.031$\cellcolor{green!30} \\
 & \texttt{wine-q-white} & $0.232$\cellcolor{red!40} & $0.047$\cellcolor{green!35} & $0.041$\cellcolor{green!37} & $\textbf{0.035}$\cellcolor{green!40} & $0.041$\cellcolor{green!37} & $0.193$\cellcolor{red!24} & $0.068$\cellcolor{green!26} & $0.062$\cellcolor{green!29} & $0.213$\cellcolor{red!32} & $0.226$\cellcolor{red!37} & $0.201$\cellcolor{red!27} & $0.219$\cellcolor{red!34} & $0.130$\cellcolor{green!1} \\
 & \texttt{pageblocks.5} & $0.339$\cellcolor{red!24} & $\textbf{0.034}$\cellcolor{green!40} & $0.045$\cellcolor{green!37} & $0.045$\cellcolor{green!37} & $0.046$\cellcolor{green!37} & $0.213$\cellcolor{green!2} & $0.068$\cellcolor{green!32} & $0.060$\cellcolor{green!34} & $0.339$\cellcolor{red!24} & $0.213$\cellcolor{green!2} & $0.168$\cellcolor{green!11} & $0.324$\cellcolor{red!21} & $0.412$\cellcolor{red!40} \\\midrule
\multirow{10}{*}{\begin{sideways}Naïve Bayes\;\end{sideways}} & \texttt{cmc.3} & $0.241$\cellcolor{red!40} & $\textbf{0.111}$\cellcolor{green!29} & $\textbf{0.091}$\cellcolor{green!40} & $\textbf{0.106}$\cellcolor{green!32} & $0.115$\cellcolor{green!26} & $0.174$\cellcolor{red!4} & $\textbf{0.119}$\cellcolor{green!24} & $\textbf{0.119}$\cellcolor{green!25} & $0.224$\cellcolor{red!30} & $0.236$\cellcolor{red!37} & $0.202$\cellcolor{red!19} & $0.231$\cellcolor{red!35} & $0.161$\cellcolor{green!2} \\
 & \texttt{yeast} & $0.508$\cellcolor{red!37} & $0.336$\cellcolor{green!1} & $0.508$\cellcolor{red!37} & $0.508$\cellcolor{red!37} & $0.336$\cellcolor{green!1} & $\textbf{0.170}$\cellcolor{green!40} & $0.238$\cellcolor{green!24} & $0.518$\cellcolor{red!40} & $0.249$\cellcolor{green!21} & $0.218$\cellcolor{green!29} & $0.279$\cellcolor{green!15} & $0.252$\cellcolor{green!21} & $0.505$\cellcolor{red!37} \\
 & \texttt{semeion} & $0.100$\cellcolor{green!39} & $\textbf{0.102}$\cellcolor{green!38} & $0.100$\cellcolor{green!39} & $0.100$\cellcolor{green!39} & $\textbf{0.100}$\cellcolor{green!39} & $\textbf{0.097}$\cellcolor{green!40} & $0.232$\cellcolor{green!14} & $0.527$\cellcolor{red!40} & $0.131$\cellcolor{green!33} & $0.206$\cellcolor{green!19} & $0.157$\cellcolor{green!28} & $0.168$\cellcolor{green!26} & $0.100$\cellcolor{green!39} \\
 & \texttt{wine-q-red} & $0.141$\cellcolor{green!1} & $0.068$\cellcolor{green!37} & $0.103$\cellcolor{green!19} & $0.130$\cellcolor{green!6} & $\textbf{0.063}$\cellcolor{green!40} & $0.162$\cellcolor{red!9} & $0.156$\cellcolor{red!6} & $0.184$\cellcolor{red!20} & $0.194$\cellcolor{red!25} & $0.211$\cellcolor{red!34} & $0.193$\cellcolor{red!25} & $0.222$\cellcolor{red!40} & $\textbf{0.066}$\cellcolor{green!38} \\
 & \texttt{ctg.1} & $0.112$\cellcolor{red!6} & $0.024$\cellcolor{green!37} & $0.059$\cellcolor{green!19} & $\textbf{0.019}$\cellcolor{green!40} & $\textbf{0.020}$\cellcolor{green!39} & $0.098$\cellcolor{green!0} & $0.052$\cellcolor{green!23} & $0.028$\cellcolor{green!35} & $0.125$\cellcolor{red!12} & $0.173$\cellcolor{red!36} & $0.133$\cellcolor{red!16} & $0.179$\cellcolor{red!40} & $0.100$\cellcolor{red!0} \\
 & \texttt{ctg.2} & $0.122$\cellcolor{green!18} & $0.044$\cellcolor{green!37} & $0.128$\cellcolor{green!16} & $0.128$\cellcolor{green!16} & $\textbf{0.036}$\cellcolor{green!40} & $0.168$\cellcolor{green!6} & $0.145$\cellcolor{green!12} & $0.350$\cellcolor{red!40} & $0.187$\cellcolor{green!1} & $0.200$\cellcolor{red!1} & $0.159$\cellcolor{green!8} & $0.194$\cellcolor{red!0} & $0.273$\cellcolor{red!20} \\
 & \texttt{ctg.3} & $0.153$\cellcolor{red!1} & $0.036$\cellcolor{green!36} & $0.140$\cellcolor{green!3} & $0.140$\cellcolor{green!3} & $\textbf{0.027}$\cellcolor{green!40} & $0.110$\cellcolor{green!12} & $0.133$\cellcolor{green!5} & $0.271$\cellcolor{red!40} & $0.139$\cellcolor{green!3} & $0.181$\cellcolor{red!10} & $0.137$\cellcolor{green!3} & $0.145$\cellcolor{green!1} & $0.256$\cellcolor{red!35} \\
 & \texttt{spambase} & $0.119$\cellcolor{green!16} & $\textbf{0.024}$\cellcolor{green!39} & $0.119$\cellcolor{green!15} & $0.119$\cellcolor{green!15} & $\textbf{0.024}$\cellcolor{green!40} & $0.085$\cellcolor{green!24} & $0.196$\cellcolor{red!3} & $0.341$\cellcolor{red!40} & $0.141$\cellcolor{green!10} & $0.173$\cellcolor{green!2} & $0.155$\cellcolor{green!6} & $0.153$\cellcolor{green!7} & $0.104$\cellcolor{green!19} \\
 & \texttt{wine-q-white} & $0.195$\cellcolor{red!26} & $\textbf{0.063}$\cellcolor{green!35} & $0.125$\cellcolor{green!6} & $\textbf{0.054}$\cellcolor{green!40} & $\textbf{0.063}$\cellcolor{green!35} & $0.202$\cellcolor{red!29} & $0.113$\cellcolor{green!12} & $0.093$\cellcolor{green!21} & $0.211$\cellcolor{red!33} & $0.220$\cellcolor{red!38} & $0.206$\cellcolor{red!31} & $0.223$\cellcolor{red!40} & $0.207$\cellcolor{red!32} \\
 & \texttt{pageblocks.5} & $0.282$\cellcolor{red!18} & $\textbf{0.043}$\cellcolor{green!40} & $0.131$\cellcolor{green!18} & $0.131$\cellcolor{green!18} & $0.076$\cellcolor{green!31} & $0.223$\cellcolor{red!4} & $0.162$\cellcolor{green!10} & $0.368$\cellcolor{red!40} & $0.216$\cellcolor{red!2} & $0.217$\cellcolor{red!2} & $0.160$\cellcolor{green!11} & $0.274$\cellcolor{red!16} & $0.263$\cellcolor{red!14} \\\midrule
\multirow{10}{*}{\begin{sideways}k Nearest Neighbor\;\end{sideways}} & \texttt{cmc.3} & $0.319$\cellcolor{red!40} & $\textbf{0.094}$\cellcolor{green!40} & $0.128$\cellcolor{green!27} & $0.128$\cellcolor{green!27} & $\textbf{0.108}$\cellcolor{green!34} & $0.210$\cellcolor{red!1} & $\textbf{0.109}$\cellcolor{green!34} & $0.165$\cellcolor{green!14} & $0.231$\cellcolor{red!8} & $0.241$\cellcolor{red!12} & $0.184$\cellcolor{green!8} & $0.238$\cellcolor{red!11} & $0.209$\cellcolor{red!0} \\
 & \texttt{yeast} & $0.246$\cellcolor{red!40} & $0.094$\cellcolor{green!22} & $\textbf{0.064}$\cellcolor{green!35} & $\textbf{0.064}$\cellcolor{green!35} & $\textbf{0.059}$\cellcolor{green!37} & $0.219$\cellcolor{red!29} & $\textbf{0.052}$\cellcolor{green!40} & $\textbf{0.055}$\cellcolor{green!38} & $0.199$\cellcolor{red!20} & $0.219$\cellcolor{red!28} & $0.183$\cellcolor{red!13} & $0.202$\cellcolor{red!22} & $0.172$\cellcolor{red!9} \\
 & \texttt{semeion} & $0.172$\cellcolor{red!26} & $\textbf{0.015}$\cellcolor{green!38} & $\textbf{0.012}$\cellcolor{green!39} & $\textbf{0.012}$\cellcolor{green!39} & $\textbf{0.012}$\cellcolor{green!40} & $0.056$\cellcolor{green!21} & $0.045$\cellcolor{green!26} & $0.072$\cellcolor{green!14} & $0.151$\cellcolor{red!17} & $0.205$\cellcolor{red!40} & $0.155$\cellcolor{red!19} & $0.154$\cellcolor{red!18} & $0.016$\cellcolor{green!38} \\
 & \texttt{wine-q-red} & $0.144$\cellcolor{red!3} & $\textbf{0.056}$\cellcolor{green!39} & $0.070$\cellcolor{green!32} & $0.070$\cellcolor{green!32} & $\textbf{0.054}$\cellcolor{green!40} & $0.155$\cellcolor{red!8} & $0.101$\cellcolor{green!17} & $0.165$\cellcolor{red!13} & $0.194$\cellcolor{red!27} & $0.220$\cellcolor{red!40} & $0.201$\cellcolor{red!30} & $0.214$\cellcolor{red!37} & $\textbf{0.056}$\cellcolor{green!39} \\
 & \texttt{ctg.1} & $0.122$\cellcolor{red!9} & $0.041$\cellcolor{green!28} & $\textbf{0.017}$\cellcolor{green!40} & $\textbf{0.017}$\cellcolor{green!40} & $0.023$\cellcolor{green!36} & $0.074$\cellcolor{green!12} & $0.026$\cellcolor{green!35} & $0.023$\cellcolor{green!36} & $0.147$\cellcolor{red!21} & $0.186$\cellcolor{red!40} & $0.151$\cellcolor{red!23} & $0.146$\cellcolor{red!21} & $\textbf{0.020}$\cellcolor{green!38} \\
 & \texttt{ctg.2} & $0.228$\cellcolor{red!40} & $0.072$\cellcolor{green!19} & $\textbf{0.017}$\cellcolor{green!40} & $\textbf{0.017}$\cellcolor{green!40} & $0.031$\cellcolor{green!34} & $0.156$\cellcolor{red!12} & $0.028$\cellcolor{green!35} & $0.027$\cellcolor{green!36} & $0.176$\cellcolor{red!20} & $0.201$\cellcolor{red!29} & $0.139$\cellcolor{red!6} & $0.182$\cellcolor{red!22} & $0.032$\cellcolor{green!34} \\
 & \texttt{ctg.3} & $0.219$\cellcolor{red!40} & $0.036$\cellcolor{green!32} & $\textbf{0.017}$\cellcolor{green!39} & $\textbf{0.017}$\cellcolor{green!39} & $\textbf{0.018}$\cellcolor{green!39} & $0.098$\cellcolor{green!7} & $\textbf{0.021}$\cellcolor{green!38} & $\textbf{0.018}$\cellcolor{green!39} & $0.188$\cellcolor{red!27} & $0.193$\cellcolor{red!29} & $0.166$\cellcolor{red!19} & $0.189$\cellcolor{red!28} & $\textbf{0.017}$\cellcolor{green!40} \\
 & \texttt{spambase} & $0.080$\cellcolor{green!7} & $\textbf{0.017}$\cellcolor{green!39} & $0.020$\cellcolor{green!37} & $0.020$\cellcolor{green!37} & $\textbf{0.016}$\cellcolor{green!40} & $0.059$\cellcolor{green!17} & $0.031$\cellcolor{green!32} & $0.022$\cellcolor{green!36} & $0.090$\cellcolor{green!2} & $0.174$\cellcolor{red!40} & $0.091$\cellcolor{green!1} & $0.096$\cellcolor{red!0} & $0.021$\cellcolor{green!37} \\
 & \texttt{wine-q-white} & $0.172$\cellcolor{red!18} & $\textbf{0.044}$\cellcolor{green!39} & $0.098$\cellcolor{green!14} & $0.098$\cellcolor{green!14} & $\textbf{0.042}$\cellcolor{green!40} & $0.176$\cellcolor{red!19} & $0.154$\cellcolor{red!10} & $0.217$\cellcolor{red!38} & $0.206$\cellcolor{red!33} & $0.221$\cellcolor{red!40} & $0.196$\cellcolor{red!28} & $0.210$\cellcolor{red!35} & $0.052$\cellcolor{green!35} \\
 & \texttt{pageblocks.5} & $0.351$\cellcolor{red!40} & $\textbf{0.031}$\cellcolor{green!40} & $0.107$\cellcolor{green!20} & $0.107$\cellcolor{green!20} & $0.059$\cellcolor{green!32} & $0.217$\cellcolor{red!6} & $0.094$\cellcolor{green!24} & $0.082$\cellcolor{green!27} & $0.252$\cellcolor{red!15} & $0.213$\cellcolor{red!5} & $0.142$\cellcolor{green!12} & $0.304$\cellcolor{red!28} & $0.088$\cellcolor{green!25} \\\midrule
\multirow{10}{*}{\begin{sideways}Multi-layer Perceptron\;\end{sideways}} & \texttt{cmc.3} & $0.218$\cellcolor{red!6} & $0.083$\cellcolor{green!33} & $0.092$\cellcolor{green!31} & $\textbf{0.062}$\cellcolor{green!40} & $0.070$\cellcolor{green!37} & $0.203$\cellcolor{red!2} & $0.154$\cellcolor{green!12} & $0.163$\cellcolor{green!9} & $0.222$\cellcolor{red!8} & $0.232$\cellcolor{red!10} & $0.191$\cellcolor{green!1} & $0.227$\cellcolor{red!9} & $0.328$\cellcolor{red!40} \\
 & \texttt{yeast} & $0.195$\cellcolor{red!28} & $\textbf{0.046}$\cellcolor{green!38} & $0.063$\cellcolor{green!31} & $\textbf{0.044}$\cellcolor{green!40} & $\textbf{0.046}$\cellcolor{green!38} & $0.213$\cellcolor{red!37} & $0.102$\cellcolor{green!13} & $0.101$\cellcolor{green!13} & $0.199$\cellcolor{red!30} & $0.219$\cellcolor{red!40} & $0.179$\cellcolor{red!21} & $0.202$\cellcolor{red!32} & $0.202$\cellcolor{red!32} \\
 & \texttt{semeion} & $0.075$\cellcolor{green!10} & $0.020$\cellcolor{green!34} & $\textbf{0.010}$\cellcolor{green!39} & $\textbf{0.009}$\cellcolor{green!40} & $0.014$\cellcolor{green!37} & $0.046$\cellcolor{green!23} & $0.023$\cellcolor{green!33} & $0.024$\cellcolor{green!33} & $0.079$\cellcolor{green!8} & $0.189$\cellcolor{red!40} & $0.084$\cellcolor{green!6} & $0.079$\cellcolor{green!8} & $0.056$\cellcolor{green!19} \\
 & \texttt{wine-q-red} & $0.119$\cellcolor{green!1} & $\textbf{0.040}$\cellcolor{green!38} & $0.054$\cellcolor{green!31} & $\textbf{0.037}$\cellcolor{green!39} & $\textbf{0.036}$\cellcolor{green!40} & $0.163$\cellcolor{red!19} & $0.082$\cellcolor{green!18} & $0.064$\cellcolor{green!27} & $0.176$\cellcolor{red!25} & $0.207$\cellcolor{red!40} & $0.177$\cellcolor{red!25} & $0.197$\cellcolor{red!35} & $\textbf{0.042}$\cellcolor{green!37} \\
 & \texttt{ctg.1} & $0.082$\cellcolor{green!4} & $0.018$\cellcolor{green!36} & $0.016$\cellcolor{green!37} & $\textbf{0.011}$\cellcolor{green!40} & $\textbf{0.012}$\cellcolor{green!39} & $0.096$\cellcolor{red!3} & $0.024$\cellcolor{green!33} & $0.036$\cellcolor{green!27} & $0.103$\cellcolor{red!6} & $0.168$\cellcolor{red!40} & $0.107$\cellcolor{red!8} & $0.106$\cellcolor{red!8} & $0.053$\cellcolor{green!18} \\
 & \texttt{ctg.2} & $0.181$\cellcolor{red!29} & $\textbf{0.037}$\cellcolor{green!38} & $0.053$\cellcolor{green!30} & $\textbf{0.033}$\cellcolor{green!40} & $\textbf{0.036}$\cellcolor{green!38} & $0.180$\cellcolor{red!29} & $0.079$\cellcolor{green!18} & $0.126$\cellcolor{red!3} & $0.182$\cellcolor{red!30} & $0.202$\cellcolor{red!40} & $0.156$\cellcolor{red!18} & $0.187$\cellcolor{red!32} & $0.190$\cellcolor{red!34} \\
 & \texttt{ctg.3} & $0.112$\cellcolor{red!11} & $0.026$\cellcolor{green!32} & $\textbf{0.010}$\cellcolor{green!40} & $0.020$\cellcolor{green!35} & $0.041$\cellcolor{green!24} & $0.083$\cellcolor{green!3} & $0.158$\cellcolor{red!34} & $0.138$\cellcolor{red!24} & $0.110$\cellcolor{red!9} & $0.170$\cellcolor{red!40} & $0.118$\cellcolor{red!14} & $0.111$\cellcolor{red!10} & $0.109$\cellcolor{red!9} \\
 & \texttt{spambase} & $0.039$\cellcolor{green!25} & $\textbf{0.014}$\cellcolor{green!39} & $\textbf{0.016}$\cellcolor{green!38} & $\textbf{0.016}$\cellcolor{green!38} & $\textbf{0.014}$\cellcolor{green!40} & $0.050$\cellcolor{green!19} & $0.027$\cellcolor{green!32} & $0.022$\cellcolor{green!35} & $0.060$\cellcolor{green!13} & $0.153$\cellcolor{red!40} & $0.051$\cellcolor{green!18} & $0.060$\cellcolor{green!13} & $0.035$\cellcolor{green!27} \\
 & \texttt{wine-q-white} & $0.172$\cellcolor{red!21} & $0.033$\cellcolor{green!38} & $\textbf{0.034}$\cellcolor{green!37} & $\textbf{0.030}$\cellcolor{green!39} & $\textbf{0.029}$\cellcolor{green!40} & $0.156$\cellcolor{red!14} & $0.068$\cellcolor{green!23} & $0.072$\cellcolor{green!21} & $0.192$\cellcolor{red!30} & $0.214$\cellcolor{red!40} & $0.185$\cellcolor{red!27} & $0.191$\cellcolor{red!30} & $0.072$\cellcolor{green!21} \\
 & \texttt{pageblocks.5} & $0.219$\cellcolor{red!29} & $0.029$\cellcolor{green!37} & $0.029$\cellcolor{green!37} & $0.029$\cellcolor{green!37} & $\textbf{0.022}$\cellcolor{green!40} & $0.148$\cellcolor{red!4} & $0.059$\cellcolor{green!26} & $0.038$\cellcolor{green!34} & $0.213$\cellcolor{red!27} & $0.191$\cellcolor{red!19} & $0.177$\cellcolor{red!14} & $0.204$\cellcolor{red!24} & $0.248$\cellcolor{red!40} \\\bottomrule
\multirow{4}{*}{\begin{sideways}Wins\;\end{sideways}} &  $\Pr(M\succ {1}R)$ &$25.47\%$ & --- & --- & --- & --- & $32.30\%$ & $54.67\%$ & $48.98\%$ & $25.45\%$ & $20.95\%$ & $31.40\%$ & $23.15\%$ & $49.33\%$ \\
 &  $\Pr(M\succ {2}R)$ &$21.43\%$ & --- & --- & --- & --- & $27.27\%$ & $43.43\%$ & $38.70\%$ & $21.10\%$ & $17.55\%$ & $25.97\%$ & $19.48\%$ & $39.40\%$ \\
 &  $\Pr(M\succ {3}R)$ &$11.10\%$ & --- & --- & --- & --- & $14.88\%$ & $26.52\%$ & $23.33\%$ & $12.40\%$ & $10.12\%$ & $14.60\%$ & $11.35\%$ & $21.70\%$ \\
 &  $\Pr(M\succ {4}R)$ &$7.72\%$ & --- & --- & --- & --- & $10.47\%$ & $16.75\%$ & $14.17\%$ & $8.20\%$ & $6.42\%$ & $9.65\%$ & $7.83\%$ & $13.45\%$ \\\bottomrule
  & Ave Rank &9.04 & \textbf{4.50} & \textbf{4.87} & \textbf{4.59} & \textbf{4.09} & 7.60 & 6.10 & 6.90 & 8.96 & 9.72 & 8.14 & 9.51 & 6.98 \\\bottomrule
\end{tabular}
    }%    
    \label{tab:quant:labelshift}
\end{table}

Figure~\ref{fig:quant:errbyshift} show the ``errors-by-shift'' plots of quantification experiments, side-by-side for both types of shift, for a selection of methods. Interestingly enough, TransCal$_{\calib2\quant}$ is not only the best performer under CS, but is also the most robust to the intensity of this shift. The rest of the methods' performance suffer when the level of CS increases. Concerning the level of LS, not only the three reference methods PACC, EMQ, and KDEy, but also the new adaptation DoC$_{\acc2\quant}$, seem to be pretty robust against it. Despite the fact that LasCal is designed to counter LS, its performance degrades notably as the intensity of the shift increases.

Figure~\ref{fig:quant:cddiagram} display the CD-diagrams for the quantification experiments for both types of shift side by side.
It is interesting to see how TransCal$_{\calib2\quant}$ dominates the rank for CS in isolation, i.e., without being included in any critical difference group. The same plot reveals LEAP$^{\text{PCC}}_{\acc2\quant}$ and DoC$_{\acc2\quant}$ are very close to each other, in a statistically significant sense, to PCC under CS.
By comparing the CD-diagrams for both types of shift, it is interesting to see how these are almost specular, i.e., the best performing methods for CS are among the worst performing methods for LS, and viceversa (see, e.g., TransCal$_{\calib2\quant}$). 
This looks like an indication that, in order to make an informed decision about which method to use, we should be pretty sure about the type of shift at play, which in turn suggests more research on dataset shift detection \cite{Rabanser:2019ba} urges.
A notable exception to this phenomenon is DoC$_{\acc2\quant}$ which occupies a relatively high rank in both types of shift.
%Concerning prior probability shift experiments, the CD-diagram reveals that the top-performer KDEy and EMQ actually belong to the same group, closely followed by PACC. 
%Also in this case DoC$_{\acc2\quant}$ occupies a high rank. 
%Note also that TransCal$_{\calib2\quant}$, the best performer under CS, turns out to be the worst performer under prior probability shift.

\begin{figure}
    \centering
    \includegraphics[width=0.48\linewidth]{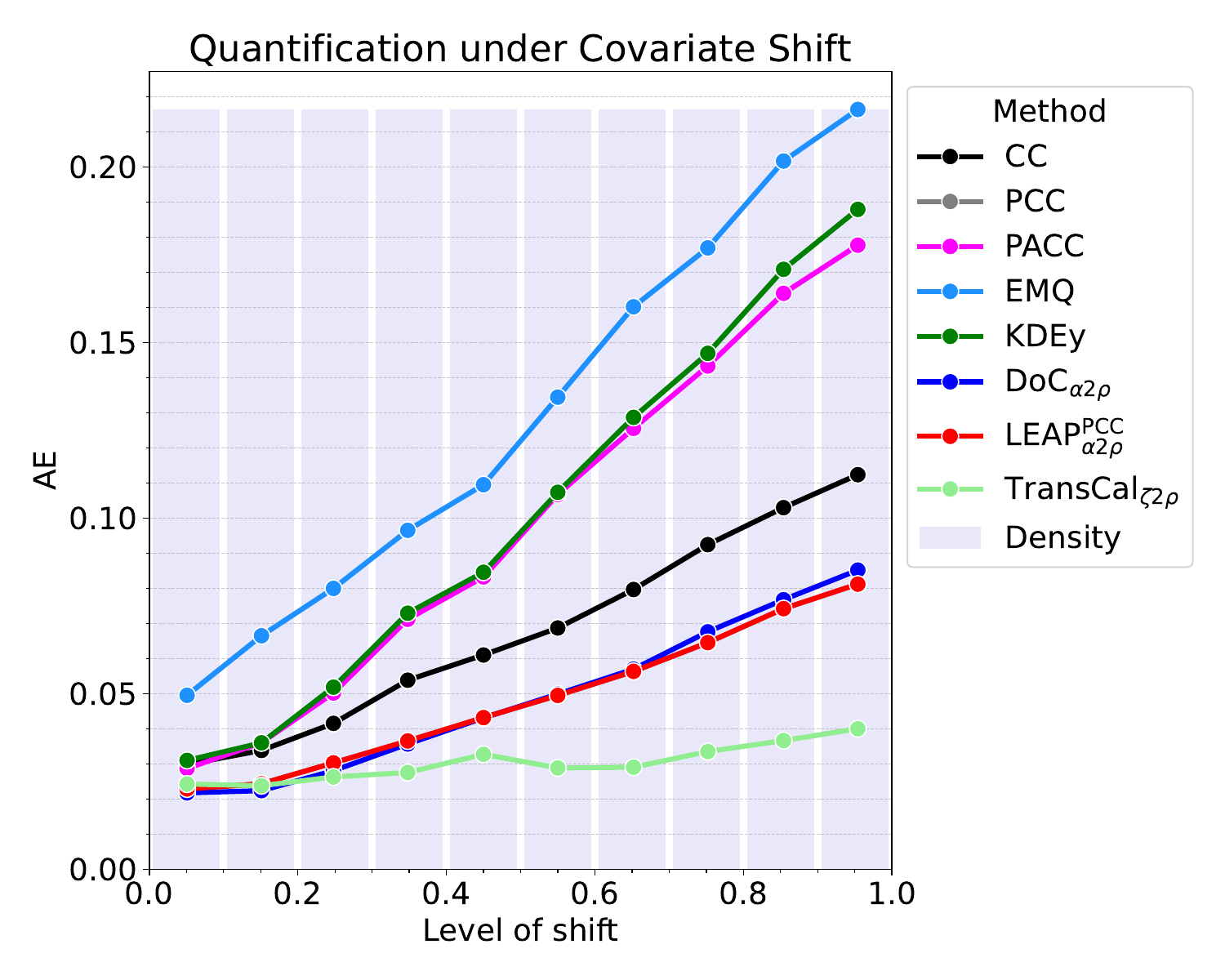}
    \includegraphics[width=0.48\linewidth]{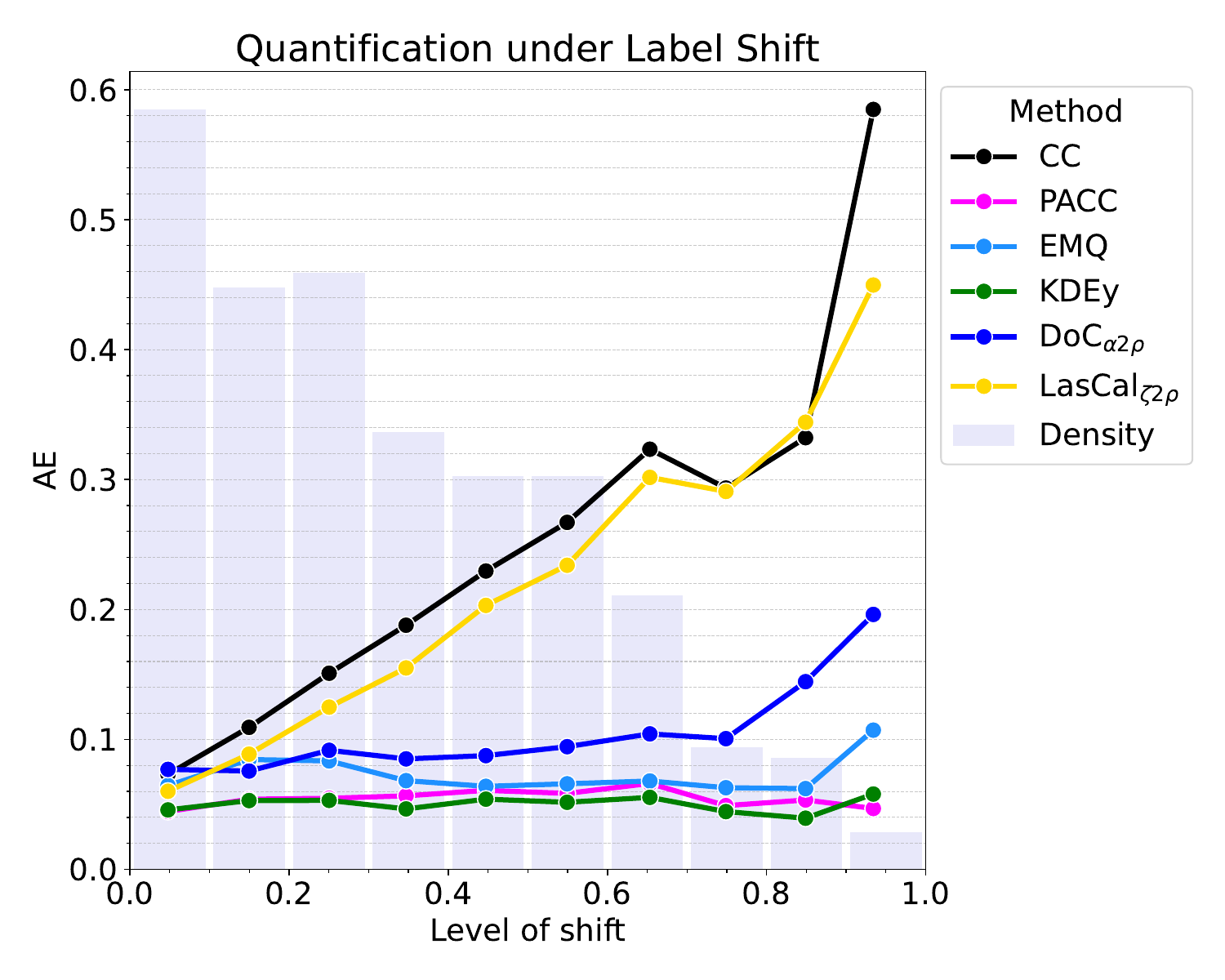}
    \caption{Quantification error in terms of AE as a function of shift intensity in CS experiments (left panel) and LS (right panel).}
    \label{fig:quant:errbyshift}

    \vspace{1cm}

    \centering
    \resizebox{0.48\textwidth}{!}{%
    \begin{tikzpicture}[
  treatment line/.style={rounded corners=1.5pt, line cap=round, shorten >=1pt},
  treatment label/.style={font=\small},
  group line/.style={ultra thick},
]

\begin{axis}[
  clip={false},
  axis x line={center},
  axis y line={none},
  axis line style={-},
  xmin={1},
  ymax={0},
  scale only axis={true},
  width={\axisdefaultwidth},
  ticklabel style={anchor=south, yshift=1.3*\pgfkeysvalueof{/pgfplots/major tick length}, font=\small},
  every tick/.style={draw=black},
  major tick style={yshift=.5*\pgfkeysvalueof{/pgfplots/major tick length}},
  minor tick style={yshift=.5*\pgfkeysvalueof{/pgfplots/minor tick length}},
  title style={yshift=\baselineskip},
  xmax={12},
  ymin={-7.5},
  height={8\baselineskip},
  title={Quantification under Covariate Shift},
]

\draw[treatment line] ([yshift=-2pt] axis cs:3.6633333333333336, 0) |- (axis cs:2.6633333333333336, -2.0)
  node[treatment label, anchor=east] {TransCal$_{\zeta 2 \rho}$};
\draw[treatment line] ([yshift=-2pt] axis cs:4.6386111111111115, 0) |- (axis cs:2.6633333333333336, -3.0)
  node[treatment label, anchor=east] {LEAP$_{\alpha 2 \rho}^{\text{PCC}}$};
\draw[treatment line] ([yshift=-2pt] axis cs:4.693611111111111, 0) |- (axis cs:2.6633333333333336, -4.0)
  node[treatment label, anchor=east] {\textbf{PCC}};
\draw[treatment line] ([yshift=-2pt] axis cs:4.721111111111111, 0) |- (axis cs:2.6633333333333336, -5.0)
  node[treatment label, anchor=east] {DoC$_{\alpha 2 \rho}$};
\draw[treatment line] ([yshift=-2pt] axis cs:5.5183333333333335, 0) |- (axis cs:2.6633333333333336, -6.0)
  node[treatment label, anchor=east] {LasCal$_{\zeta 2 \rho}$};
\draw[treatment line] ([yshift=-2pt] axis cs:5.675833333333333, 0) |- (axis cs:2.6633333333333336, -7.0)
  node[treatment label, anchor=east] {CPCS$_{\zeta 2 \rho}$};
\draw[treatment line] ([yshift=-2pt] axis cs:5.941111111111111, 0) |- (axis cs:10.835, -7.0)
  node[treatment label, anchor=west] {ATC$_{\alpha 2 \rho}$};
\draw[treatment line] ([yshift=-2pt] axis cs:7.275277777777778, 0) |- (axis cs:10.835, -6.0)
  node[treatment label, anchor=west] {CC};
\draw[treatment line] ([yshift=-2pt] axis cs:8.116111111111111, 0) |- (axis cs:10.835, -5.0)
  node[treatment label, anchor=west] {\textbf{PACC}};
\draw[treatment line] ([yshift=-2pt] axis cs:8.524444444444445, 0) |- (axis cs:10.835, -4.0)
  node[treatment label, anchor=west] {\textbf{KDEy}};
\draw[treatment line] ([yshift=-2pt] axis cs:9.397222222222222, 0) |- (axis cs:10.835, -3.0)
  node[treatment label, anchor=west] {LEAP$_{\alpha 2 \rho}$};
\draw[treatment line] ([yshift=-2pt] axis cs:9.835, 0) |- (axis cs:10.835, -2.0)
  node[treatment label, anchor=west] {\textbf{EMQ}};
\draw[group line] (axis cs:4.6386111111111115, -2.0) -- (axis cs:4.721111111111111, -2.0);
\draw[group line] (axis cs:9.397222222222222, -1.3333333333333333) -- (axis cs:9.835, -1.3333333333333333);

\end{axis}
\end{tikzpicture}
    }
    \resizebox{0.48\textwidth}{!}{%
    \begin{tikzpicture}[
  treatment line/.style={rounded corners=1.5pt, line cap=round, shorten >=1pt},
  treatment label/.style={font=\small},
  group line/.style={ultra thick},
]

\begin{axis}[
  clip={false},
  axis x line={center},
  axis y line={none},
  axis line style={-},
  xmin={1},
  ymax={0},
  scale only axis={true},
  width={\axisdefaultwidth},
  ticklabel style={anchor=south, yshift=1.3*\pgfkeysvalueof{/pgfplots/major tick length}, font=\small},
  every tick/.style={draw=black},
  major tick style={yshift=.5*\pgfkeysvalueof{/pgfplots/major tick length}},
  minor tick style={yshift=.5*\pgfkeysvalueof{/pgfplots/minor tick length}},
  title style={yshift=\baselineskip},
  xmax={13},
  ymin={-7.5},
  height={8\baselineskip},
  xtick={1,4,7,10,13},
  minor x tick num={1},
  title={Quantification under Label Shift},
]

\draw[treatment line] ([yshift=-2pt] axis cs:4.07925, 0) |- (axis cs:2.995916666666667, -2.5)
  node[treatment label, anchor=east] {\textbf{KDEy}};
\draw[treatment line] ([yshift=-2pt] axis cs:4.51225, 0) |- (axis cs:2.995916666666667, -3.5)
  node[treatment label, anchor=east] {\textbf{PACC}};
\draw[treatment line] ([yshift=-2pt] axis cs:4.603875, 0) |- (axis cs:2.995916666666667, -4.5)
  node[treatment label, anchor=east] {\textbf{EMQ$^{\text{BCTS}}$}};
\draw[treatment line] ([yshift=-2pt] axis cs:4.847875, 0) |- (axis cs:2.995916666666667, -5.5)
  node[treatment label, anchor=east] {\textbf{EMQ}};
\draw[treatment line] ([yshift=-2pt] axis cs:6.11025, 0) |- (axis cs:2.995916666666667, -6.5)
  node[treatment label, anchor=east] {DoC$_{\alpha 2 \rho}$};
\draw[treatment line] ([yshift=-2pt] axis cs:6.893375, 0) |- (axis cs:2.995916666666667, -7.5)
  node[treatment label, anchor=east] {LEAP$_{\alpha 2 \rho}$};
\draw[treatment line] ([yshift=-2pt] axis cs:6.97925, 0) |- (axis cs:10.802833333333334, -8.0)
  node[treatment label, anchor=west] {EMQ$^{\text{LasCal}}$};
\draw[treatment line] ([yshift=-2pt] axis cs:7.6, 0) |- (axis cs:10.802833333333334, -7.0)
  node[treatment label, anchor=west] {ATC$_{\alpha 2 \rho}$};
\draw[treatment line] ([yshift=-2pt] axis cs:8.13675, 0) |- (axis cs:10.802833333333334, -6.0)
  node[treatment label, anchor=west] {LasCal$_{\zeta 2 \rho}$};
\draw[treatment line] ([yshift=-2pt] axis cs:8.959625, 0) |- (axis cs:10.802833333333334, -5.0)
  node[treatment label, anchor=west] {CPCS$_{\zeta 2 \rho}$};
\draw[treatment line] ([yshift=-2pt] axis cs:9.04975, 0) |- (axis cs:10.802833333333334, -4.0)
  node[treatment label, anchor=west] {CC};
\draw[treatment line] ([yshift=-2pt] axis cs:9.50825, 0) |- (axis cs:10.802833333333334, -3.0)
  node[treatment label, anchor=west] {Head2Tail$_{\zeta 2 \rho}$};
\draw[treatment line] ([yshift=-2pt] axis cs:9.7195, 0) |- (axis cs:10.802833333333334, -2.0)
  node[treatment label, anchor=west] {TransCal$_{\zeta 2 \rho}$};
\draw[group line] (axis cs:8.959625, -2.6666666666666665) -- (axis cs:9.04975, -2.6666666666666665);

\end{axis}
\end{tikzpicture}
    }
    \caption{CD-diagrams for CS experiments (left panel) and LS experiments (right panel). Reference quantification methods are highlighted in boldtype.}
    \label{fig:quant:cddiagram}
\end{figure}

\clearpage

% ----------------------------------------------------------
\subsubsection{Classifier Accuracy Prediction Experiments}
\label{sec:exp:cap}

We now turn to discussing the results we have obtained for classifier accuracy prediction. The evaluation measure we adopt is the Absolute estimation Error (AE) \cite{Garg:2022qv}, given by:
\begin{equation}
    \text{AE}=|\text{Acc}-\hat{\text{Acc}}|
\end{equation}
\noindent between the true accuracy ($\text{Acc}$) and the estimated accuracy ($\hat{\text{Acc}}$). As our measure of classifier accuracy, we consider vanilla accuracy, i.e., the proportion of correctly classified instances. 
We report mean scores of AE averaged across all test samples generated for each dataset.
%
% \begin{equation}
%     \text{Acc}=\frac{\text{TP}+\text{TN}}{\text{TP}+\text{TN}+\text{FP}+\text{FN}}
% \end{equation}
% %
% \noindent where TP, TN, FP, and FN stand for true positives, true negatives, false positives, and false negatives, respectively. See also the equivalent formulation given in Equation~\ref{eq:oacc}.

We confront our reference methods ATC, DoC, and LEAP against direct adaptations of the reference methods from quantification and calibration. The IID baseline (Naive) is a method that, regardless of the test set, reports the accuracy estimated in the validation set. In CS experiments, we additionally consider LEAP$^{\text{PCC}}$ as a reference method, and PCC$_{\quant2\acc}$ as an adaptation method. In this case, we also experiment with DMCal$_{\calib2\acc}$, an adaptation of the newly proposed DMcal method which in the experiments of Section~\ref{sec:exp:calibration} showed promising performance for calibration.

\begin{table}[b!]
    \caption{Classifier accuracy prediction under CS in terms of AE}
    \vspace{0.5cm}
    \centering
    \resizebox{\textwidth}{!}{%
    \begin{tabular}{cc|c|cccc|ccccccccc} \toprule
\multicolumn{2}{c}{} & \multicolumn{1}{c|}{Baselines} & \multicolumn{4}{c|}{Reference} & \multicolumn{9}{c}{Adapted Methods} \\
\multicolumn{2}{c}{} & \begin{sideways}Naive\;\end{sideways} & \begin{sideways}ATC\;\end{sideways} & \begin{sideways}DoC\;\end{sideways} & \begin{sideways}LEAP\;\end{sideways} & \begin{sideways}LEAP$^{\text{PCC}}$\;\end{sideways} & \begin{sideways}PCC$_{\rho 2 \alpha}$\;\end{sideways} & \begin{sideways}PACC$_{\rho 2 \alpha}$\;\end{sideways} & \begin{sideways}EMQ$_{\rho 2 \alpha}$\;\end{sideways} & \begin{sideways}EMQ$_{\rho 2 \alpha}^{\text{BCTS}}$\;\end{sideways} & \begin{sideways}KDEy$_{\rho 2 \alpha}$\;\end{sideways} & \begin{sideways}CPCS$_{\zeta 2 \alpha}$\;\end{sideways} & \begin{sideways}TransCal$_{\zeta 2 \alpha}$\;\end{sideways} & \begin{sideways}LasCal$_{\zeta 2 \alpha}$\;\end{sideways} & \begin{sideways}DMCal$_{\zeta 2 \alpha}$\;\end{sideways} \\\midrule
\multirow{6}{*}{\begin{sideways}BERT\;\end{sideways}} & \texttt{imdb}$\rightarrow$\texttt{rt} & $0.044$\cellcolor{green!38} & $\textbf{0.033}$\cellcolor{green!39} & $\textbf{0.032}$\cellcolor{green!40} & $0.037$\cellcolor{green!39} & $0.042$\cellcolor{green!38} & $0.053$\cellcolor{green!37} & $0.129$\cellcolor{green!28} & $0.722$\cellcolor{red!40} & $0.128$\cellcolor{green!28} & $0.157$\cellcolor{green!25} & $0.117$\cellcolor{green!30} & $0.130$\cellcolor{green!28} & $\textbf{0.036}$\cellcolor{green!39} & $0.120$\cellcolor{green!29} \\
 & \texttt{imdb}$\rightarrow$\texttt{yelp} & $0.026$\cellcolor{green!39} & $0.027$\cellcolor{green!39} & $\textbf{0.022}$\cellcolor{green!40} & $0.027$\cellcolor{green!39} & $0.041$\cellcolor{green!37} & $0.068$\cellcolor{green!34} & $0.071$\cellcolor{green!34} & $0.722$\cellcolor{red!40} & $0.073$\cellcolor{green!34} & $0.084$\cellcolor{green!32} & $0.087$\cellcolor{green!32} & $0.150$\cellcolor{green!25} & $0.043$\cellcolor{green!37} & $0.059$\cellcolor{green!35} \\
 & \texttt{rt}$\rightarrow$\texttt{imdb} & $\textbf{0.024}$\cellcolor{green!39} & $0.032$\cellcolor{green!38} & $\textbf{0.024}$\cellcolor{green!40} & $\textbf{0.026}$\cellcolor{green!39} & $0.060$\cellcolor{green!35} & $0.091$\cellcolor{green!32} & $0.082$\cellcolor{green!33} & $0.696$\cellcolor{red!40} & $0.079$\cellcolor{green!33} & $0.124$\cellcolor{green!28} & $0.118$\cellcolor{green!28} & $0.142$\cellcolor{green!25} & $0.038$\cellcolor{green!38} & $0.105$\cellcolor{green!30} \\
 & \texttt{rt}$\rightarrow$\texttt{yelp} & $0.024$\cellcolor{green!39} & $0.031$\cellcolor{green!38} & $\textbf{0.021}$\cellcolor{green!40} & $0.031$\cellcolor{green!38} & $0.039$\cellcolor{green!37} & $0.089$\cellcolor{green!31} & $0.096$\cellcolor{green!31} & $0.689$\cellcolor{red!40} & $0.095$\cellcolor{green!31} & $0.104$\cellcolor{green!30} & $0.106$\cellcolor{green!29} & $0.137$\cellcolor{green!26} & $0.051$\cellcolor{green!36} & $0.149$\cellcolor{green!24} \\
 & \texttt{yelp}$\rightarrow$\texttt{imdb} & $0.076$\cellcolor{green!25} & $0.053$\cellcolor{green!33} & $0.058$\cellcolor{green!31} & $0.068$\cellcolor{green!28} & $0.113$\cellcolor{green!12} & $\textbf{0.039}$\cellcolor{green!38} & $\textbf{0.041}$\cellcolor{green!37} & $0.261$\cellcolor{red!40} & $0.045$\cellcolor{green!36} & $0.048$\cellcolor{green!35} & $0.105$\cellcolor{green!15} & $0.090$\cellcolor{green!20} & $0.060$\cellcolor{green!31} & $\textbf{0.035}$\cellcolor{green!40} \\
 & \texttt{yelp}$\rightarrow$\texttt{rt} & $0.094$\cellcolor{green!25} & $0.058$\cellcolor{green!35} & $0.066$\cellcolor{green!33} & $0.081$\cellcolor{green!29} & $0.128$\cellcolor{green!16} & $\textbf{0.044}$\cellcolor{green!38} & $0.067$\cellcolor{green!32} & $0.339$\cellcolor{red!40} & $0.074$\cellcolor{green!30} & $0.060$\cellcolor{green!34} & $0.109$\cellcolor{green!21} & $0.084$\cellcolor{green!28} & $0.070$\cellcolor{green!31} & $\textbf{0.040}$\cellcolor{green!40} \\\midrule
\multirow{6}{*}{\begin{sideways}DistilBERT\;\end{sideways}} & \texttt{imdb}$\rightarrow$\texttt{rt} & $0.047$\cellcolor{green!17} & $\textbf{0.038}$\cellcolor{green!38} & $0.038$\cellcolor{green!38} & $0.048$\cellcolor{green!13} & $\textbf{0.038}$\cellcolor{green!38} & $\textbf{0.037}$\cellcolor{green!40} & $\textbf{0.044}$\cellcolor{green!23} & $\textbf{0.047}$\cellcolor{green!17} & $\textbf{0.045}$\cellcolor{green!23} & $0.055$\cellcolor{red!2} & $0.066$\cellcolor{red!27} & $0.071$\cellcolor{red!40} & $0.063$\cellcolor{red!19} & $\textbf{0.040}$\cellcolor{green!34} \\
 & \texttt{imdb}$\rightarrow$\texttt{yelp} & $\textbf{0.017}$\cellcolor{green!39} & $0.020$\cellcolor{green!36} & $\textbf{0.016}$\cellcolor{green!40} & $\textbf{0.018}$\cellcolor{green!38} & $0.020$\cellcolor{green!36} & $0.017$\cellcolor{green!39} & $0.041$\cellcolor{green!18} & $0.043$\cellcolor{green!17} & $0.041$\cellcolor{green!19} & $0.054$\cellcolor{green!7} & $0.046$\cellcolor{green!14} & $0.111$\cellcolor{red!40} & $0.023$\cellcolor{green!34} & $0.038$\cellcolor{green!21} \\
 & \texttt{rt}$\rightarrow$\texttt{imdb} & $\textbf{0.026}$\cellcolor{green!40} & $0.038$\cellcolor{green!34} & $\textbf{0.027}$\cellcolor{green!39} & $0.032$\cellcolor{green!37} & $0.036$\cellcolor{green!35} & $\textbf{0.030}$\cellcolor{green!38} & $0.125$\cellcolor{red!5} & $0.200$\cellcolor{red!40} & $0.123$\cellcolor{red!4} & $0.122$\cellcolor{red!4} & $0.054$\cellcolor{green!27} & $0.124$\cellcolor{red!5} & $0.037$\cellcolor{green!35} & $0.092$\cellcolor{green!9} \\
 & \texttt{rt}$\rightarrow$\texttt{yelp} & $0.025$\cellcolor{green!35} & $0.029$\cellcolor{green!31} & $0.029$\cellcolor{green!31} & $0.030$\cellcolor{green!29} & $0.051$\cellcolor{green!8} & $\textbf{0.020}$\cellcolor{green!40} & $0.086$\cellcolor{red!27} & $0.064$\cellcolor{red!5} & $0.081$\cellcolor{red!22} & $0.096$\cellcolor{red!38} & $0.071$\cellcolor{red!11} & $0.098$\cellcolor{red!40} & $0.048$\cellcolor{green!11} & $0.080$\cellcolor{red!21} \\
 & \texttt{yelp}$\rightarrow$\texttt{imdb} & $0.070$\cellcolor{green!9} & $\textbf{0.044}$\cellcolor{green!32} & $\textbf{0.039}$\cellcolor{green!37} & $0.066$\cellcolor{green!12} & $0.089$\cellcolor{red!8} & $\textbf{0.039}$\cellcolor{green!37} & $0.069$\cellcolor{green!9} & $0.123$\cellcolor{red!40} & $0.078$\cellcolor{green!1} & $0.049$\cellcolor{green!28} & $0.060$\cellcolor{green!18} & $0.067$\cellcolor{green!11} & $0.058$\cellcolor{green!19} & $\textbf{0.036}$\cellcolor{green!40} \\
 & \texttt{yelp}$\rightarrow$\texttt{rt} & $0.099$\cellcolor{red!4} & $0.070$\cellcolor{green!17} & $0.059$\cellcolor{green!25} & $0.097$\cellcolor{red!2} & $0.114$\cellcolor{red!15} & $0.060$\cellcolor{green!24} & $0.073$\cellcolor{green!15} & $0.146$\cellcolor{red!40} & $0.094$\cellcolor{red!0} & $0.057$\cellcolor{green!26} & $0.075$\cellcolor{green!13} & $0.052$\cellcolor{green!30} & $0.086$\cellcolor{green!5} & $\textbf{0.040}$\cellcolor{green!40} \\\midrule
\multirow{6}{*}{\begin{sideways}RoBERTa\;\end{sideways}} & \texttt{imdb}$\rightarrow$\texttt{rt} & $0.081$\cellcolor{green!15} & $\textbf{0.039}$\cellcolor{green!40} & $0.056$\cellcolor{green!30} & $0.091$\cellcolor{green!9} & $\textbf{0.043}$\cellcolor{green!37} & $0.045$\cellcolor{green!36} & $0.095$\cellcolor{green!7} & $0.177$\cellcolor{red!40} & $0.112$\cellcolor{red!2} & $0.082$\cellcolor{green!14} & $0.089$\cellcolor{green!10} & $0.066$\cellcolor{green!24} & $0.077$\cellcolor{green!17} & $0.068$\cellcolor{green!23} \\
 & \texttt{imdb}$\rightarrow$\texttt{yelp} & $\textbf{0.020}$\cellcolor{green!38} & $\textbf{0.021}$\cellcolor{green!37} & $\textbf{0.019}$\cellcolor{green!39} & $\textbf{0.021}$\cellcolor{green!37} & $\textbf{0.022}$\cellcolor{green!37} & $0.027$\cellcolor{green!33} & $0.038$\cellcolor{green!25} & $0.093$\cellcolor{red!13} & $0.041$\cellcolor{green!23} & $0.049$\cellcolor{green!17} & $0.059$\cellcolor{green!10} & $0.130$\cellcolor{red!40} & $\textbf{0.018}$\cellcolor{green!40} & $0.034$\cellcolor{green!28} \\
 & \texttt{rt}$\rightarrow$\texttt{imdb} & $\textbf{0.025}$\cellcolor{green!39} & $0.054$\cellcolor{green!22} & $0.041$\cellcolor{green!30} & $0.028$\cellcolor{green!37} & $0.063$\cellcolor{green!17} & $\textbf{0.024}$\cellcolor{green!40} & $0.159$\cellcolor{red!39} & $0.139$\cellcolor{red!27} & $0.157$\cellcolor{red!38} & $0.160$\cellcolor{red!40} & $0.064$\cellcolor{green!16} & $0.093$\cellcolor{red!0} & $0.042$\cellcolor{green!29} & $0.116$\cellcolor{red!14} \\
 & \texttt{rt}$\rightarrow$\texttt{yelp} & $\textbf{0.026}$\cellcolor{green!40} & $0.045$\cellcolor{green!27} & $0.031$\cellcolor{green!36} & $\textbf{0.029}$\cellcolor{green!38} & $0.052$\cellcolor{green!23} & $\textbf{0.026}$\cellcolor{green!39} & $0.137$\cellcolor{red!32} & $0.148$\cellcolor{red!40} & $0.136$\cellcolor{red!32} & $0.140$\cellcolor{red!35} & $0.061$\cellcolor{green!16} & $0.108$\cellcolor{red!13} & $0.037$\cellcolor{green!32} & $0.101$\cellcolor{red!9} \\
 & \texttt{yelp}$\rightarrow$\texttt{imdb} & $0.054$\cellcolor{green!29} & $0.026$\cellcolor{green!37} & $0.034$\cellcolor{green!35} & $0.044$\cellcolor{green!32} & $0.068$\cellcolor{green!25} & $\textbf{0.019}$\cellcolor{green!40} & $0.083$\cellcolor{green!21} & $0.289$\cellcolor{red!40} & $0.096$\cellcolor{green!17} & $0.055$\cellcolor{green!29} & $0.028$\cellcolor{green!37} & $0.093$\cellcolor{green!17} & $0.032$\cellcolor{green!36} & $0.049$\cellcolor{green!31} \\
 & \texttt{yelp}$\rightarrow$\texttt{rt} & $0.113$\cellcolor{green!23} & $\textbf{0.026}$\cellcolor{green!39} & $0.057$\cellcolor{green!33} & $0.099$\cellcolor{green!26} & $0.111$\cellcolor{green!24} & $\textbf{0.024}$\cellcolor{green!40} & $0.254$\cellcolor{red!2} & $0.459$\cellcolor{red!40} & $0.304$\cellcolor{red!11} & $0.253$\cellcolor{red!2} & $0.053$\cellcolor{green!34} & $0.081$\cellcolor{green!29} & $0.049$\cellcolor{green!35} & $0.161$\cellcolor{green!14} \\\bottomrule
\multirow{4}{*}{\begin{sideways}Wins\;\end{sideways}} &  $\Pr(M\succ {1}R)$ &$\dag 87.22\%$ & --- & --- & --- & --- & $75.67\%$ & $42.67\%$ & $17.89\%$ & $42.00\%$ & $42.78\%$ & $54.61\%$ & $30.39\%$ & $74.17\%$ & $47.94\%$ \\
 &  $\Pr(M\succ {2}R)$ &$53.06\%$ & --- & --- & --- & --- & $64.17\%$ & $30.89\%$ & $11.83\%$ & $29.00\%$ & $31.83\%$ & $39.00\%$ & $23.61\%$ & $53.28\%$ & $36.56\%$ \\
 &  $\Pr(M\succ {3}R)$ &$31.56\%$ & --- & --- & --- & --- & $52.67\%$ & $23.06\%$ & $9.50\%$ & $21.67\%$ & $24.72\%$ & $27.83\%$ & $17.50\%$ & $32.06\%$ & $28.83\%$ \\
 &  $\Pr(M\succ {4}R)$ &$14.83\%$ & --- & --- & --- & --- & $31.89\%$ & $16.17\%$ & $5.78\%$ & $14.56\%$ & $17.11\%$ & $13.89\%$ & $12.00\%$ & $15.28\%$ & $22.11\%$ \\\bottomrule
  & Ave Rank &6.18 & \textbf{5.03} & \textbf{4.70} & \textbf{6.13} & 7.35 & \textbf{5.15} & 8.70 & 12.08 & 9.02 & 8.76 & 8.02 & 10.01 & 6.40 & 7.46 \\\bottomrule
\end{tabular}
    }%    
    \label{tab:cap:covshift}
\end{table}

Table~\ref{tab:cap:covshift} reports the experimental results we have obtained for CS. The reference methods tend to dominate the experimental comparison, with DoC performing very well once again. Somehow surprisingly, though, LEAP$^{\text{PCC}}$ did not perform better than LEAP notwithstanding the fact that PCC should be more reliable than KDEy as an underlying quantifier under CS. Conversely, the adaptation PCC$_{\quant2\acc}$ is one of the best performing systems of the experimental comparison. EMQ$_{\quant2\acc}$ behaves unstably, but the pre-calibration round of EMQ$_{\quant2\acc}^{\text{BCTS}}$ seems to partially compensate for it. TransCal$_{\calib2\acc}$ is one of the worst performing systems of the lot. This was not to be expected given that, under CS, it showed excellent quantification performance. However, this partially corroborates our hypothesis that the calibrated posteriors of TransCal$_{\calib2\acc}$ are not very informative from a finer-grained perspective, i.e., that when we separately inspect the posteriors assigned to the predicted positives and the predicted negatives (as is the rationale behind $\calib2\acc$), these do not locally reflect the model's uncertainty well. It is also surprising that, among all the calibration adaptations, LasCal$_{\calib 2 \acc}$, a method addressing LS, works best.

Table~\ref{tab:cap:labelshift} reports the AE results under LS. Also in this case, the reference methods achieve superior performance. The best performing system is LEAP, closely followed by DoC. The adaptation methods show erratic behavior, sometimes achieving very good results and sometimes failing loudly.

\begin{table}[t!]
    \caption{Classifier accuracy prediction under label shift in terms of AE}
    \vspace{0.5cm}
    \centering
    \resizebox{.9\textwidth}{!}{%
    \begin{tabular}{cc|c|ccc|ccccccc} \toprule
\multicolumn{2}{c}{} & \multicolumn{1}{c|}{Baselines} & \multicolumn{3}{c|}{Reference} & \multicolumn{7}{c}{Adapted Methods} \\
\multicolumn{2}{c}{} & \begin{sideways}Naive\;\end{sideways} & \begin{sideways}ATC\;\end{sideways} & \begin{sideways}DoC\;\end{sideways} & \begin{sideways}LEAP\;\end{sideways} & \begin{sideways}PACC$_{\rho 2 \alpha}$\;\end{sideways} & \begin{sideways}EMQ$_{\rho 2 \alpha}$\;\end{sideways} & \begin{sideways}KDEy$_{\rho 2 \alpha}$\;\end{sideways} & \begin{sideways}CPCS$_{\zeta 2 \alpha}$\;\end{sideways} & \begin{sideways}TransCal$_{\zeta 2 \alpha}$\;\end{sideways} & \begin{sideways}LasCal$_{\zeta 2 \alpha}$\;\end{sideways} & \begin{sideways}DMCal$_{\zeta 2 \alpha}$\;\end{sideways} \\\midrule
\multirow{10}{*}{\begin{sideways}Logistic Regression\;\end{sideways}} & \texttt{cmc.3} & $0.178$\cellcolor{red!40} & $0.149$\cellcolor{red!17} & $\textbf{0.074}$\cellcolor{green!40} & $\textbf{0.076}$\cellcolor{green!38} & $\textbf{0.104}$\cellcolor{green!17} & $0.147$\cellcolor{red!16} & $0.141$\cellcolor{red!11} & $0.156$\cellcolor{red!23} & $0.168$\cellcolor{red!31} & $0.169$\cellcolor{red!33} & $0.095$\cellcolor{green!23} \\
 & \texttt{yeast} & $0.130$\cellcolor{red!40} & $0.088$\cellcolor{green!0} & $\textbf{0.047}$\cellcolor{green!40} & $\textbf{0.048}$\cellcolor{green!39} & $\textbf{0.051}$\cellcolor{green!36} & $0.057$\cellcolor{green!29} & $0.067$\cellcolor{green!20} & $0.103$\cellcolor{red!14} & $0.095$\cellcolor{red!5} & $0.114$\cellcolor{red!24} & $0.065$\cellcolor{green!22} \\
 & \texttt{semeion} & $0.119$\cellcolor{red!40} & $0.088$\cellcolor{red!14} & $0.035$\cellcolor{green!29} & $0.041$\cellcolor{green!24} & $\textbf{0.023}$\cellcolor{green!40} & $0.028$\cellcolor{green!35} & $0.031$\cellcolor{green!33} & $0.110$\cellcolor{red!32} & $0.074$\cellcolor{red!2} & $0.090$\cellcolor{red!16} & $0.040$\cellcolor{green!25} \\
 & \texttt{wine-q-red} & $\textbf{0.021}$\cellcolor{green!40} & $0.060$\cellcolor{red!0} & $\textbf{0.023}$\cellcolor{green!38} & $\textbf{0.022}$\cellcolor{green!38} & $0.093$\cellcolor{red!33} & $0.066$\cellcolor{red!5} & $0.089$\cellcolor{red!29} & $0.040$\cellcolor{green!21} & $0.087$\cellcolor{red!27} & $0.028$\cellcolor{green!33} & $0.099$\cellcolor{red!40} \\
 & \texttt{ctg.1} & $0.041$\cellcolor{green!11} & $\textbf{0.022}$\cellcolor{green!34} & $0.023$\cellcolor{green!33} & $\textbf{0.018}$\cellcolor{green!39} & $0.026$\cellcolor{green!29} & $0.040$\cellcolor{green!11} & $0.030$\cellcolor{green!25} & $\textbf{0.018}$\cellcolor{green!40} & $0.081$\cellcolor{red!40} & $\textbf{0.020}$\cellcolor{green!37} & $0.027$\cellcolor{green!28} \\
 & \texttt{ctg.2} & $0.097$\cellcolor{red!16} & $0.072$\cellcolor{green!4} & $0.093$\cellcolor{red!13} & $0.125$\cellcolor{red!40} & $\textbf{0.032}$\cellcolor{green!37} & $0.041$\cellcolor{green!30} & $\textbf{0.030}$\cellcolor{green!39} & $0.057$\cellcolor{green!17} & $0.038$\cellcolor{green!32} & $\textbf{0.030}$\cellcolor{green!40} & $0.047$\cellcolor{green!24} \\
 & \texttt{ctg.3} & $0.105$\cellcolor{red!20} & $\textbf{0.021}$\cellcolor{green!40} & $0.041$\cellcolor{green!25} & $0.028$\cellcolor{green!34} & $0.057$\cellcolor{green!14} & $0.132$\cellcolor{red!40} & $0.042$\cellcolor{green!25} & $0.078$\cellcolor{red!0} & $0.045$\cellcolor{green!23} & $0.058$\cellcolor{green!13} & $0.028$\cellcolor{green!34} \\
 & \texttt{spambase} & $\textbf{0.018}$\cellcolor{green!34} & $0.017$\cellcolor{green!35} & $\textbf{0.014}$\cellcolor{green!40} & $0.017$\cellcolor{green!34} & $0.037$\cellcolor{green!2} & $0.063$\cellcolor{red!40} & $0.027$\cellcolor{green!18} & $0.034$\cellcolor{green!6} & $0.057$\cellcolor{red!30} & $0.023$\cellcolor{green!25} & $0.026$\cellcolor{green!19} \\
 & \texttt{wine-q-white} & $0.120$\cellcolor{red!31} & $0.094$\cellcolor{red!8} & $\textbf{0.045}$\cellcolor{green!34} & $\textbf{0.039}$\cellcolor{green!40} & $0.066$\cellcolor{green!16} & $0.110$\cellcolor{red!22} & $0.075$\cellcolor{green!8} & $0.082$\cellcolor{green!2} & $0.080$\cellcolor{green!3} & $0.130$\cellcolor{red!40} & $0.092$\cellcolor{red!6} \\
 & \texttt{pageblocks.5} & $0.326$\cellcolor{red!40} & $0.122$\cellcolor{green!15} & $0.049$\cellcolor{green!34} & $\textbf{0.031}$\cellcolor{green!40} & $0.053$\cellcolor{green!33} & $0.046$\cellcolor{green!35} & $0.055$\cellcolor{green!33} & $0.300$\cellcolor{red!33} & $0.232$\cellcolor{red!14} & $0.248$\cellcolor{red!18} & $0.080$\cellcolor{green!26} \\\midrule
\multirow{10}{*}{\begin{sideways}Naïve Bayes\;\end{sideways}} & \texttt{cmc.3} & $0.092$\cellcolor{green!21} & $0.059$\cellcolor{green!33} & $\textbf{0.042}$\cellcolor{green!40} & $\textbf{0.042}$\cellcolor{green!39} & $0.113$\cellcolor{green!13} & $0.257$\cellcolor{red!40} & $0.097$\cellcolor{green!19} & $0.064$\cellcolor{green!31} & $0.086$\cellcolor{green!23} & $0.091$\cellcolor{green!21} & $0.142$\cellcolor{green!2} \\
 & \texttt{yeast} & $0.273$\cellcolor{green!14} & $\textbf{0.159}$\cellcolor{green!40} & $0.226$\cellcolor{green!25} & $0.325$\cellcolor{green!3} & $0.241$\cellcolor{green!21} & $0.523$\cellcolor{red!40} & $0.513$\cellcolor{red!37} & $0.238$\cellcolor{green!22} & $0.370$\cellcolor{red!6} & $0.242$\cellcolor{green!21} & $0.522$\cellcolor{red!39} \\
 & \texttt{semeion} & $\textbf{0.027}$\cellcolor{green!40} & $0.185$\cellcolor{red!3} & $0.051$\cellcolor{green!33} & $0.040$\cellcolor{green!36} & $0.314$\cellcolor{red!40} & $0.201$\cellcolor{red!8} & $0.306$\cellcolor{red!37} & $0.136$\cellcolor{green!9} & $0.199$\cellcolor{red!8} & $0.096$\cellcolor{green!20} & $0.201$\cellcolor{red!8} \\
 & \texttt{wine-q-red} & $0.035$\cellcolor{green!36} & $\textbf{0.028}$\cellcolor{green!39} & $0.031$\cellcolor{green!37} & $0.049$\cellcolor{green!32} & $0.074$\cellcolor{green!23} & $0.264$\cellcolor{red!40} & $0.075$\cellcolor{green!23} & $\textbf{0.025}$\cellcolor{green!40} & $0.103$\cellcolor{green!14} & $\textbf{0.027}$\cellcolor{green!39} & $0.107$\cellcolor{green!12} \\
 & \texttt{ctg.1} & $0.062$\cellcolor{green!8} & $0.040$\cellcolor{green!23} & $0.027$\cellcolor{green!32} & $\textbf{0.015}$\cellcolor{green!40} & $0.038$\cellcolor{green!25} & $0.137$\cellcolor{red!40} & $0.061$\cellcolor{green!10} & $\textbf{0.020}$\cellcolor{green!37} & $0.038$\cellcolor{green!25} & $0.030$\cellcolor{green!30} & $0.106$\cellcolor{red!19} \\
 & \texttt{ctg.2} & $0.068$\cellcolor{green!19} & $\textbf{0.023}$\cellcolor{green!40} & $0.036$\cellcolor{green!34} & $0.029$\cellcolor{green!37} & $0.115$\cellcolor{red!1} & $0.200$\cellcolor{red!40} & $0.143$\cellcolor{red!14} & $0.130$\cellcolor{red!8} & $0.149$\cellcolor{red!17} & $0.130$\cellcolor{red!8} & $0.180$\cellcolor{red!31} \\
 & \texttt{ctg.3} & $0.119$\cellcolor{green!0} & $\textbf{0.032}$\cellcolor{green!40} & $0.136$\cellcolor{red!7} & $0.050$\cellcolor{green!31} & $0.208$\cellcolor{red!40} & $0.167$\cellcolor{red!21} & $0.069$\cellcolor{green!23} & $0.072$\cellcolor{green!21} & $0.147$\cellcolor{red!11} & $\textbf{0.035}$\cellcolor{green!38} & $0.149$\cellcolor{red!12} \\
 & \texttt{spambase} & $0.055$\cellcolor{green!26} & $0.055$\cellcolor{green!26} & $0.051$\cellcolor{green!28} & $\textbf{0.019}$\cellcolor{green!40} & $0.163$\cellcolor{red!13} & $0.157$\cellcolor{red!11} & $0.235$\cellcolor{red!40} & $0.061$\cellcolor{green!24} & $0.146$\cellcolor{red!6} & $0.148$\cellcolor{red!7} & $0.152$\cellcolor{red!9} \\
 & \texttt{wine-q-white} & $0.081$\cellcolor{green!27} & $0.040$\cellcolor{green!37} & $0.047$\cellcolor{green!35} & $0.038$\cellcolor{green!37} & $0.121$\cellcolor{green!17} & $0.348$\cellcolor{red!40} & $0.124$\cellcolor{green!16} & $\textbf{0.029}$\cellcolor{green!40} & $0.103$\cellcolor{green!21} & $0.070$\cellcolor{green!29} & $0.172$\cellcolor{green!4} \\
 & \texttt{pageblocks.5} & $0.272$\cellcolor{red!40} & $\textbf{0.025}$\cellcolor{green!40} & $0.067$\cellcolor{green!26} & $0.045$\cellcolor{green!33} & $0.076$\cellcolor{green!23} & $0.173$\cellcolor{red!7} & $0.049$\cellcolor{green!32} & $0.199$\cellcolor{red!16} & $0.229$\cellcolor{red!26} & $0.195$\cellcolor{red!14} & $0.229$\cellcolor{red!26} \\\midrule
\multirow{10}{*}{\begin{sideways}k Nearest Neighbor\;\end{sideways}} & \texttt{cmc.3} & $0.175$\cellcolor{red!14} & $0.225$\cellcolor{red!40} & $\textbf{0.075}$\cellcolor{green!36} & $\textbf{0.069}$\cellcolor{green!40} & $0.171$\cellcolor{red!12} & $\textbf{0.084}$\cellcolor{green!32} & $0.166$\cellcolor{red!9} & $0.147$\cellcolor{red!0} & $0.155$\cellcolor{red!3} & $0.175$\cellcolor{red!14} & $0.123$\cellcolor{green!12} \\
 & \texttt{yeast} & $0.130$\cellcolor{red!12} & $0.102$\cellcolor{green!5} & $\textbf{0.047}$\cellcolor{green!40} & $0.099$\cellcolor{green!7} & $0.061$\cellcolor{green!31} & $0.174$\cellcolor{red!40} & $0.063$\cellcolor{green!30} & $0.114$\cellcolor{red!2} & $0.100$\cellcolor{green!6} & $0.119$\cellcolor{red!5} & $\textbf{0.053}$\cellcolor{green!36} \\
 & \texttt{semeion} & $0.141$\cellcolor{green!6} & $0.080$\cellcolor{green!22} & $0.026$\cellcolor{green!37} & $\textbf{0.015}$\cellcolor{green!40} & $0.123$\cellcolor{green!11} & $0.317$\cellcolor{red!40} & $0.097$\cellcolor{green!18} & $0.086$\cellcolor{green!21} & $0.026$\cellcolor{green!37} & $0.081$\cellcolor{green!22} & $0.089$\cellcolor{green!20} \\
 & \texttt{wine-q-red} & $\textbf{0.029}$\cellcolor{green!39} & $0.146$\cellcolor{red!40} & $\textbf{0.031}$\cellcolor{green!38} & $\textbf{0.028}$\cellcolor{green!40} & $0.112$\cellcolor{red!16} & $0.086$\cellcolor{green!0} & $0.142$\cellcolor{red!37} & $0.046$\cellcolor{green!28} & $0.072$\cellcolor{green!10} & $0.042$\cellcolor{green!30} & $0.104$\cellcolor{red!11} \\
 & \texttt{ctg.1} & $0.072$\cellcolor{red!6} & $0.108$\cellcolor{red!40} & $0.026$\cellcolor{green!35} & $\textbf{0.021}$\cellcolor{green!40} & $\textbf{0.023}$\cellcolor{green!37} & $0.060$\cellcolor{green!4} & $0.028$\cellcolor{green!33} & $0.046$\cellcolor{green!16} & $0.040$\cellcolor{green!22} & $0.055$\cellcolor{green!8} & $0.025$\cellcolor{green!36} \\
 & \texttt{ctg.2} & $0.162$\cellcolor{red!31} & $0.177$\cellcolor{red!40} & $0.036$\cellcolor{green!34} & $0.098$\cellcolor{green!2} & $\textbf{0.027}$\cellcolor{green!40} & $0.045$\cellcolor{green!30} & $0.033$\cellcolor{green!36} & $0.060$\cellcolor{green!22} & $0.064$\cellcolor{green!20} & $0.079$\cellcolor{green!12} & $\textbf{0.030}$\cellcolor{green!38} \\
 & \texttt{ctg.3} & $0.194$\cellcolor{red!40} & $0.119$\cellcolor{red!5} & $0.033$\cellcolor{green!33} & $0.036$\cellcolor{green!32} & $\textbf{0.019}$\cellcolor{green!40} & $0.122$\cellcolor{red!7} & $\textbf{0.020}$\cellcolor{green!39} & $0.160$\cellcolor{red!24} & $0.091$\cellcolor{green!6} & $0.138$\cellcolor{red!14} & $\textbf{0.019}$\cellcolor{green!39} \\
 & \texttt{spambase} & $0.040$\cellcolor{green!10} & $0.070$\cellcolor{red!28} & $\textbf{0.018}$\cellcolor{green!40} & $0.024$\cellcolor{green!31} & $0.030$\cellcolor{green!24} & $0.078$\cellcolor{red!40} & $0.030$\cellcolor{green!23} & $0.043$\cellcolor{green!7} & $0.073$\cellcolor{red!32} & $0.029$\cellcolor{green!25} & $0.039$\cellcolor{green!12} \\
 & \texttt{wine-q-white} & $0.076$\cellcolor{green!12} & $\textbf{0.032}$\cellcolor{green!38} & $\textbf{0.030}$\cellcolor{green!40} & $\textbf{0.031}$\cellcolor{green!39} & $0.134$\cellcolor{red!22} & $0.133$\cellcolor{red!21} & $0.164$\cellcolor{red!40} & $\textbf{0.037}$\cellcolor{green!35} & $0.044$\cellcolor{green!31} & $0.065$\cellcolor{green!18} & $0.098$\cellcolor{red!1} \\
 & \texttt{pageblocks.5} & $0.336$\cellcolor{red!40} & $0.232$\cellcolor{red!13} & $\textbf{0.029}$\cellcolor{green!40} & $0.036$\cellcolor{green!38} & $0.059$\cellcolor{green!32} & $0.107$\cellcolor{green!19} & $0.071$\cellcolor{green!29} & $0.248$\cellcolor{red!17} & $0.218$\cellcolor{red!9} & $0.247$\cellcolor{red!16} & $0.131$\cellcolor{green!13} \\\midrule
\multirow{10}{*}{\begin{sideways}Multi-layer Perceptron\;\end{sideways}} & \texttt{cmc.3} & $0.128$\cellcolor{green!13} & $0.083$\cellcolor{green!26} & $\textbf{0.047}$\cellcolor{green!37} & $\textbf{0.039}$\cellcolor{green!40} & $0.164$\cellcolor{green!2} & $0.303$\cellcolor{red!40} & $0.172$\cellcolor{red!0} & $0.104$\cellcolor{green!20} & $0.114$\cellcolor{green!17} & $0.118$\cellcolor{green!16} & $0.117$\cellcolor{green!16} \\
 & \texttt{yeast} & $0.089$\cellcolor{green!18} & $\textbf{0.051}$\cellcolor{green!40} & $\textbf{0.054}$\cellcolor{green!38} & $\textbf{0.055}$\cellcolor{green!37} & $0.087$\cellcolor{green!19} & $0.192$\cellcolor{red!40} & $0.138$\cellcolor{red!9} & $\textbf{0.057}$\cellcolor{green!36} & $0.063$\cellcolor{green!33} & $0.078$\cellcolor{green!24} & $\textbf{0.069}$\cellcolor{green!29} \\
 & \texttt{semeion} & $0.083$\cellcolor{green!3} & $\textbf{0.015}$\cellcolor{green!40} & $\textbf{0.015}$\cellcolor{green!39} & $\textbf{0.018}$\cellcolor{green!38} & $0.042$\cellcolor{green!25} & $0.165$\cellcolor{red!40} & $0.021$\cellcolor{green!36} & $0.079$\cellcolor{green!6} & $0.036$\cellcolor{green!28} & $0.051$\cellcolor{green!20} & $0.020$\cellcolor{green!37} \\
 & \texttt{wine-q-red} & $0.037$\cellcolor{green!31} & $0.029$\cellcolor{green!35} & $0.035$\cellcolor{green!32} & $\textbf{0.021}$\cellcolor{green!40} & $0.112$\cellcolor{red!10} & $0.164$\cellcolor{red!40} & $0.102$\cellcolor{red!5} & $\textbf{0.021}$\cellcolor{green!39} & $0.086$\cellcolor{green!3} & $0.028$\cellcolor{green!35} & $0.087$\cellcolor{green!3} \\
 & \texttt{ctg.1} & $0.060$\cellcolor{red!40} & $0.034$\cellcolor{green!3} & $0.022$\cellcolor{green!24} & $\textbf{0.013}$\cellcolor{green!40} & $0.022$\cellcolor{green!24} & $0.038$\cellcolor{red!2} & $0.033$\cellcolor{green!5} & $0.026$\cellcolor{green!17} & $0.022$\cellcolor{green!23} & $0.043$\cellcolor{red!10} & $0.058$\cellcolor{red!36} \\
 & \texttt{ctg.2} & $0.116$\cellcolor{red!12} & $\textbf{0.027}$\cellcolor{green!40} & $0.107$\cellcolor{red!6} & $0.073$\cellcolor{green!12} & $0.163$\cellcolor{red!40} & $0.084$\cellcolor{green!6} & $0.129$\cellcolor{red!20} & $0.042$\cellcolor{green!30} & $0.033$\cellcolor{green!36} & $0.066$\cellcolor{green!17} & $0.066$\cellcolor{green!16} \\
 & \texttt{ctg.3} & $0.100$\cellcolor{red!40} & $0.056$\cellcolor{green!7} & $\textbf{0.026}$\cellcolor{green!38} & $\textbf{0.025}$\cellcolor{green!40} & $\textbf{0.030}$\cellcolor{green!35} & $\textbf{0.033}$\cellcolor{green!31} & $0.051$\cellcolor{green!11} & $0.085$\cellcolor{red!24} & $0.047$\cellcolor{green!16} & $0.073$\cellcolor{red!11} & $0.050$\cellcolor{green!13} \\
 & \texttt{spambase} & $0.020$\cellcolor{green!30} & $0.019$\cellcolor{green!33} & $\textbf{0.015}$\cellcolor{green!40} & $\textbf{0.016}$\cellcolor{green!38} & $0.056$\cellcolor{red!40} & $0.045$\cellcolor{red!18} & $0.033$\cellcolor{green!5} & $0.024$\cellcolor{green!22} & $0.021$\cellcolor{green!28} & $0.030$\cellcolor{green!11} & $0.030$\cellcolor{green!11} \\
 & \texttt{wine-q-white} & $0.077$\cellcolor{red!33} & $0.054$\cellcolor{red!2} & $\textbf{0.027}$\cellcolor{green!35} & $\textbf{0.024}$\cellcolor{green!40} & $0.066$\cellcolor{red!17} & $0.059$\cellcolor{red!8} & $0.068$\cellcolor{red!20} & $0.049$\cellcolor{green!5} & $0.040$\cellcolor{green!17} & $0.071$\cellcolor{red!25} & $0.082$\cellcolor{red!40} \\
 & \texttt{pageblocks.5} & $0.225$\cellcolor{red!40} & $0.139$\cellcolor{red!2} & $0.069$\cellcolor{green!27} & $0.053$\cellcolor{green!34} & $0.048$\cellcolor{green!37} & $\textbf{0.042}$\cellcolor{green!39} & $\textbf{0.041}$\cellcolor{green!40} & $0.108$\cellcolor{green!10} & $0.105$\cellcolor{green!12} & $0.141$\cellcolor{red!3} & $0.059$\cellcolor{green!32} \\\bottomrule
\multirow{3}{*}{\begin{sideways}Wins\;\end{sideways}} &  $\Pr(M\succ {1}R)$ &$46.33\%$ & --- & --- & --- & $55.77\%$ & $39.88\%$ & $56.55\%$ & $63.42\%$ & $51.75\%$ & $55.15\%$ & $54.43\%$ \\
 &  $\Pr(M\succ {2}R)$ &$26.97\%$ & --- & --- & --- & $35.48\%$ & $21.40\%$ & $34.83\%$ & $34.27\%$ & $25.32\%$ & $30.90\%$ & $30.80\%$ \\
 &  $\Pr(M\succ {3}R)$ &$12.07\%$ & --- & --- & --- & $19.82\%$ & $11.82\%$ & $19.07\%$ & $16.32\%$ & $12.90\%$ & $15.43\%$ & $16.12\%$ \\\bottomrule
  & Ave Rank &7.15 & \textbf{5.65} & \textbf{4.17} & \textbf{4.05} & 5.97 & 7.91 & 6.12 & 5.79 & 6.58 & 6.20 & 6.42 \\\bottomrule
\end{tabular}
    }%    
    \label{tab:cap:labelshift}
\end{table}

Figure~\ref{fig:cap:errbyshift} show the AE as a function of the level of shift for a selection of methods. Most methods improve over the naive baseline, with the exception of LEAP$^{\text{PCC}}$ (left pannel) which becomes progressively unreliable as the level of shift increases. None of the methods, however, are able to compensate for higher levels of CS, suggesting that further research is needed. Concerning the experiments of LS, LEAP and PACC$_{\quant2\acc}$ are very stable even at high levels of prior shift. The rest of the methods, including DoC, perform worse at higher levels of prior shift. 

The CD-diagrams (Figure~\ref{fig:cap:cddiagram}) reveal that, under CS, the top-performing methods DoC, ATC, and PCC$_{\quant2\acc}$ are actually not significantly different from each other, and that LEAP and LasCal$_{\calib2\acc}$ are not better than the Naive baseline. In LS, LEAP stands out over DoC and ATC, which nevertheless perform better than the rest of the methods. Only EMQ$_{\quant2\acc}$ proves inferior than the Naive baseline under prior shift. %The likely reason is that splitting the posteriors into two groups (the predicted positives and the predicted negatives) causes a 

\begin{figure}
    \centering
    \includegraphics[width=0.48\linewidth]{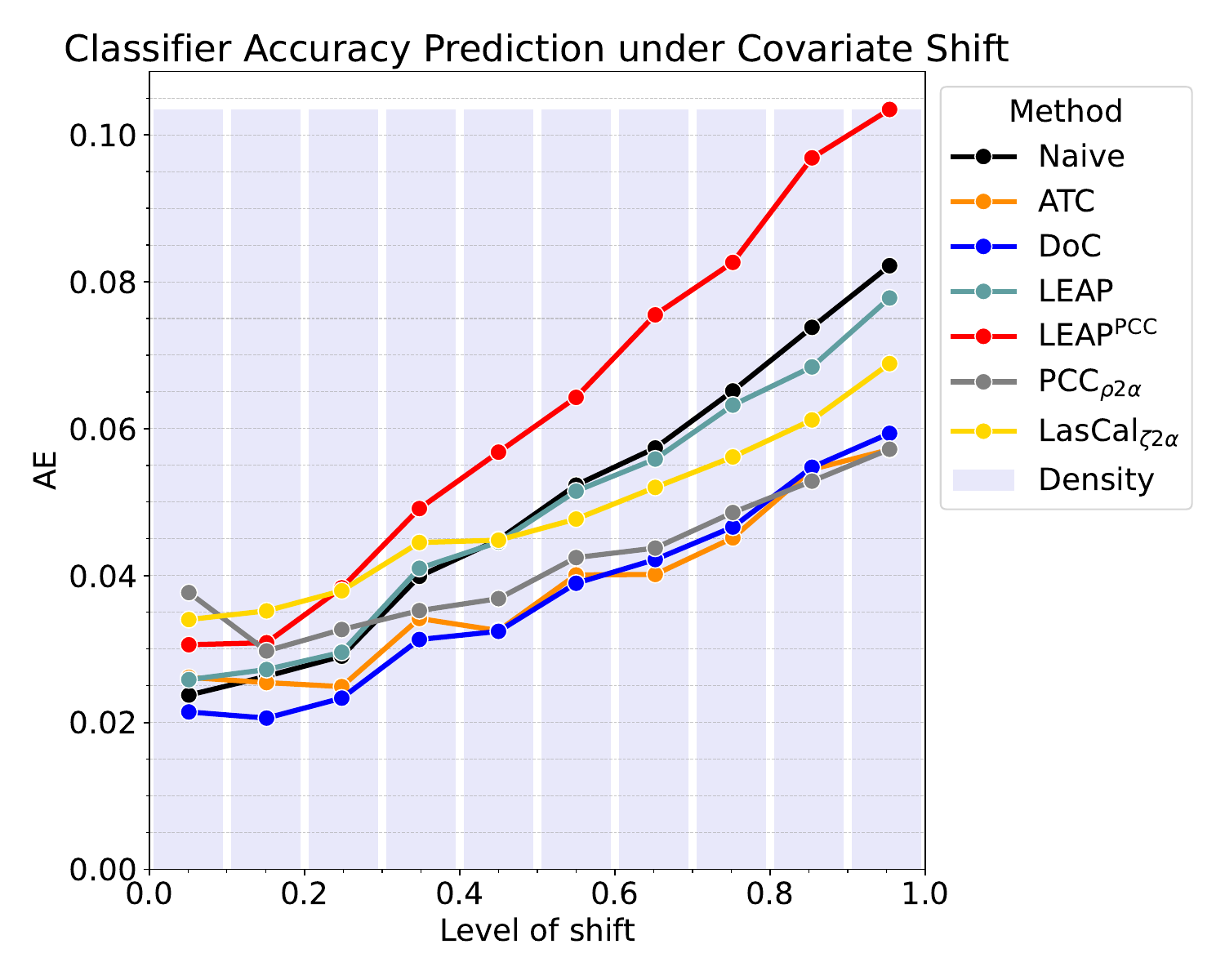}
    \includegraphics[width=0.48\linewidth]{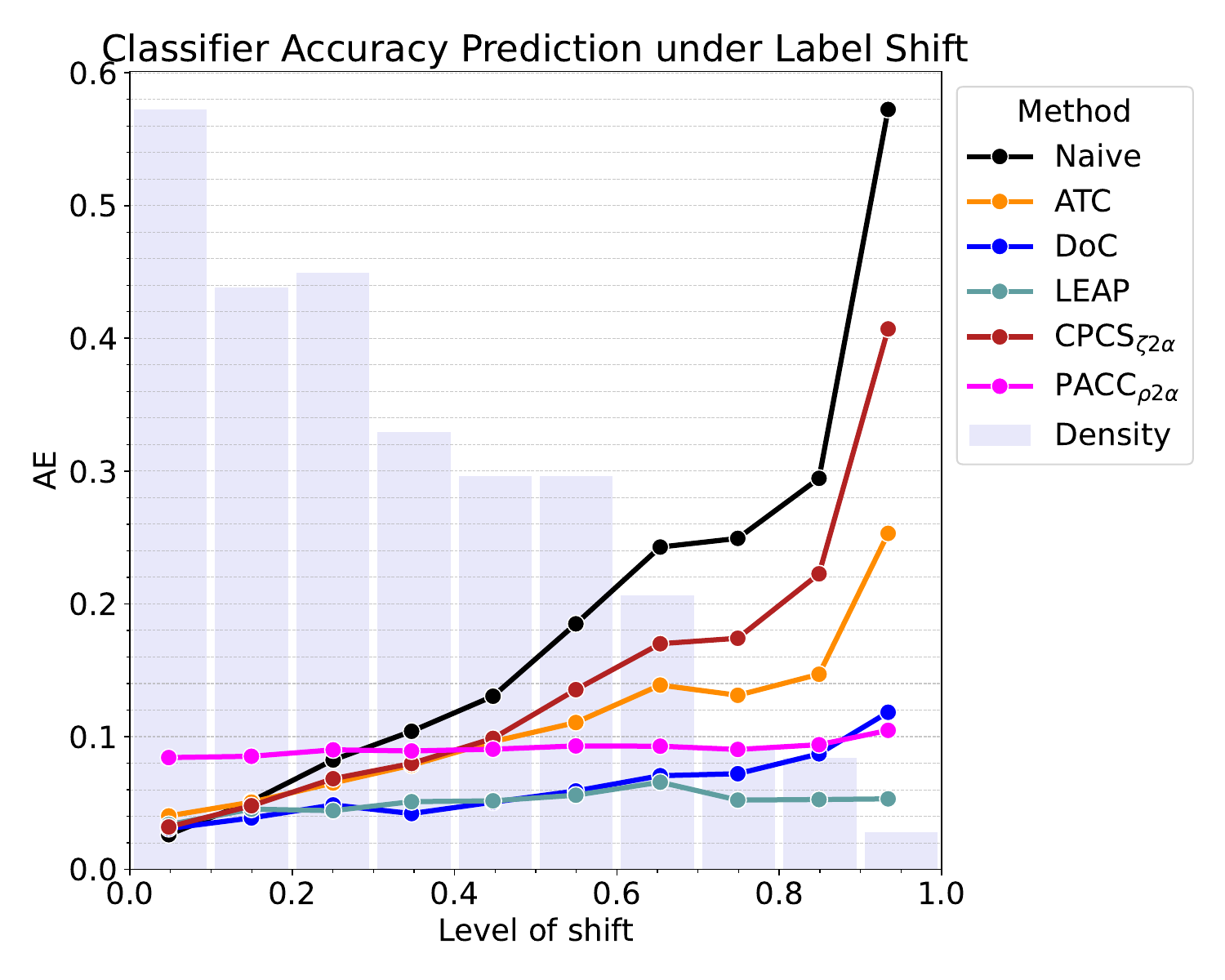}
    \caption{Classifier accuracy prediction error in terms of AE as a function of shift intensity in CS experiments (left panel) and LS (right panel).}
    \label{fig:cap:errbyshift}

    \vspace{1cm}

    \centering
    \resizebox{0.48\textwidth}{!}{%
    \begin{tikzpicture}[
  treatment line/.style={rounded corners=1.5pt, line cap=round, shorten >=1pt},
  treatment label/.style={font=\small},
  group line/.style={ultra thick},
]

\begin{axis}[
  clip={false},
  axis x line={center},
  axis y line={none},
  axis line style={-},
  xmin={1},
  ymax={0},
  scale only axis={true},
  width={\axisdefaultwidth},
  ticklabel style={anchor=south, yshift=1.3*\pgfkeysvalueof{/pgfplots/major tick length}, font=\small},
  every tick/.style={draw=black},
  major tick style={yshift=.5*\pgfkeysvalueof{/pgfplots/major tick length}},
  minor tick style={yshift=.5*\pgfkeysvalueof{/pgfplots/minor tick length}},
  title style={yshift=\baselineskip},
  xmax={14},
  ymin={-8.5},
  height={9\baselineskip},
  title={Classifier Accuracy Prediction under Covariate Shift},
]

\draw[treatment line] ([yshift=-2pt] axis cs:4.7027777777777775, 0) |- (axis cs:3.5361111111111105, -2.0)
  node[treatment label, anchor=east] {\textbf{DoC}};
\draw[treatment line] ([yshift=-2pt] axis cs:5.025, 0) |- (axis cs:3.5361111111111105, -3.0)
  node[treatment label, anchor=east] {\textbf{ATC}};
\draw[treatment line] ([yshift=-2pt] axis cs:5.15, 0) |- (axis cs:3.5361111111111105, -4.0)
  node[treatment label, anchor=east] {PCC$_{\rho 2 \alpha}$};
\draw[treatment line] ([yshift=-2pt] axis cs:6.133333333333334, 0) |- (axis cs:3.5361111111111105, -5.0)
  node[treatment label, anchor=east] {\textbf{LEAP}};
\draw[treatment line] ([yshift=-2pt] axis cs:6.183333333333334, 0) |- (axis cs:3.5361111111111105, -6.0)
  node[treatment label, anchor=east] {Naive};
\draw[treatment line] ([yshift=-2pt] axis cs:6.400555555555556, 0) |- (axis cs:3.5361111111111105, -7.0)
  node[treatment label, anchor=east] {LasCal$_{\zeta 2 \alpha}$};
\draw[treatment line] ([yshift=-2pt] axis cs:7.354444444444445, 0) |- (axis cs:3.5361111111111105, -8.0)
  node[treatment label, anchor=east] {\textbf{LEAP$^{\text{PCC}}$}};
\draw[treatment line] ([yshift=-2pt] axis cs:7.457777777777777, 0) |- (axis cs:13.247777777777777, -8.0)
  node[treatment label, anchor=west] {DMCal$_{\zeta 2 \alpha}$};
\draw[treatment line] ([yshift=-2pt] axis cs:8.022222222222222, 0) |- (axis cs:13.247777777777777, -7.0)
  node[treatment label, anchor=west] {CPCS$_{\zeta 2 \alpha}$};
\draw[treatment line] ([yshift=-2pt] axis cs:8.703611111111112, 0) |- (axis cs:13.247777777777777, -6.0)
  node[treatment label, anchor=west] {PACC$_{\rho 2 \alpha}$};
\draw[treatment line] ([yshift=-2pt] axis cs:8.759166666666667, 0) |- (axis cs:13.247777777777777, -5.0)
  node[treatment label, anchor=west] {KDEy$_{\rho 2 \alpha}$};
\draw[treatment line] ([yshift=-2pt] axis cs:9.015555555555556, 0) |- (axis cs:13.247777777777777, -4.0)
  node[treatment label, anchor=west] {EMQ$_{\rho 2 \alpha}^{\text{BCTS}}$};
\draw[treatment line] ([yshift=-2pt] axis cs:10.011111111111111, 0) |- (axis cs:13.247777777777777, -3.0)
  node[treatment label, anchor=west] {TransCal$_{\zeta 2 \alpha}$};
\draw[treatment line] ([yshift=-2pt] axis cs:12.081111111111111, 0) |- (axis cs:13.247777777777777, -2.0)
  node[treatment label, anchor=west] {EMQ$_{\rho 2 \alpha}$};
\draw[group line] (axis cs:7.457777777777777, -4.666666666666667) -- (axis cs:8.022222222222222, -4.666666666666667);
\draw[group line] (axis cs:6.133333333333334, -3.3333333333333335) -- (axis cs:6.400555555555556, -3.3333333333333335);
\draw[group line] (axis cs:8.703611111111112, -3.3333333333333335) -- (axis cs:8.759166666666667, -3.3333333333333335);
\draw[group line] (axis cs:4.7027777777777775, -1.3333333333333333) -- (axis cs:5.15, -1.3333333333333333);

\end{axis}
\end{tikzpicture}
    }
    \resizebox{0.48\textwidth}{!}{%
    \begin{tikzpicture}[
  treatment line/.style={rounded corners=1.5pt, line cap=round, shorten >=1pt},
  treatment label/.style={font=\small},
  group line/.style={ultra thick},
]

\begin{axis}[
  clip={false},
  axis x line={center},
  axis y line={none},
  axis line style={-},
  xmin={1},
  ymax={0},
  scale only axis={true},
  width={\axisdefaultwidth},
  ticklabel style={anchor=south, yshift=1.3*\pgfkeysvalueof{/pgfplots/major tick length}, font=\small},
  every tick/.style={draw=black},
  major tick style={yshift=.5*\pgfkeysvalueof{/pgfplots/major tick length}},
  minor tick style={yshift=.5*\pgfkeysvalueof{/pgfplots/minor tick length}},
  title style={yshift=\baselineskip},
  xmax={11},
  ymin={-6.5},
  height={7\baselineskip},
  xtick={1,3,5,7,9,11},
  minor x tick num={1},
  title={Classifier Accuracy Prediction under Label Shift},
]

\draw[treatment line] ([yshift=-2pt] axis cs:4.046375, 0) |- (axis cs:3.1297083333333338, -2.5)
  node[treatment label, anchor=east] {\textbf{LEAP}};
\draw[treatment line] ([yshift=-2pt] axis cs:4.16725, 0) |- (axis cs:3.1297083333333338, -3.5)
  node[treatment label, anchor=east] {\textbf{DoC}};
\draw[treatment line] ([yshift=-2pt] axis cs:5.648, 0) |- (axis cs:3.1297083333333338, -4.5)
  node[treatment label, anchor=east] {\textbf{ATC}};
\draw[treatment line] ([yshift=-2pt] axis cs:5.78725, 0) |- (axis cs:3.1297083333333338, -5.5)
  node[treatment label, anchor=east] {CPCS$_{\zeta 2 \alpha}$};
\draw[treatment line] ([yshift=-2pt] axis cs:5.97175, 0) |- (axis cs:3.1297083333333338, -6.5)
  node[treatment label, anchor=east] {PACC$_{\rho 2 \alpha}$};
\draw[treatment line] ([yshift=-2pt] axis cs:6.119625, 0) |- (axis cs:8.823166666666667, -7.0)
  node[treatment label, anchor=west] {KDEy$_{\rho 2 \alpha}$};
\draw[treatment line] ([yshift=-2pt] axis cs:6.202, 0) |- (axis cs:8.823166666666667, -6.0)
  node[treatment label, anchor=west] {LasCal$_{\zeta 2 \alpha}$};
\draw[treatment line] ([yshift=-2pt] axis cs:6.416, 0) |- (axis cs:8.823166666666667, -5.0)
  node[treatment label, anchor=west] {DMCal$_{\zeta 2 \alpha}$};
\draw[treatment line] ([yshift=-2pt] axis cs:6.58425, 0) |- (axis cs:8.823166666666667, -4.0)
  node[treatment label, anchor=west] {TransCal$_{\zeta 2 \alpha}$};
\draw[treatment line] ([yshift=-2pt] axis cs:7.151, 0) |- (axis cs:8.823166666666667, -3.0)
  node[treatment label, anchor=west] {Naive};
\draw[treatment line] ([yshift=-2pt] axis cs:7.9065, 0) |- (axis cs:8.823166666666667, -2.0)
  node[treatment label, anchor=west] {EMQ$_{\rho 2 \alpha}$};
\draw[group line] (axis cs:6.416, -2.6666666666666665) -- (axis cs:6.58425, -2.6666666666666665);
\draw[group line] (axis cs:6.119625, -4.0) -- (axis cs:6.202, -4.0);
\draw[group line] (axis cs:5.78725, -3.6666666666666665) -- (axis cs:6.119625, -3.6666666666666665);

\end{axis}
\end{tikzpicture}
    }
    \caption{CD-diagrams for CS experiments (left panel) and LS experiments (right panel). Reference classifier accuracy prediction methods are highlighted in boldtype.}
    \label{fig:cap:cddiagram}
\end{figure}

\clearpage

\section{Conclusions}
\label{sec:discussions}

In this paper, we have investigated the interconnections between calibration, quantification, and classifier accuracy prediction under dataset shift, showing that these problems are close to each other, to the extent that methods originally proposed for one of the problems can be adapted to address the other problems. We have first provided formal proofs of the reduction of one problem to another based on the availability of perfect oracles. Building on these proofs, we have designed practical adaptation methods which we have tested experimentally. Additionally, we have proposed two new calibration methods, PacCal$^\sigma$ and DMCal, that leverage consolidated principles that render quantification methods robust to LS conditions.

% Our experiments have sometimes lead to some unexpected results. The most important ones are summarized below:
% %
% \begin{itemize}
%     \item Calibration is the field that has benefitted most from the adaptation of methods from other disciplines.
%     \item The method TransCal has proven to be a fairly reliable calibrator under CS; a direct adaptation to quantification has given rise to a method that has beaten all other quantification methods under this type of shift.
%     \item The new method DMCal reveals as a promising calibration method for CS and specially for LS.
%     \item The method DoC, originally proposed for classifier accuracy prediction, has proven very versatile, and has achieved remarkably good results also in calibration and quantification under CS.
%     \item EMQ is one of the top performing methods overall, but is sometimes prone to instability. Adding a calibration phase before the EM algorithm often counters this effect, but is sometimes even harmful.
%     \item Quantification methods excel against LS    
% \end{itemize}

% We have also observed that EMQ tends to stand out as one of the best-performing methods quantification and calibration methods overall; however, it often presents stability issues that hinder its effectiveness. The introduction of a calibration phase before the application of the EM algorithm can mitigate this effect, but in certain cases it rather proves counterproductive. 

Our experiments have led to several interesting (and sometimes unexpected) findings. Among the three tasks considered, calibration appears to benefit the most from the adaptation of methods originally proposed for quantification and classifier accuracy prediction. This may seem counter-intuitive at first glance, given that calibration is the most long-standing field among the three. However, the specific branch of calibration on which this paper focuses (i.e., calibration under dataset shift) is relatively recent \cite{CalibrationDatasetShiftNeurIPS2019}. 
In this respect, the newly proposed DMCal, which builds on ideas from the quantification literature, has shown strong potential for calibration under both CS and, especially, LS.

Conversely, quantification and classifier accuracy prediction have benefited less from the adaptation of methods from the other tasks and particularly so when confronting LS, where native state-of-the-art approaches from their respective fields still stand out in terms of performance. Notwithstanding this, the adaptation of one calibration method, TransCal, has surprisingly emerged as the top-performing method for quantification under CS conditions in our experiments. Further experiments involving different data sources and alternative protocols for simulating CS may provide further evidence that confirm or refute this finding.

We have observed that EMQ tends to stand out as one of the best-performing methods overall in both quantification and calibration problems. However, it often presents stability issues that hinder its effectiveness. 
Introducing a calibration phase before the application of the EM algorithm can help mitigate this effect, but in certain cases it may even prove counterproductive. More research is needed in order to better understand the conditions under which introducing a calibration phase is beneficial, and to explore whether it is possible to automatically select the most appropriate calibrator to use depending on the type of shift.

Perhaps one of the most surprising outcomes of our experiments is the unexpectedly good performance that DoC, a method originally proposed for classifier accuracy prediction, has consistently shown across all three tasks, and across both types of shift, in most cases. A key component of DoC is the internal sampling generation protocol used for generating validation samples with which a regressor is trained. One may (maybe legitimately) argue that our variants of DoC benefit from a certain advantage over the rest, as they rely on different sampling generation protocols depending on the type of shift. However, given the potential the method has showcased across all tasks and types of shift, we rather believe this is a strong indication that the inner workings of DoC deserve further attention. We foresee the key components for attaining a unified approach to the three problems will revolve around: (i) devising effective methods for detecting the type of shift at play \cite{Rabanser:2019ba}, (ii) devising effective protocols for generating validation samples representative of that particular type of shift \cite{Gonzalez:2024cs}, and (iii) investigating more effective ways to train a task-specific regressor \cite{Guillory:2021so,Perez-Mon:2025jt}.

%We hope this article has helped to raise awareness on the strong interconnections between the three problems, and would hopefully inspire new hybrid approaches that leverage the lessons learned from other disciplines, or that propose a unified method that simultaneously tackles several problems or types of shift simultaneously. 

%We hope this article has shed light on the strong interconnections between the three problems, and will inspire the development of hybrid approaches that leverage insights from related disciplines, or even unified methods capable of addressing multiple tasks at once, or that are robust to different types of dataset shift.

Something our experiments seem to indicate, though, is that no single method consistently outperforms the others across all problems and regardless of the type of dataset shift. %Our results seem to indicate that importance weighting is key for tackling CS, while distribution-matching is key for tackling LS.
In future research, it would be interesting to better characterize the relative merits of each method with respect to specific types (and ideally even with respect to the level of intensity) of dataset shift, enabling more informed decisions about which method to use under which conditions. This line of work could potentially inspire the development of hybrid approaches that leverage insights from related disciplines. While our paper is limited to binary problems, in future work it would also be interesting to investigate extensions of the proposed methods to the multiclass regime. 

%One insight emerging from our experiments is that no single method consistently outperforms the others across all problems and types of dataset shift. Future research could aim to better characterize the relative strengths of each method with respect to specific types—and ideally even degrees—of shift. This would allow for more informed choices regarding method selection under different conditions, and could potentially inspire the development of hybrid approaches that combine insights from related domains. Another promising direction is the extension of current methods to the multiclass setting.

\section*{Acknowledgments}

We are grateful to Teodora Popordanoska for her valuable guidance on the LasCal repository. We would also like to thank Andrea Pedrotti for providing code scripts for training language models.

\noindent This work has been funded by the QuaDaSh project \emph{Finanziato dall'Unione europea---Next Generation EU, Missione 4 Componente 2 CUP B53D23026250001}.

\bibliographystyle{apalike}
\bibliography{references}

\begin{thebibliography}{}

\bibitem[Alexandari et~al., 2020]{Alexandari:2020dn}
Alexandari, A., Kundaje, A., and Shrikumar, A. (2020).
\newblock Maximum likelihood with bias-corrected calibration is hard-to-beat at label shift adaptation.
\newblock In {\em Proceedings of the 37th International Conference on Machine Learning (ICML 2020)}, pages 222--232, Virtual Event.

\bibitem[Azizzadenesheli et~al., 2019]{Azizzadenesheli:2019qf}
Azizzadenesheli, K., Liu, A., Yang, F., and Anandkumar, A. (2019).
\newblock Regularized learning for domain adaptation under label shifts.
\newblock In {\em Proceedings of the 7th International Conference on Learning Representations (ICLR 2019)}, New Orleans, US.

\bibitem[Barlow and Brunk, 1972]{isotonic:barlow:1972}
Barlow, R.~E. and Brunk, H.~D. (1972).
\newblock The isotonic regression problem and its dual.
\newblock {\em Journal of the American Statistical Association}, 67(337):140--147.

\bibitem[Beijbom et~al., 2015]{Beijbom:2015yg}
Beijbom, O., Hoffman, J., Yao, E., Darrell, T., Rodriguez-Ramirez, A., Gonzalez-Rivero, M., and Hoegh-Guldberg, O. (2015).
\newblock Quantification in-the-wild: {D}ata-sets and baselines.
\newblock arXiv:1510.04811 [cs.LG].

\bibitem[Bella et~al., 2010]{Bella:2010kx}
Bella, A., Ferri, C., Hernández-Orallo, J., and Ramírez-Quintana, M.~J. (2010).
\newblock Quantification via probability estimators.
\newblock In {\em Proceedings of the 11th IEEE International Conference on Data Mining (ICDM 2010)}, pages 737--742, Sydney, {AU}.

\bibitem[Benavoli et~al., 2016]{benavoli2016should}
Benavoli, A., Corani, G., and Mangili, F. (2016).
\newblock Should we really use post-hoc tests based on mean-ranks?
\newblock {\em The Journal of Machine Learning Research}, 17(1):152--161.

\bibitem[Bunse, 2022]{Bunse:2022ky}
Bunse, M. (2022).
\newblock Unification of algorithms for quantification and unfolding.
\newblock In {\em Proceedings of the Workshop on Machine Learning for Astroparticle Physics and Astronomy}, pages 459--468, Hamburg, DE.

\bibitem[Card and Smith, 2018]{Card:2018pb}
Card, D. and Smith, N.~A. (2018).
\newblock The importance of calibration for estimating proportions from annotations.
\newblock In {\em Proceedings of the 2018 Conference of the North American Chapter of the Association for Computational Linguistics (HLT-NAACL 2018)}, pages 1636--1646, New Orleans, US.

\bibitem[Castaño et~al., 2023]{Castano:2023mb}
Castaño, A., Alonso, J., González, P., and del Coz, J.~J. (2023).
\newblock An equivalence analysis of binary quantification methods.
\newblock In {\em Proceedings of the 37th AAAI Conference on Artificial Intelligence (AAAI-23)}, pages 6944--6952, Washington, US.

\bibitem[Chen et~al., 2021a]{Chen:2021jt}
Chen, J., Liu, F., Avci, B., Wu, X., Liang, Y., and Jha, S. (2021a).
\newblock Detecting errors and estimating accuracy on unlabeled data with self-training ensembles.
\newblock In {\em Proceedings of the 35th Conference on Neural Information Processing Systems (NeurIPS 2021)}, pages 14980--14992, Virtual Event.

\bibitem[Chen and Su, 2023]{head2tail2023}
Chen, J. and Su, B. (2023).
\newblock Transfer knowledge from head to tail: Uncertainty calibration under long-tailed distribution.
\newblock In {\em Proceedings of the IEEE/CVF conference on computer vision and pattern recognition}, pages 19978--19987.

\bibitem[Chen et~al., 2021b]{Chen:2021qe}
Chen, M.~F., Goel, K., Sohoni, N.~S., Poms, F., Fatahalian, K., and R{\'{e}}, C. (2021b).
\newblock Mandoline: {M}odel evaluation under distribution shift.
\newblock In {\em Proceedings of the 38th International Conference on Machine Learning (ICML 2021)}, pages 1617--1629, Virtual Event.

\bibitem[Dem{\v{s}}ar, 2006]{demsar2006statistical}
Dem{\v{s}}ar, J. (2006).
\newblock Statistical comparisons of classifiers over multiple data sets.
\newblock {\em The Journal of Machine learning research}, 7(1):1--30.

\bibitem[Devlin et~al., 2019]{devlin2019bert}
Devlin, J., Chang, M.-W., Lee, K., and Toutanova, K. (2019).
\newblock {BERT}: Pre-training of deep bidirectional transformers for language understanding.
\newblock In {\em Proceedings of the 2019 conference of the North American chapter of the association for computational linguistics: human language technologies, volume 1 (long and short papers)}, pages 4171--4186.

\bibitem[Domingos and Pazzani, 1996]{domingos1996beyond}
Domingos, P. and Pazzani, M. (1996).
\newblock Beyond independence: Conditions for the optimality of the simple bayesian classifier.
\newblock In {\em Proc. 13th Intl. Conf. Machine Learning}, pages 105--112.

\bibitem[{du Plessis} and Sugiyama, 2012]{du-Plessis:2012nr}
{du Plessis}, M.~C. and Sugiyama, M. (2012).
\newblock Semi-supervised learning of class balance under class-prior change by distribution matching.
\newblock In {\em Proceedings of the 29th International Conference on Machine Learning (ICML 2012)}, Edinburgh, UK.

\bibitem[Dussap et~al., 2023]{Dussap:2023gr}
Dussap, B., Blanchard, G., and Ch{\'{e}}rief{-}Abdellatif, B. (2023).
\newblock Label shift quantification with robustness guarantees via distribution feature matching.
\newblock In {\em Proceedings of the 34th European Conference on Machine Learning and Principles and Practice of Knowledge Discovery in Databases (ECML / PKDD 2023)}, pages 69--85, Torino, IT.

\bibitem[Elkan, 2001]{Elkan:2001fk}
Elkan, C. (2001).
\newblock The foundations of cost-sensitive learning.
\newblock In {\em Proceedings of the 17th International Joint Conference on Artificial Intelligence (IJCAI 2001)}, pages 973--978, Seattle, {US}.

\bibitem[Elsahar and Gallé, 2019]{elsahar-galle-2019-annotate}
Elsahar, H. and Gallé, M. (2019).
\newblock To annotate or not? {P}redicting performance drop under domain shift.
\newblock In {\em Proceedings of the 2019 Conference on Empirical Methods in Natural Language Processing and the 9th International Joint Conference on Natural Language Processing (EMNLP-IJCNLP 2019)}, pages 2163--2173, Hong Kong, {CN}.

\bibitem[Esuli et~al., 2023]{Book2023}
Esuli, A., Fabris, A., Moreo, A., and Sebastiani, F. (2023).
\newblock {\em Learning to quantify}.
\newblock Springer Nature, Cham, CH.

\bibitem[Esuli et~al., 2021]{Esuli:2021le}
Esuli, A., Molinari, A., and Sebastiani, F. (2021).
\newblock A critical reassessment of the {Saerens-Latinne-Decaestecker} algorithm for posterior probability adjustment.
\newblock {\em {ACM} Transactions on Information Systems}, 39(2):Article 19.

\bibitem[Esuli et~al., 2022]{Lequa2022Overview}
Esuli, A., Moreo, A., Sebastiani, F., and Sperduti, G. (2022).
\newblock A detailed overview of {LeQua 2022: Learning} to quantify.
\newblock In {\em Working Notes of the 13th Conference and Labs of the Evaluation Forum (CLEF 2022)}, Bologna, IT.

\bibitem[Esuli et~al., 2024]{Lequa2024Overview}
Esuli, A., Moreo, A., Sebastiani, F., and Sperduti, G. (2024).
\newblock An overview of {LeQua 2024, the 2nd International Data Challenge on Learning to Quantify}.
\newblock In {\em Proceedings of the 4th International Workshop on Learning to Quantify (LQ 2024)}, pages 51--78, Vilnius, LT.

\bibitem[Esuli and Sebastiani, 2010]{Esuli:2010kx}
Esuli, A. and Sebastiani, F. (2010).
\newblock Machines that learn how to code open-ended survey data.
\newblock {\em International Journal of Market Research}, 52(6):775--800.

\bibitem[Fawcett and Flach, 2005]{Fawcett:2005fk}
Fawcett, T. and Flach, P. (2005).
\newblock A response to {Webb and Ting}'s `{On} the application of {ROC} analysis to predict classification performance under varying class distributions'.
\newblock {\em Machine Learning}, 58(1):33--38.

\bibitem[Firat, 2016]{Firat:2016uq}
Firat, A. (2016).
\newblock Unified framework for quantification.
\newblock arXiv:1606.00868v1 [cs.LG] 2 Jun 2016.

\bibitem[Flach and Webb, 2016]{EncyclopediaCalibrationFlach2016}
Flach, Peter A. Eds:~Sammut, C. and Webb, G.~I. (2016).
\newblock {\em Classifier Calibration}, pages 1--8.
\newblock Springer US, Boston, MA.

\bibitem[Forman, 2005]{Forman:2005fk}
Forman, G. (2005).
\newblock Counting positives accurately despite inaccurate classification.
\newblock In {\em Proceedings of the 16th European Conference on Machine Learning (ECML 2005)}, pages 564--575, Porto, {PT}.

\bibitem[Forman, 2008]{Forman:2008kx}
Forman, G. (2008).
\newblock Quantifying counts and costs via classification.
\newblock {\em Data Mining and Knowledge Discovery}, 17(2):164--206.

\bibitem[Garg et~al., 2022]{Garg:2022qv}
Garg, S., Balakrishnan, S., Lipton, Z.~C., Neyshabur, B., and Sedghi, H. (2022).
\newblock Leveraging unlabeled data to predict out-of-distribution performance.
\newblock In {\em Proceedings of the 10th International Conference on Learning Representations (ICLR 2022)}, Virtual Event.

\bibitem[Garg et~al., 2020]{Garg:2020jt}
Garg, S., Wu, Y., Balakrishnan, S., and Lipton, Z. (2020).
\newblock A unified view of label shift estimation.
\newblock In {\em Proceedings of the 34th Conference on Neural Information Processing Systems (NeurIPS 2020)}, pages 3290--3300, Virtual Event.

\bibitem[Godau et~al., 2025]{godau2025navigating}
Godau, P., Kalinowski, P., Christodoulou, E., Reinke, A., Tizabi, M., Ferrer, L., J{\"a}ger, P., and Maier-Hein, L. (2025).
\newblock Navigating prevalence shifts in image analysis algorithm deployment.
\newblock {\em Medical image analysis}, page 103504.

\bibitem[Gonz{\'{a}}lez et~al., 2017]{SurveyQuantification:2017}
Gonz{\'{a}}lez, P., Casta{\~{n}}o, A., Chawla, N.~V., and del Coz, J.~J. (2017).
\newblock A review on quantification learning.
\newblock {\em ACM Computing Surveys}, 50(5):74:1--74:40.

\bibitem[Gonz\'{a}lez et~al., 2024]{Gonzalez:2024cs}
Gonz\'{a}lez, P., Moreo, A., and Sebastiani, F. (2024).
\newblock Binary quantification and dataset shift: {A}n experimental investigation.
\newblock {\em Data Mining and Knowledge Discovery}, 38(4):1670--1712.

\bibitem[González-Castro et~al., 2013]{Gonzalez-Castro:2013fk}
González-Castro, V., Alaiz-Rodríguez, R., and Alegre, E. (2013).
\newblock Class distribution estimation based on the {H}ellinger distance.
\newblock {\em Information Sciences}, 218:146--164.

\bibitem[Guilbert et~al., 2024]{guilbert2024calibration}
Guilbert, T., Caelen, O., Chirita, A., and Saerens, M. (2024).
\newblock Calibration methods in imbalanced binary classification.
\newblock {\em Annals of Mathematics and Artificial Intelligence}, 92(5):1319--1352.

\bibitem[Guillory et~al., 2021]{Guillory:2021so}
Guillory, D., Shankar, V., Ebrahimi, S., Darrell, T., and Schmidt, L. (2021).
\newblock Predicting with confidence on unseen distributions.
\newblock In {\em Proceedings of the IEEE/CVF International Conference on Computer Vision (ICCV 2021)}, pages 1134--1144, Montreal, {CA}.

\bibitem[Guo et~al., 2017]{Guo:2017hh}
Guo, C., Pleiss, G., Sun, Y., and Weinberger, K.~Q. (2017).
\newblock On calibration of modern neural networks.
\newblock In {\em Proceedings of the 34th International Conference on Machine Learning (ICML 2017)}, pages 1321--1330, Sydney, {AU}.

\bibitem[Hastie et~al., 2009]{Hastie2009}
Hastie, T., Tibshirani, R., and Friedman, J. (2009).
\newblock {\em The elements of statistical learning: {D}ata mining, inference, and prediction}.
\newblock Springer, Heidelberg, DE, 2nd edition.

\bibitem[Hinton, 2015]{hinton2015distilling}
Hinton, G. (2015).
\newblock Distilling the knowledge in a neural network.
\newblock {\em arXiv preprint arXiv:1503.02531}.

\bibitem[Hopkins and King, 2010]{Hopkins:2010fk}
Hopkins, D.~J. and King, G. (2010).
\newblock A method of automated nonparametric content analysis for social science.
\newblock {\em American Journal of Political Science}, 54(1):229--247.

\bibitem[Jiang et~al., 2022]{jiang2022assessing}
Jiang, Y., Nagarajan, V., Baek, C., and Kolter, J.~Z. (2022).
\newblock Assessing generalization of {SGD} via disagreement.
\newblock In {\em Proceedings of the International Conference on Learning Representations (ICLR 2022)}, Virtual Event.

\bibitem[Karandikar et~al., 2021]{karandikar2021soft}
Karandikar, A., Cain, N., Tran, D., Lakshminarayanan, B., Shlens, J., Mozer, M.~C., and Roelofs, B. (2021).
\newblock Soft calibration objectives for neural networks.
\newblock {\em Advances in Neural Information Processing Systems}, 34:29768--29779.

\bibitem[King and Lu, 2008]{King:2008fk}
King, G. and Lu, Y. (2008).
\newblock Verbal autopsy methods with multiple causes of death.
\newblock {\em Statistical Science}, 23(1):78--91.

\bibitem[Kull et~al., 2019]{dirichletcalibration:kull2019}
Kull, M., Perello~Nieto, M., K{\"a}ngsepp, M., Silva~Filho, T., Song, H., and Flach, P. (2019).
\newblock Beyond temperature scaling: Obtaining well-calibrated multi-class probabilities with dirichlet calibration.
\newblock {\em Advances in neural information processing systems}, 32.

\bibitem[Kull et~al., 2017]{kull2017beyond}
Kull, M., Silva~Filho, T.~M., and Flach, P. (2017).
\newblock Beyond sigmoids: How to obtain well-calibrated probabilities from binary classifiers with beta calibration.
\newblock {\em Electronic Journal of Statistics}, 11:5052--5080.

\bibitem[Lipton et~al., 2018]{Lipton:2018fj}
Lipton, Z.~C., Wang, Y., and Smola, A.~J. (2018).
\newblock Detecting and correcting for label shift with black box predictors.
\newblock In {\em Proceedings of the 35th International Conference on Machine Learning (ICML 2018)}, pages 3128--3136, Stockholm, {SE}.

\bibitem[Liu et~al., 2019]{liu2019roberta}
Liu, Y., Ott, M., Goyal, N., Du, J., Joshi, M., Chen, D., Levy, O., Lewis, M., Zettlemoyer, L., and Stoyanov, V. (2019).
\newblock {RoBERTa}: A robustly optimized bert pretraining approach.
\newblock {\em arXiv preprint arXiv:1907.11692}.

\bibitem[Maas et~al., 2011]{imdb}
Maas, A.~L., Daly, R.~E., Pham, P.~T., Huang, D., Ng, A.~Y., and Potts, C. (2011).
\newblock Learning word vectors for sentiment analysis.
\newblock In {\em Proceedings of the 49th Annual Meeting of the Association for Computational Linguistics: Human Language Technologies}, pages 142--150, Portland, Oregon, USA. Association for Computational Linguistics.

\bibitem[Maletzke et~al., 2019]{Maletzke:2019qd}
Maletzke, A., Moreira~dos Reis, D., Cherman, E., and Batista, G. (2019).
\newblock {DyS: A} framework for mixture models in quantification.
\newblock In {\em Proceedings of the 33rd AAAI Conference on Artificial Intelligence (AAAI 2019)}, pages 4552--4560, Honolulu, {US}.

\bibitem[Moreo et~al., 2021]{QuaPy:2021bs}
Moreo, A., Esuli, A., and Sebastiani, F. (2021).
\newblock {QuaPy: A Python}-based framework for quantification.
\newblock In {\em Proceedings of the 30th ACM International Conference on Knowledge Management (CIKM 2021)}, pages 4534--4543, Gold Coast, AU.

\bibitem[Moreo et~al., 2025]{KDEyMoreo:2025}
Moreo, A., González, P., and {del Coz}, J.~J. (2025).
\newblock Kernel density estimation for multiclass quantification.
\newblock {\em Machine Learning}, 114(4).

\bibitem[Moreo and Sebastiani, 2022]{Moreo:2022bf}
Moreo, A. and Sebastiani, F. (2022).
\newblock Tweet sentiment quantification: {A}n experimental re-evaluation.
\newblock {\em PLOS ONE}, 17(9):1--23.

\bibitem[Murphy, 1972]{murphy1972scalar}
Murphy, A.~H. (1972).
\newblock Scalar and vector partitions of the probability score: Part i. two-state situation.
\newblock {\em Journal of Applied Meteorology and Climatology}, 11(2):273--282.

\bibitem[Ovadia et~al., 2019]{CalibrationDatasetShiftNeurIPS2019}
Ovadia, Y., Fertig, E., Ren, J., Nado, Z., Sculley, D., Nowozin, S., Dillon, J., Lakshminarayanan, B., and Snoek, J. (2019).
\newblock Can you trust your model's uncertainty? evaluating predictive uncertainty under dataset shift.
\newblock {\em Advances in neural information processing systems}, 32.

\bibitem[Pang and Lee, 2005]{rt}
Pang, B. and Lee, L. (2005).
\newblock Seeing stars: Exploiting class relationships for sentiment categorization with respect to rating scales.
\newblock In {\em Proceedings of the 43rd Annual Meeting of the Association for Computational Linguistics (ACL’05)}, pages 115--124.

\bibitem[Park et~al., 2020]{cpcs2020icml}
Park, S., Bastani, O., Weimer, J., and Lee, I. (2020).
\newblock Calibrated prediction with covariate shift via unsupervised domain adaptation.
\newblock In {\em International Conference on Artificial Intelligence and Statistics}, pages 3219--3229. PMLR.

\bibitem[Patrone and Kearsley, 2024]{patrone2024minimizing}
Patrone, P.~N. and Kearsley, A.~J. (2024).
\newblock Minimizing uncertainty in prevalence estimates.
\newblock {\em Statistics \& Probability Letters}, 205:109946.

\bibitem[Pedregosa et~al., 2011]{Pedregosa:2011yo}
Pedregosa, F., Varoquaux, G., Gramfort, A., Michel, V., Thirion, B., Grisel, O., Blondel, M., Prettenhofer, P., Weiss, R., Dubourg, V., Vanderplas, J., Passos, A., Cournapeau, D., Brucher, M., Perrot, M., and Duchesnay, E. (2011).
\newblock Scikit-learn: {M}achine learning in {P}ython.
\newblock {\em Journal of Machine Learning Research}, 12:2825--2830.

\bibitem[Platt, 2000]{Platt:2000fk}
Platt, J.~C. (2000).
\newblock Probabilistic outputs for support vector machines and comparison to regularized likelihood methods.
\newblock In Smola, A., Bartlett, P., Schölkopf, B., and Schuurmans, D., editors, {\em Advances in Large Margin Classifiers}, pages 61--74. The MIT Press, Cambridge, MA.

\bibitem[Popordanoska et~al., 2024]{popordanoska2024lascal}
Popordanoska, T., Radevski, G., Tuytelaars, T., and Blaschko, M. (2024).
\newblock Lascal: Label-shift calibration without target labels.
\newblock {\em Proceedings NeurIPS 2024}.

\bibitem[Pérez-Gállego et~al., 2017]{Perez-Gallego:2017wt}
Pérez-Gállego, P., Quevedo, J.~R., and {del Coz}, J.~J. (2017).
\newblock Using ensembles for problems with characterizable changes in data distribution: {A} case study on quantification.
\newblock {\em Information Fusion}, 34:87--100.

\bibitem[Pérez-Mon et~al., 2025]{Perez-Mon:2025jt}
Pérez-Mon, O., Moreo, A., {del Coz}, J.~J., and González, P. (2025).
\newblock Quantification using permutation-invariant networks based on histograms.
\newblock {\em Neural Computing and Applications}, 37:3505--3520.

\bibitem[Rabanser et~al., 2019]{Rabanser:2019ba}
Rabanser, S., G{\"{u}}nnemann, S., and Lipton, Z.~C. (2019).
\newblock Failing loudly: {A}n empirical study of methods for detecting dataset shift.
\newblock In {\em Proceedings of the 33rd Conference on Neural Information Processing Systems (NeurIPS 2019)}, pages 1394--1406, Vancouver, {CA}.

\bibitem[Saerens et~al., 2002]{Saerens:2002uq}
Saerens, M., Latinne, P., and Decaestecker, C. (2002).
\newblock Adjusting the outputs of a classifier to new a priori probabilities: A simple procedure.
\newblock {\em Neural Computation}, 14(1):21--41.

\bibitem[Sanh et~al., 2019]{sanh2019distilbert}
Sanh, V., Debut, L., Chaumond, J., and Wolf, T. (2019).
\newblock {DistilBERT}, a distilled version of {BERT}: smaller, faster, cheaper and lighter.
\newblock {\em arXiv preprint arXiv:1910.01108}.

\bibitem[Schumacher et~al., 2025]{Schumacher2021}
Schumacher, T., Strohmaier, M., and Lemmerich, F. (2025).
\newblock A comparative evaluation of quantification methods.
\newblock {\em Journal of Machine Learning Research}, 26(55):1--54.

\bibitem[Sebastiani, 2020]{Sebastiani:2020qf}
Sebastiani, F. (2020).
\newblock Evaluation measures for quantification: {A}n axiomatic approach.
\newblock {\em Information Retrieval Journal}, 23(3):255--288.

\bibitem[Silva~Filho et~al., 2023]{calibrationsurveyML2023}
Silva~Filho, T., Song, H., Perello-Nieto, M., Santos-Rodriguez, R., Kull, M., and Flach, P. (2023).
\newblock Classifier calibration: a survey on how to assess and improve predicted class probabilities.
\newblock {\em Machine Learning}, 112(9):3211--3260.

\bibitem[Storkey, 2009]{datasetShiftStorkey:2009}
Storkey, A. (2009).
\newblock When training and test sets are different: {C}haracterizing learning transfer.
\newblock In Quiñonero-Candela, J., Sugiyama, M., Schwaighofer, A., and Lawrence, N.~D., editors, {\em Dataset shift in machine learning}, pages 3--28. The {MIT} Press, Cambridge, {US}.

\bibitem[Tasche, 2014]{tasche20144}
Tasche, D. (2014).
\newblock 4.12 classification, calibration, and quantification: A study of dataset shift.
\newblock {\em Beyond Adaptation: Understanding Distributional Changes}, 7(4):28.

\bibitem[Tasche, 2021]{calibrating_sufficiently_Tasche2021}
Tasche, D. (2021).
\newblock Calibrating sufficiently.
\newblock {\em Statistics}, 55(6):1356--1386.

\bibitem[Tasche, 2022]{Tasche:2022hh}
Tasche, D. (2022).
\newblock Class prior estimation under covariate shift: {N}o problem?
\newblock In {\em Proceedings of the 2nd International Workshop on Learning to Quantify (LQ 2022)}, pages 11--26, Grenoble, IT.

\bibitem[Tasche, 2023]{Tasche:2023um}
Tasche, D. (2023).
\newblock Invariance assumptions for class distribution estimation.
\newblock In {\em Proceedings of the 3rd International Workshop on Learning to Quantify (LQ 2023)}, pages 56--71, Torino, IT.

\bibitem[Volpi et~al., 2024]{Volpi:2024ye}
Volpi, L., Moreo, A., and Sebastiani, F. (2024).
\newblock A simple method for classifier accuracy prediction under prior probability shift.
\newblock In {\em Proceedings of the 27th International Conference on Discovery Science (DS 2024)}, pages 267--283, Pisa, IT.

\bibitem[Wang et~al., 2020]{transcal2020}
Wang, X., Long, M., Wang, J., and Jordan, M. (2020).
\newblock Transferable calibration with lower bias and variance in domain adaptation.
\newblock In Larochelle, H., Ranzato, M., Hadsell, R., Balcan, M., and Lin, H., editors, {\em Advances in Neural Information Processing Systems}, volume~33, pages 19212--19223. Curran Associates, Inc.

\bibitem[Wu and Resnick, 2024]{calibrateEstrapolateWu2024}
Wu, S. and Resnick, P. (2024).
\newblock Calibrate-extrapolate: Rethinking prevalence estimation with black box classifiers.

\bibitem[Zadrozny and Elkan, 2001a]{empiricalbinning:zadrozny:2001}
Zadrozny, B. and Elkan, C. (2001a).
\newblock Obtaining calibrated probability estimates from decision trees and naive bayesian classifiers.
\newblock In {\em Icml}, volume~1, pages 609--616.

\bibitem[Zadrozny and Elkan, 2001b]{Zadrozny:2001yg}
Zadrozny, B. and Elkan, C. (2001b).
\newblock Obtaining calibrated probability estimates from decision trees and naive {Bayesian} classifiers.
\newblock In {\em Proceedings of the 18th International Conference on Machine Learning (ICML 2001)}, pages 609--616, Williamstown, {US}.

\bibitem[Zhang et~al., 2015]{yelp}
Zhang, X., Zhao, J., and LeCun, Y. (2015).
\newblock Character-level convolutional networks for text classification.
\newblock {\em Advances in neural information processing systems}, 28.

\end{thebibliography}

\clearpage
\newpage

\appendix
\section{Classifiers accuracy}
\label{app:accuracy}

Table~\ref{tab:lm:acc} reports the classifiers accuracy of different LM models on the sentiment datasets, for different combinations of source and target domains, while Table~\ref{tab:sklearn:acc} reports the accuracy of our classifiers of choice for the tabular datasets in the UCI Machine Learning repository.

\begin{table}[ht]
    \caption{LM's accuracy on sentiment classification datasets}
    \vspace{0.5cm}
    \centering
    \begin{tabular}{llrrr}
\toprule
 &  & \multicolumn{3}{c}{target}  \\
 & source & \texttt{imdb} & \texttt{rt} & \texttt{yelp} \\
\midrule
\multirow{3}{*}{BERT} & \texttt{imdb} & 0.751 & 0.689 & 0.734 \\
 & \texttt{rt} & 0.697 & 0.689 & 0.694 \\
 & \texttt{yelp} & 0.730 & 0.702 & 0.889 \\
\cline{1-5}
\multirow{3}{*}{DistilBERT} & \texttt{imdb} & 0.855 & 0.773 & 0.865 \\
 & \texttt{rt} & 0.823 & 0.785 & 0.809 \\
 & \texttt{yelp} & 0.772 & 0.713 & 0.909 \\
\cline{1-5}
\multirow{3}{*}{RoBERTa} & \texttt{imdb} & 0.848 & 0.695 & 0.865 \\
 & \texttt{rt} & 0.732 & 0.737 & 0.749 \\
 & \texttt{yelp} & 0.844 & 0.735 & 0.940 \\
\bottomrule
\end{tabular}
    
    \label{tab:lm:acc}
\end{table}

\begin{table}[ht]
    \caption{Classifiers accuracy on UCI Machine Learning datasets}
    \vspace{0.5cm}
    \centering
    % \resizebox{\textwidth}{!}{%
    \begin{tabular}{lcccc}
\toprule
 & Logistic Regression & Multi-layer Perceptron & Naïve Bayes & k Nearest Neighbor \\
\midrule
\texttt{cmc.3} & 0.663 & 0.683 & 0.649 & 0.667 \\
\texttt{ctg.1} & 0.918 & 0.933 & 0.897 & 0.911 \\
\texttt{ctg.2} & 0.918 & 0.920 & 0.743 & 0.908 \\
\texttt{ctg.3} & 0.972 & 0.969 & 0.958 & 0.955 \\
\texttt{pageblocks.5} & 0.981 & 0.986 & 0.976 & 0.982 \\
\texttt{semeion} & 0.962 & 0.969 & 0.793 & 0.964 \\
\texttt{spambase} & 0.927 & 0.935 & 0.826 & 0.896 \\
\texttt{wine-q-red} & 0.758 & 0.781 & 0.742 & 0.723 \\
\texttt{wine-q-white} & 0.739 & 0.763 & 0.684 & 0.761 \\
\texttt{yeast} & 0.771 & 0.769 & 0.300 & 0.749 \\
\bottomrule
\end{tabular}

    % }%    
    \label{tab:sklearn:acc}
\end{table}

\section{Brier Scores in calibration experiments}

Tables~\ref{tab:calib:covshift:brier} and~\ref{tab:calib:labelshift:brier} report the Brier Scores obtained in our calibration experiments for CS and LS, respectively. The Brier Score (BS) is defined as:
\begin{equation}
    \text{BS}=\frac{1}{|\teset|}\sum_{(x_i,y_i)\in\teset}(y_i-\hcal(x_i))^2
\end{equation}
\noindent where $y_i\in \{0,1\}$ is the true label of the $i$-th test instance, and $\hcal(x_i)$ is its predicted posterior probability, as estimated by a calibrated classifier $\hcal$.

\begin{table}[h]
    \caption{Calibration performance under CS in terms of Brier Score}
    \vspace{0.5cm}
    \centering
    \resizebox{\textwidth}{!}{%
    % [inline block 1: 2 envs, 36072 chars -> data_tex | \begin{tabular}{cc|c|cccc|ccccccccccccc} \toprule \multicolumn{2}{c}{} & \multicolumn{1}{c|}{Baselines} & \multicolumn{4...]

    }%    
    \label{tab:calib:labelshift:brier}
\end{table}

\end{document}